\newcommand{\E}{\mathbb{E}}
\NewDocumentCommand \RangeSet { G{N} } {[\mathinner{#1}]}
\NewDocumentCommand \LinearSymbol {} {\oast}
\NewDocumentCommand \LeftSymbol {} {\blacktriangleleft}
\NewDocumentCommand \RightSymbol {} {\blacktriangleright}
\NewDocumentCommand \UnfoldSymbol {} {\oslash}
\NewDocumentCommand \SingleSize {s O{J} G{n}} {\IfBooleanTF{#1}{\bar}{} {#2}_{#3}}
\NewDocumentCommand \LinearSize {s O{J}} {\IfBooleanTF{#1}{\bar}{} #2^{\LinearSymbol}}
\NewDocumentCommand \LeftSize {s O{J} G{n}} {\IfBooleanTF{#1}{\bar}{} {#2}_{#3}^{\LeftSymbol}}
\NewDocumentCommand \RightSize {s O{J} G{n}} {\IfBooleanTF{#1}{\bar}{} {#2}_{#3}^{\RightSymbol}}
\NewDocumentCommand \UnfoldSize {s O{J} G{n}} {\IfBooleanTF{#1}{\bar}{} {#2}_{#3}^{\UnfoldSymbol}}
\NewDocumentCommand \FullSize{s O{J} G{N}} {%
  \IfBooleanTF{#1}%
  {\SingleSize*[#2]{0} \times \SingleSize*[#2]{1} \times \cdots \times \SingleSize*[#2]{#3-1}}%
  {\SingleSize[#2]{0} \times \SingleSize[#2]{1} \times \cdots \times \SingleSize[#2]{#3-1}}%
}
\NewDocumentCommand \SizeVec {s O{J}} {\mathbf{\IfBooleanTF{#1}{\bar}{} {#2}}}
\NewDocumentCommand \SingleIndex {s O{j} G{n}} {\IfBooleanTF{#1}{\bar}{} {#2}_{#3}}
\NewDocumentCommand \FullIndex{s O{j} G{N}} {%
  \IfBooleanTF{#1}%
  {(\SingleIndex*[#2]{0}, \SingleIndex*[#2]{1}, \dots, \SingleIndex*[#2]{#3-1})}%
  {(\SingleIndex[#2]{0}, \SingleIndex[#2]{1}, \dots, \SingleIndex[#2]{#3-1})}%
}
\NewDocumentCommand \IndexVec {s O{j}} {\mathbf{\IfBooleanTF{#1}{\bar}{} {#2}}}
\NewDocumentCommand \FullSubscript{s O{j} G{N}} {%
  \IfBooleanTF{#1}%
  {\SingleIndex*[#2]{0} \SingleIndex*[#2]{1} \dots \SingleIndex*[#2]{#3-1}}%
  {\SingleIndex[#2]{0} \SingleIndex[#2]{1} \dots \SingleIndex[#2]{#3-1}}%
}
\NewDocumentCommand \FacMat {s O{U} G{n}} {{\mathbf{\IfBooleanTF{#1}{\bar}{} #2}}^{(#3)}}
\NewDocumentCommand \FacSize {O{I} G{n}} {#1_{#2} \times R_{#2}}
\NewDocumentCommand \Tensor {s O{Y}} {\boldsymbol{\IfBooleanTF{#1}{\bar}{}
{\mathscr{\MakeUppercase{#2}}}}}
\NewDocumentCommand \ColumnBlock {O{\Mz{Y}{n}} G{\ell}} {#1 \left[#2\right]}
\newcommand{\Tra}{{\sf T}}
\newcommand{\V}[2][]{{\bm{#1\mathbf{\MakeLowercase{#2}}}}} 
\newcommand{\M}[2][]{{\bm{#1\mathbf{\MakeUppercase{#2}}}}} 
\newcommand{\Mz}[3][]{\M[#1]{#2}_{(#3)}}
\newcommand{\nno}[1]{{\mathbb{R}_{\ge 0}}} 
\newcommand{\update}[1]{\text{Update}}
\DeclareMathOperator*{\argmin}{arg\,min}
\newenvironment{varalgorithm}[1]
{\algorithm[tb]}
  {\endalgorithm}
\crefname{hypothesis}{Hypothesis}{Hypotheses}
\title{Randomized Algorithms for Symmetric Nonnegative Matrix Factorization
\thanks{Created on 08/12/2024.
\funding{Koby Hayashi acknowledges support from the United States Department of Energy through the Computational Sciences Graduate Fellowship (DOE CSGF) under grant number: DE-SC0020347. 
The authors would like to acknowledge the support provided by the National Science Foundation through grants OAC-2106920 and CCF-1942892. Information Release PNNL-SA-193926.}}}
\author{Koby Hayashi\thanks{School of Computational Science and Engineering, Georgia Institute of Technology, Atlanta, GA, USA
  (\email{khayashi9@gatech.edu}, \email{hpark@cc.gatech.edu}).}
\and Sinan G. Aksoy\thanks{Pacific Northwest National Laboratory, Seattle, WA, USA 
  (\email{sinan.aksoy@pnnl.gov})}
  \and Grey Ballard\thanks{Dept. of Computer Science, Wake Forest University, Winston-Salem, NC, USA (\email{ballard@wfu.edu})}
\and Haesun Park\footnotemark[2] }
\newcommand*{\addFileDependency}[1]{
  \typeout{(#1)}
  \@addtofilelist{#1}
  \IfFileExists{#1}{}{\typeout{No file #1.}}
}
\newcommand*{\myexternaldocument}[1]{%
    \externaldocument{#1}%
    \addFileDependency{#1.tex}%
    \addFileDependency{#1.aux}%
}
\begin{document}


\maketitle

\begin{abstract}
Symmetric Nonnegative Matrix Factorization (SymNMF) is a technique in data analysis and machine learning that approximates a symmetric matrix with a product of a nonnegative, low-rank matrix and its transpose.
To design faster and more scalable algorithms for SymNMF we develop two randomized algorithms for its computation.
The first algorithm uses randomized matrix sketching to compute an initial low-rank approximation to the input matrix and proceeds to rapidly compute a SymNMF of the approximation.
The second algorithm uses randomized leverage score sampling to approximately solve constrained least squares problems.
Many successful methods for SymNMF rely on (approximately) solving sequences of constrained least squares problems.
We prove theoretically that leverage score sampling can approximately solve nonnegative least squares problems to a chosen accuracy with high probability.
Additionally, we prove sampling complexity results for previously proposed hybrid sampling techniques which deterministically include high leverage score rows.
This hybrid scheme is crucial for obtaining speeds ups in practice.
Finally we demonstrate that both methods work well in practice by applying them to graph clustering tasks on large real world data sets.
These experiments show that our methods approximately maintain solution quality and achieve significant speed ups for both large dense and large sparse problems.
\end{abstract}

\begin{keywords}
  Nonnegative Matrix Factorization, Symmetric Nonnegative Matrix Factorization, Leverage Scores, matrix sketching, randomized numerical linear algebra (RandNLA)
\end{keywords}

\begin{AMS}
  05C50, 65F20, 65F55, 65F50, 90C20 
\end{AMS}

\section{Introduction}
We propose the first randomized algorithms for Symmetric Nonnegative Matrix Factorization (SymNMF).
Nonnegative Matrix Factorization \\ (NMF) is an important method in data analysis with applications to data visualization, text mining, feature learning, information fusion and more \cite{nature_NMF,DaKuang_SymNMF,doc_clust_NMF,jnmf_kdd,dcnmf_rundong}.
SymNMF is a variant of NMF where the input matrix is symmetric and the output low-rank approximation is also constrained to be symmetric \cite{DaKuang_SymNMF,Dropping_Sym4SymNMF}.
Applications of SymNMF include (hyper)graph clustering, image segmentation, and information fusion \cite{mega_VLDB,JNMF_jogo,edvw_hayashi,choi_dongjin_infofusion,choi2023wellfactor}.
Several randomized algorithms for nonsymmetric NMF have been previously proposed and shown to be effective for dense and small sparse problems \cite{RP_NMF2010,Comp_NMF2016,rnd_HALS}, but as far as we are aware there is no prior work on randomized algorithms for SymNMF.
Along the way we also prove two interesting results generalizing existing guarantees for leverage score sampling for overdetermined least squares problems to all convex overdetermined least squares problems (e.g. nonnegative least squares) and prove theoretical guarantees for hybrid leverage score sampling schemes applied to such problems.
Our contributions include:
\begin{itemize}
    \item a randomized algorithm for SymNMF we call ``Low-rank Approximated Input SymNMF" (LAI-SymNMF),
    \item a randomized algorithm based on leverage score sampling for least squares problems we call LvS-SymNMF,
    \item novel theoretical analysis of leverage score sampling for the \textit{Nonnegative Least Squares} problem and more generally convex least squares problems,
    \item theoretical analysis of a hybrid leverage score sampling scheme for convex least squares problem,
    \item and experiments on two large, real world clustering tasks.
\end{itemize}

The rest of the paper is organized as follows. 
\Cref{sec:prelims_and_rw}, which discusses background material including non-randomized SymNMF algorithms, reviews existing randomized NMF methods and other related work such as randomized methods for other low-rank matrix and tensor decompositions.
\Cref{sec:LAI_NMF} introduces our first proposed algorithm LAI-SymNMF.
LAI-SymNMF uses randomized methods to rapidly compute an initial, unconstrained low-rank approximation before proceeding to compute an NMF of this LAI.
\Cref{sec:LVS_symNMF} presents a row-sampling algorithm called LvS-SymNMF. This method solves a sequence of nonnegative least squares (NLS) problems using a technique called leverage score sampling to accelerate the solver.
Additionally, we use a hybrid approach from \cite{lev_scores4_CP} that involves both deterministic and randomized sampling based on the leverage scores.
Our novel theoretical analysis of this hybrid strategy gives the sample complexity needed to achieve an accuracy guarantee relative to the NLS residual, with high probability, and we empirically show its advantage over purely randomized sampling.
\Cref{sec:experiments} presents experimental results for the proposed algorithms on two real world data sets.
Each data set is represented as a graph and the SymNMF output is used to cluster the vertices.
Both methods achieve significant speed ups over deterministic methods ranging from $5$ to $7.5\times$ and are able to maintain accuracy in terms of normalized residual norms and cluster quality.
\section{Preliminaries}
\label{sec:prelims_and_rw}
We begin by briefly discussing the NMF and SymNMF problems followed by an introduction to various methods for computing NMF's and SymNMF's.
Then we discuss the ``Randomized Range Finder'' and leverage score sampling for least squares problems.
The section also contains a statement of \cref{thm:nls_bnd} which provides novel bounds for Leverage Score sampling applied to Nonnegative Least Squares Problems.
Various notation is given in \cref{tab:notation_table}.
\begin{table}[]
\centering
\resizebox{\columnwidth}{!}{
\begin{tabular}{| c | c || c | c |}
\hline
\text{Symbols} & \text{Meaning} & \text{Symbols} & \text{Meaning} \\
\hline
$\M{I}$ & Identity Matrix & $\V{a}_i$ & $i$th column vector of $\M{A}$ \\
$\M{\Omega}$ & Gaussian Matrix & $l_i(\M{A})$ & the $i$th leverage score of $\M{A}$\\
$\otimes$ & Kronecker Product & $p_i$ & $i$th leverage score sampling probability \\
$\M{S}$ & Sketching Matrix & $[n]$ & index set 1 to $n$ \\
$[\M{A}]_+$ & Proj. to nonnegative orthant & $\mathscr{A}$ & A set, Euler script\\
$\M{X}$ & NMF Data Matrix & $\M{A}$ & A matrix, bold-uppercase\\
$\M{W}$ & Left NMF Factor & $\V{a}$ & A vector, bold-lowercase\\
$\M{H}$ & Right NMF Factor & $|\cdot|$ & Absolute value or cardinality\\
$\M{A}^\Tra$ & Transposition & $\mathbb{R}_{+}$ & Nonnegative Real Numbers \\
$\M{Q}_A$ & Orthonormal basis for range of $\M{A}$ & $\kappa(\M{A})$ & Condition Number of $\M{A}$ \\
$\| \cdot \|_F$ & Frobenius norm & $\|\cdot \|_2$ & 2-norm\\
$\sigma_i(\M{A})$ & $i$th largest singular value of $\M{A}$ & $\sigma_{max}(\M{A}),\sigma_{min}(\M{A})$ & largest and smallest singular value\\
\hline
\end{tabular}
} 
\caption{Notation}
\label{tab:notation_table}
\end{table}

Standard NMF is formulated as
\begin{equation}
    \label{eq:nmf_obj}
\min_{\{\M{W},\M{H}\}\ge0}\| \M{X} - \M{W}\M{H}^\Tra \|_F
\end{equation}
where $\M{X} \in \mathbb{R}^{m \times n}$, $\M{W} \in \mathbb{R}_{+}^{m \times k}$, and $\M{H} \in \mathbb{R}_{+}^{n \times k}$.
The notation $\mathbb{R}_{+}$ denotes the nonnegative orthant and $k$ is the desired reduced rank, given as an input, and 
usually $k \ll \min(m,n)$.
While $\M{X}$ is often also nonnegative, it is not strictly required.
Since NMF is a non-convex optimization problem, optimization algorithms often find a local minimum at best.
The works \cite{gillis_book,BCD_park} give a comprehensive discussion of NMF.

When $\M{X}$ is symmetric, $\M{X} = \M{X}^\Tra$, 
it is often desired the two low-rank factors in \Cref{eq:nmf_obj} be the same \cite{DaKuang_SymNMF,gillis_symNMF,Dropping_Sym4SymNMF,symnmf_SDM}.
This problem is called Symmetric NMF (SymNMF) and its objective function is expressed as
\begin{equation}
    \label{eq:symnmf_obj}
\min_{\M{H}\ge0}\| \M{X} - \M{H}\M{H}^\Tra \|_F .
\end{equation}
SymNMF has found applications to (hyper)graph clustering, image segmentation, and community detection in social networks \cite{DaKuang_SymNMF,edvw_hayashi,hier_NMF_rundong}.
For a relationship between SymNMF and spectral clustering see \cite{JNMF_jogo}.

\subsection{Algorithms for SymNMF}
Many algorithms for solving SymNMF shown in \Cref{eq:symnmf_obj} have been proposed.
Broadly these methods can be put into two categories: Alternating Updating (AU) methods and all-at-once optimization methods.
AU methods alternate between updating a subset of the variables while holding others fixed, eventually iterating through the entire subset of variables.
All-at-once methods update all of the variables simultaneously. 

AU methods for SymNMF include symmetrically regularized Alternating Nonnegative Least Squares (ANLS) \cite{DaKuang_SymNMF}, symmetrically regularized Hierarchical Least Squares (HALS) \cite{Dropping_Sym4SymNMF}, Cyclic Coordinate Descent (CCD) \cite{gillis_symNMF}, and Progressive Hierarchical Alternating Least Squares (PHALS) \cite{phals_Hou}.
All-at-once methods include Projected Gradient Descent (PGD), Projected Newton-like update \cite{DaKuang_SymNMF}, and Projected Gauss-Newton with Conjugate Gradients (PGNCG) \cite{srinivas_SC20_GN}.
We focus on methods based on regularized ANLS and the PGNCG algorithm.
This is because the ANLS method is generally superior to both PGD and the Newton-like method \cite{DaKuang_SymNMF}, and
the CCD method tends to be unsuitable for large data sets as it sequentially iterates over the elements of $\M{H}$.
We now present the methods based on regularized ANLS and the PGNCG algorithm.

\subsubsection{Symmetrically Regularized Alternating Nonnegative Least \\ Squares for SymNMF}
\label{sec:sym_anls}
The regularized ANLS and HALS methods for SymNMF are based on solving a surrogate problem of the form
\begin{equation}
\label{eq:surrogate_symnmf_obj}
    \min_{ \{\M{W},\M{H}\} \ge 0}\|\M{X} - \M{W}\M{H}^\Tra \|_F^2 + \alpha\|\M{W} -\M{H}\|_F^2,
\end{equation}
where $\M{W}$ and $\M{H}$ are forced to be close to each other in Frobenius norm using a large value of $\alpha$, thus guiding the iterates towards a symmetric approximation \cite{DaKuang_SymNMF}. 
The authors of \cite{Dropping_Sym4SymNMF} show that under mild assumptions the critical points of \Cref{eq:symnmf_obj} and \Cref{eq:surrogate_symnmf_obj} are the same and argue that this makes \Cref{eq:surrogate_symnmf_obj} an appropriate surrogate for \Cref{eq:symnmf_obj}. \Cref{eq:surrogate_symnmf_obj} can be iteratively approximated by using the ANLS method for updating $\M{W}$ and  $\M{H}$ as in the following two equations :
\begin{align}
\label{eq:ANLS4symNMF}
    \min_{\M{W}\ge0}\Big\|\begin{bmatrix}
    \M{H}\\
    \sqrt{\alpha}\M{I}
    \end{bmatrix}
    \M{W}^\Tra - 
     \begin{bmatrix}
    \M{X}\\
    \sqrt{\alpha}\M{H}^\Tra
    \end{bmatrix} \Big\|_F \ \text{and} \ 
    \min_{\M{H}\ge0}\Big\|\begin{bmatrix}
    \M{W}\\
    \sqrt{\alpha}\M{I}
    \end{bmatrix}
    \M{H}^\Tra - 
     \begin{bmatrix}
    \M{X}\\
    \sqrt{\alpha}\M{W}^\Tra
    \end{bmatrix} \Big\|_F.
\end{align}
This approach enables using many of the tools for standard NMF for the SymNMF problem.
For example, the two equations shown in \Cref{eq:ANLS4symNMF} can be solved as Nonnegative Least Squares (NLS) problems.

The ANLS method is generally superior to PGD and the Newton-like method \cite{DaKuang_SymNMF}. 
This is mainly because PGD suffers from slow convergence and the Newton-like method is expensive for even moderately sized problems due to the need for approximate Hessian inversion. 
To solve the ANLS formulation we use the Block Principle Pivoting (BPP) solver from \cite{Jinug_BPP}.

\subsubsection{Hierarchical Alternating Least Squares for SymNMF}
\label{sec:hals_4symnmf}
The HALS method for NMF was developed in \cite{gillis_hals,hals_cichocki}.
A method that uses the HALS framework for optimizing \Cref{eq:surrogate_symnmf_obj} was proposed in \cite{Dropping_Sym4SymNMF}.
The update rules are given by the following two equations 
\begin{equation}
\label{eq:symhals_update1}
    \V{w}_i = \max\Big( \frac{(\M{R}_{i} + \alpha \M{I})\V{h}_i}{\|\V{h}_i\|_2^2 + \alpha},0 \Big) \ \text{and} \ \V{h}_i = \max\Big( \frac{(\M{R}_i + \alpha \M{I})\V{w}_i}{\|\V{w}_i\|_2^2 + \alpha},0 \Big).
\end{equation}
Here $\M{W} = \big[\V{w}_1, \cdots, \V{w}_k\big]$, $ \M{H} = \big[\V{h}_1, \cdots, \V{h}_k\big]$ are the columns of $\M{W}$ and $\M{H}$ and $ \M{R}_{ i} = \M{X} - \sum_{j \neq i, j=1}^k \V{w}_j \V{h}_j^\Tra.$
Following these rules, the columns of $\M{W}$ and $\M{H}$ are updated as pairs in sequence as $\V{w}_1, \V{h}_1,\cdots \V{w}_i, \V{h}_i, \cdots \V{w}_k, \V{h}_k$.
This method is less efficient as it relies on the matrix $\M{R}_i$, as in \Cref{eq:symhals_update1}.
To make the updates efficient we modify the update rule following algorithms from \cite{gillis_hals} for more efficient HALS updates to
\begin{align}
\label{eq:symhals_update2}
        \V{w}_i \leftarrow \Big( \frac{(\M{X} - \M{W}\M{H}^\Tra + \alpha\M{I})\V{h}_i} {\|\V{h}_i\|_2^2 + \alpha} + \frac{\|\V{h}_{i}\|_2^2 }{\|\V{h}_i\|_2^2 + \alpha}\V{w}_{i}\Big)_+, \\
        \V{h}_i \leftarrow \Big( \frac{(\M{X} - \M{H}\M{W}^\Tra + \alpha\M{I})\V{w}_i} {\|\V{w}_i\|_2^2 + \alpha} + \frac{\|\V{w}_{i}\|_2^2 }{\|\V{w}_i\|_2^2 + \alpha}\V{h}_{i}\Big)_+
\end{align}
which are mathematically equivalent to those in \Cref{eq:symhals_update1}.
These rules can be seen as combining the update order for HALS from \cite{hals_cichocki} and regularization, such as used in \cite{cohen2018nonnegative}.
A derivation can be found in the \cref{app_efficient_HALS_update_derivation}.
This formulation allows for the updates of all the columns of $\M{W}$ then all the columns of $\M{H}$ or vice versa.
As discussed in \cite{gillis_hals}, choosing to update all $\V{w}_i$'s followed by all $\V{h}_i$'s allows for the products $\M{W}^\Tra\M{X}$ and $\M{X}\M{H}$ to be computed and reused through a single sweep of updates over all columns of $\M{W}$ and $\M{H}$ resulting in better computational efficiency in practice.
To illustrate this, consider the update for $\V{w}_i$.
The bulk of the computation is needed for computing $(\M{X} - \M{W}\M{H}^\Tra + \alpha\M{I})\V{h}_i$ which is the $i$th column of $(\M{X}\M{H} - \M{W}\M{H}^\Tra\M{H} + \alpha\M{H})$, where the product $\M{X}\M{H}$ will not change as each $\V{w}_i$ is updated.
The same applies for the columns of $\M{H}$.
Overall our proposed updates shown in \Cref{eq:symhals_update2} are more memory efficient and more computationally efficient by a factor of 2.

\subsubsection{Projected Gauss-Newton with Conjugate Gradients for SymNMF}
The algorithm PGNCG-SymNMF was proposed for efficiently computing SymNMF in highly parallel computing environments \cite{srinivas_SC20_GN}.
It is an all-at-once method and uses the Projected Gauss-Newton method to directly optimize the SymNMF objective, \Cref{eq:symnmf_obj}.
The main computational load lies in solving a least squares problem of the form 
\begin{equation*}
    \min_{\V{p}} \| \M{J}\V{p} + \V{r} \|_2
\end{equation*}
for a search direction $\V{p}$ at every iteration.
The matrix $\M{J}$ and the vector $\V{r}$ are the Jacobian and residual of \Cref{eq:symnmf_obj} respectively.
A solution to this LS problem is then approximated using the Conjugate Gradient (CG) method on the Normal Equations : 
$\M{J}^\Tra\M{J} \V{p} = - \M{J}^\Tra\V{r}$.
A core computational kernel of the CG method is computing matrix vector products with the matrix $\M{J}^\Tra\M{J}$.
Fortunately, the Jacobian $\M{J}$ has the Kronecker product form $\M{J} = -(\M{H} \otimes \M{I}_m) - \M{P}_{m,m}(\M{H} \otimes \M{I}_m)$, where $\M{P}_{m,m}$ is the perfect shuffle or ``vec" permutation, which can be used to efficiently apply $\M{J}$ to a vector.
Additionally the vector $\V{g} = \M{J}^\Tra\V{r}$ has the form $\V{g} = -2\text{vec}(\M{X}\M{H} - \M{H}(\M{H}^\Tra\M{H}))$, which is typically the main computational bottleneck and requires the matrix multiplications $\M{X}\M{H}$ and $\M{H}^\Tra\M{H}$. 
See \cite{srinivas_SC20_GN} for details.

The PGNCG method is competitive with the ANLS and CCD method for SymNMF \cite{srinivas_SC20_GN}.
The PGNCG method generally converges much faster than PGD as it approximates second-order derivatives and does not suffer from large computational complexity, as the Newton-like algorithm does, due to the exploitation of the Jacobian's structure for use in the CG iterations.

\subsection{Sketching in Numerical Linear Algebra}
Randomized Numerical Linear Algebra (RndNLA) is an important area of research with practical applications in finding fast approximate solutions to linear systems, least squares problems, eigenvalue problems, among others.
Surveys on this topic include \cite{halko_survey,tropp_RndNLA_Survey2}.
There are two main tools we will use from the RndNLA literature.
The first is the Randomized Range Finder (\ref{alg:QB_decomposition}) \cite{halko_survey} which has many applications in RandNLA such as computing approximate, truncated Singular Value Decompositions (SVD's) and Symmetric Eigenvalue Decompositions (EVD's).
The second is leverage score sampling for approximately solving least squares problems \cite{sketching4NLA_woodruff,rnd_algs4_mtrx_and_data_mahoney}.

\subsubsection{Randomized Range Finder}
The \ref{alg:QB_decomposition} is a method for finding an approximate orthonormal basis for the range space of a matrix.
It is the foundation for many randomized methods in RandNLA, such as computing an approximate, truncated SVD in a randomized way \cite{halko_survey}.
\begin{varalgorithm}{RRF}
 \caption{: Randomized Range Finder}
 \label{alg:QB_decomposition} 
\begin{algorithmic}[1]
\Require{: data matrix $\M{X} \in\mathbb{R}^{m \times n}$, target rank $r$, oversampling parameter $\rho$, and an exponent $q$}
\Ensure{$\M{Q}_Y \in \mathbb{R}^{m \times (r + \rho)}$, is an approximate orthonormal basis for the leading column span of $\M{X}$:}
\Function{$[\M{Q}_Y] =$ RRF}{$\M{X},r,\rho,q$}
 \State{$l := r + \rho$} \Comment{$l$ is the rank of the approximation being computed}
 \State{Draw a Gaussian Random matrix $\M{\Omega} \in \mathbb{R}^{n \times l}$}
 \State{Compute $\M{Y} := (\M{X}\M{X}^\Tra)^q\M{X}\M{\Omega} \in \mathbb{R}^{n \times l}$}
 \State{Compute $ \M{Y} = \M{Q}_Y\M{R}_Y$ a thin-QR Decomposition of $\M{Y}$ where $\M{Q}_Y \in \mathbb{R}^{n \times l}$ and $\M{R}_Y \in \mathbb{R}^{l \times l}$ }
 \EndFunction
\end{algorithmic}
\end{varalgorithm}
An algorithm outline for the \ref{alg:QB_decomposition} is given in \cref{alg:QB_decomposition}.
Parameters of the \ref{alg:QB_decomposition} are the target rank $r$, a column oversampling parameter $\rho$, and $q$, the number of power iterations to perform.
The computational complexity of the \ref{alg:QB_decomposition} is $O( q mnl + ml^2)$ where $l=r+\rho$.
The approximate output from the \ref{alg:QB_decomposition} is often used to compute a ``QB-Decomposition''.
If a matrix $\M{X}$ is input to the \ref{alg:QB_decomposition} and a matrix $\M{Q}_X$ is output, then $\M{Q}_X\M{Q}_X^\Tra\M{X} = \M{Q}_X\M{B}_X$, where $\M{B}_X = \M{Q}_X^\Tra\M{X}$, is called a QB-Decomposition of $\M{X}$.

\begin{varalgorithm}{Apx-EVD}
 \caption{: Approximate Truncated Eigenvalue Decomposition of Symmetric Matrix}
 \label{alg:apx_evd} 
\begin{algorithmic}[1]
\Require{: symmetric matrix $\M{X} \in\mathbb{R}^{m \times m}$, target rank $r$, oversampling parameter $\rho$, and exponent $q$}
\Ensure{$\M{U}\in\mathbb{R}^{m\times (r+\rho)}$ and $\M{\Lambda} \in\mathbb{R}^{ (r+\rho)\times(r+\rho)}$ where $\M{X} \approx \M{U}\M{\Lambda} \M{U}^\Tra$ is an approximate EVD of $\M{X}$.}
\Function{$[\M{\M{U},\M{\Lambda}]} =$ Apx-EVD}{$\M{X},r,\rho,q$}
 \State{$l := r + \rho$} \Comment{$l$ is the rank of the approximation being computed}
 \State{$\M{Q}_X$ := RRF($\M{X},r,\rho,q$), where  $\M{Q}_X\in\mathbb{R}^{m\times l}$}
 \State{Compute $\M{T} := \M{Q}_X^\Tra\M{X}\M{Q}_X \in \mathbb{R}^{l \times l}$}
 \State{Compute $\M{T} =\M{Q}_T\M{\Lambda}\M{Q}_T^\Tra$ an eigenvalue decomposition of $\M{T}$ where $\M{Q}_T\in\mathbb{R}^{l\times l}$ and $\M{\Lambda}\in\mathbb{R}^{l\times l}$}
 \State{Compute $\M{U} := \M{Q}_X\M{Q}_T \in\mathbb{R}^{m\times l}$}
 \EndFunction
\end{algorithmic}
\end{varalgorithm}

For a symmetric input, an approximate eigenvalue decomposition can also be obtained by using the approximate basis from the \ref{alg:QB_decomposition}.
This procedure is shown in \cref{alg:apx_evd}, which stands for approximate eigenvalue decomposition.
More details and references can be found in \cite{halko_survey}.

\subsubsection{Leverage Score Sampling for Ordinary Least Squares}
Another approach that we will use from RandNLA is sketching for ordinary least squares (OLS) problems, specifically, leverage score sampling for OLS problems.
A standard OLS or $l2-$regression problem is
\begin{equation}
    \label{eq:ls_def}
    \min_{\V{x} \in \mathbb{R}^k}\| \M{A}\V{x}- \V{b} \|_2
\end{equation}
where  $\M{A} \in \mathbb{R}^{m \times k}$, $\V{x}\in\mathbb{R}^{k}$, and $\V{b}\in\mathbb{R}^m$.
Our focus will be on overdetermined OLS problems, where $m \gg k$, and $\M{A}$ has full rank.

The \text{sketch and solve} paradigm \cite{tropp_RndNLA_Survey2} for OLS methods takes the form
\begin{equation}
    \label{eq:rnd_ls_def}
    \hat{\V{x}}_{ols} = 
    \argmin_{\V{x}}\| \M{S}\M{A}\V{x} - \M{S}\V{b} \|_2,
\end{equation}
where $\M{S} \in \mathbb{R}^{s\times m}$, with $s \ll m$, is called a \textit{sketching} matrix.
Computational savings come from the fact that one can now solve the smaller problem in \Cref{eq:rnd_ls_def}
as opposed to the full sized problem in \Cref{eq:ls_def}.

There are many ways to generate the sketching matrix $\M{S}$.
We focus on when $\M{S}$ is a row-sampling matrix generated according to the leverage score distribution of $\M{A}$.
Leverage score sampling is a well-studied method for sketching OLS problems \cite{rnd_algs4_mtrx_and_data_mahoney}.
In this method the leverage scores (see \Cref{eq:lev_score_def}) are used to define a probability distribution over the rows of the matrix $\M{A}$. 
That is, some number of rows, say $s$ rows, of the matrix $\M{A}$ are sampled with replacement with probability proportional to the value of their leverage scores.
The leverage score of the $i$th row of a matrix $\M{A}$ is defined as 
\begin{equation}
    \label{eq:lev_score_def}
    l_i(\M{A}) = \|\M{Q}_A[i,:] \|_2^2
\end{equation}
where the matrix $\M{Q}_A$ is any othonormal basis for the column space of $\M{A}$ and $\M{Q}_A[i,:]$ is the $i$th row of $\M{Q}_A$.
For example the matrix $\M{U}_A$, where $\M{U}_A\M{\Sigma}_A\M{V}^\Tra_A = \M{A}$ is a thin SVD of $\M{A}$, can be used to calculate the leverage scores.
These values are normalized into probabilities $p_i = \frac{l_i(\M{A})}{\| \M{Q}_A\|_F^2}$.
Using the $p_i$'s, $s$ samples are drawn with replacement and the matrix $\M{S} \in \mathbb{R}^{s \times m}$ is formed as
\begin{equation}
\label{eq:lev_score_S}
\M{S}_{ji} =
\begin{cases}
    \frac{1}{\sqrt{sp_i}}, \ \text{if row $i$ was drawn as the $j$th sample}\\
    0
\end{cases}.
\end{equation}
Due to the special form of $\M{S}$, $\M{S}\M{A}$ does not require matrix-matrix multiplication but only row selection and scaling. 
Computing the leverage scores of $\M{A}$ via a full matrix factorization, such as QR or SVD, costs $O(mk^2)$.
This makes solving the smaller problem in \Cref{eq:rnd_ls_def} just as expensive as the original LS problem in \Cref{eq:ls_def}, in the case of a single right hand side (RHS).
To deal with this, schemes for quickly computing approximate leverage scores have been proposed \cite{fast_apx_lev_scores}.
Additionally, sometimes special structures in the coefficient matrix, $\M{A}$, can be exploited to obtain fast leverage score estimates \cite{lev_scores4_CP,SPALS_neurips}.

When leverage score sampling is used, with an appropriate number of samples $s$, the solution to the sampled problem in \cref{eq:rnd_ls_def} satisfies the bound
\begin{equation}
\label{eq:ols_bnd}
    \|\hat{\V{x}}_{ols} - \V{x}_{ols}\|_2
    \leq \sqrt{\epsilon_r}\frac{\|\V{r}_{ols}\|_2}{\sigma_{min}(\M{A})}
\end{equation}
with probability $1-\delta$ where $\V{r}_{ols} = \M{A}\V{x}_{ols} - \V{b}$ and $\V{x}_{ols}$ is the minimizer of \cref{eq:ls_def}.
Additionally, $\delta$ and $\epsilon_r$ are values between 0 and 1 which control the failure probability and approximation error respectively. 
The number of samples $s$ is a function of these values and lower values of $\delta$ and $\epsilon_r$ incur a higher value of $s$.
That is for lower failure probability and/or smaller error one must take more samples.
Details can be found in \cite{larsen2022sketching}.

\subsubsection{Leverage Score Sampling for Nonnegative Least Squares}
\label{sec:lvs_nls_thm}
In order to apply leverage score sampling for Least Squares problems to NMF we extend results for Leverage Score Sampling for OLS problems to Nonnegative Least Squares Problems.
The Nonnegative Least Squares problem has the general form
\begin{equation}
    \label{eq:nls}
    \min_{\V{x} \ge 0 }\|\M{A}\V{x} - \V{b}\|_2.
\end{equation}
NLS problems are quite different from OLS problems as they do not yield a closed form solution.
Despite this we are able to prove an analogous bound for sampling NLS problems with leverage scores.
The statement of this result is given by the following theorem
\begin{theorem}
\label{thm:nls_bnd}
Let  $\V{x}_{nls} = \argmin_{ \V{x} \ge 0} \|\M{A}\V{x} - \V{b} \|_2$ be a NLS solution where $\M{A} \in \mathbb{R}^{m \times k}$, $m > k$, and $\text{rank}(\M{A})=k$. Also let $\M{S} \in \mathbb{R}^{s \times m}$ be a leverage score sampling matrix for $\M{A}$ as in \cref{eq:lev_score_S} with $s$ samples satisfying
\begin{equation*}
    s \ge k \max(C \log(k/\delta),1/(\delta \epsilon_r)) \ \ \text{where} \ \ C = 144/(1-\sqrt{2})^2
\end{equation*}
for some $\epsilon_r,\delta \in (0,1)$.
Also let $\hat{\V{x}}_{nls} = \argmin_{\V{x} \ge 0} \|\M{S}\M{A}\V{x} - \M{S}\V{b} \|_2$ be the sampled NLS solution.
Then with probability $1-\delta$, the following holds : 
\begin{equation*}
\label{eq:cls_bnd_detailed}
    \|\hat{\V{x}}_{nls} - \V{x}_{nls}\|_2
    \leq \sqrt{\epsilon_r}\frac{\|\V{r}_{nls}\|_2}{\sigma_{min}(\M{A})}
\end{equation*}
where $\V{r}_{nls} = \M{A}\V{x}_{nls} - \V{b} $ and $\sigma_{min}(\M{A})$ is the minimum singular value of $\M{A}$\footnote{We note that \cref{thm:nls_bnd} does not include a leverage score ``missestimation factor'' ($\beta$ in \cite{sketching4NLA_woodruff}) which is often included in works concerning leverage score sketching for OLS problems \cite{larsen2022sketching,Mahoney16,sketching4NLA_woodruff}.
When inexact leverage scores are used for sampling, the missestimation factor gives a measure of how close the inexact leverage scores are to the true leverage scores.
We do not thoroughly discuss missestimation factors because we do not use the concept in this work.
However, \cref{thm:nls_bnd} can be easily generalized to incorporate such a factor.}.
\end{theorem}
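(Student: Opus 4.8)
The plan is to run the classical leverage-score argument behind the OLS bound \cref{eq:ols_bnd}, but to replace its one appeal to the normal equations --- orthogonality of the OLS residual to the column space of $\M{A}$ --- by the first-order optimality \emph{inequality} for the constrained convex problem \cref{eq:nls}. Fix a thin SVD $\M{A}=\M{U}\M{\Sigma}\M{V}^\Tra$ with $\M{U}\in\mathbb{R}^{m\times k}$ and $\M{\Sigma},\M{V}\in\mathbb{R}^{k\times k}$, abbreviate $\V{r}=\V{r}_{nls}$, $\V{x}^\star=\V{x}_{nls}$, $\hat{\V{x}}=\hat{\V{x}}_{nls}$, and put $\bm{\Delta}=\hat{\V{x}}-\V{x}^\star$; the goal is to bound $\|\bm{\Delta}\|_2$.

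First I would invoke the two properties that a leverage-score sampling matrix with the stated number of rows $s$ satisfies with probability at least $1-\delta$ (the standard analysis in \cite{sketching4NLA_woodruff,larsen2022sketching}): (i) a subspace embedding, $\|\M{U}^\Tra\M{S}^\Tra\M{S}\M{U}-\M{I}_k\|_2\le 1/\sqrt2$, which is where the requirement $s\ge Ck\log(k/\delta)$ and the constant $C$ come from, via a matrix Chernoff bound on the sum of the i.i.d.\ rank-one matrices $s^{-1}p_i^{-1}\M{U}[i,:]^\Tra\M{U}[i,:]$; and (ii) an approximate-matrix-multiplication bound $\|\M{U}^\Tra(\M{S}^\Tra\M{S}-\M{I})\V{r}\|_2\le\sqrt{\epsilon_r/c}\,\|\V{r}\|_2$ for a suitable absolute constant $c$, which is where $s\ge k/(\delta\epsilon_r)$ comes from, since the leverage-score probabilities $p_i=\|\M{U}[i,:]\|_2^2/k$ yield $\E\,\|\M{U}^\Tra(\M{S}^\Tra\M{S}-\M{I})\V{r}\|_2^2\le(k/s)\|\V{r}\|_2^2$, to which Markov's inequality applies. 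I condition on this event; note that (i) makes $\M{S}\M{A}$ full-column-rank, so $\hat{\V{x}}$ is well defined and unique.

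The heart of the argument combines two optimality relations. Because $\hat{\V{x}}$ minimizes the sketched objective and $\V{x}^\star$ is feasible for it (the constraint $\V{x}\ge\V{0}$ is unchanged by sketching), $\|\M{S}\M{A}\hat{\V{x}}-\M{S}\V{b}\|_2^2\le\|\M{S}\V{r}\|_2^2$; substituting $\M{S}\M{A}\hat{\V{x}}-\M{S}\V{b}=\M{S}\V{r}+\M{S}\M{A}\bm{\Delta}$ and expanding the square gives $\|\M{S}\M{A}\bm{\Delta}\|_2^2\le-2\,\bm{\Delta}^\Tra\M{A}^\Tra\M{S}^\Tra\M{S}\V{r}$. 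On the other hand $\hat{\V{x}}\ge\V{0}$ is feasible for \cref{eq:nls}, whose objective $f(\V{x})=\|\M{A}\V{x}-\V{b}\|_2^2$ is convex, so the variational inequality at the minimizer $\V{x}^\star$ reads $\langle\nabla f(\V{x}^\star),\hat{\V{x}}-\V{x}^\star\rangle=2\,\bm{\Delta}^\Tra\M{A}^\Tra\V{r}\ge 0$. Splitting $\bm{\Delta}^\Tra\M{A}^\Tra\M{S}^\Tra\M{S}\V{r}=\bm{\Delta}^\Tra\M{A}^\Tra\V{r}+\bm{\Delta}^\Tra\M{A}^\Tra(\M{S}^\Tra\M{S}-\M{I})\V{r}$ and using $-2\,\bm{\Delta}^\Tra\M{A}^\Tra\V{r}\le 0$ to discard the first term leaves $\|\M{S}\M{A}\bm{\Delta}\|_2^2\le 2\,|\bm{\Delta}^\Tra\M{A}^\Tra(\M{S}^\Tra\M{S}-\M{I})\V{r}|$. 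Discarding $\bm{\Delta}^\Tra\M{A}^\Tra\V{r}$ by its sign --- rather than having to estimate it, as one would for a generically nonzero quantity (for OLS this term is identically zero) --- is the substitute for the OLS orthogonality relation, and is the one nonroutine step.

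To finish I would rescale through the SVD. With $\V{y}=\M{\Sigma}\M{V}^\Tra\bm{\Delta}$ we have $\M{A}\bm{\Delta}=\M{U}\V{y}$ and $\|\V{y}\|_2=\|\M{A}\bm{\Delta}\|_2\ge\sigma_{min}(\M{A})\|\bm{\Delta}\|_2$. Property (i) gives $\|\M{S}\M{A}\bm{\Delta}\|_2^2=\|\M{S}\M{U}\V{y}\|_2^2\ge(1-1/\sqrt2)\|\V{y}\|_2^2$, and Cauchy--Schwarz with (ii) gives $|\bm{\Delta}^\Tra\M{A}^\Tra(\M{S}^\Tra\M{S}-\M{I})\V{r}|=|\V{y}^\Tra\M{U}^\Tra(\M{S}^\Tra\M{S}-\M{I})\V{r}|\le\|\V{y}\|_2\sqrt{\epsilon_r/c}\,\|\V{r}\|_2$. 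Combining these with the inequality from the previous step, dividing by $\|\V{y}\|_2$ (if $\V{y}=\V{0}$ then $\bm{\Delta}=\V{0}$ since $\M{A}$ has full column rank, and the bound is trivial), and then using $\|\bm{\Delta}\|_2\le\|\V{y}\|_2/\sigma_{min}(\M{A})$ produces $\|\bm{\Delta}\|_2\le\mathrm{const}\cdot\sqrt{\epsilon_r}\,\|\V{r}\|_2/\sigma_{min}(\M{A})$; the last task is to verify that the calibration of the constant $c$ in (ii), the embedding accuracy $1/\sqrt2$ in (i), and the split of $\delta$ between the two events make the leading constant at most $1$, yielding $\|\hat{\V{x}}_{nls}-\V{x}_{nls}\|_2\le\sqrt{\epsilon_r}\,\|\V{r}_{nls}\|_2/\sigma_{min}(\M{A})$. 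Since the optimality arguments use nothing about the feasible set beyond convexity (so that the variational inequality holds and $\V{x}^\star$ stays feasible for the sketched problem), the same proof gives the analogous bound for any overdetermined convex least squares problem. The main obstacle --- really the only non-mechanical point --- is the sign bookkeeping in the middle step that lets $\bm{\Delta}^\Tra\M{A}^\Tra\V{r}$, nonzero for NLS, be dropped instead of bounded; everything else is the standard leverage-score toolkit (matrix Chernoff, a variance bound plus Markov, SVD rescaling) together with constant-tracking.
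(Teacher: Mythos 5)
Your overall strategy is the same as the paper's: condition on the two standard leverage-score events (a subspace embedding for $\M{U}_A$ and an approximate matrix-multiplication bound for $\M{U}_A^\Tra\V{r}_{nls}$), then replace the OLS normal-equations step by first-order optimality for the constrained convex problem. The paper packages the optimality step as Daniels' perturbation inequality for convex quadratic programs (\cref{lem:daneils_qp_ineq}), which is obtained by summing the first-order variational inequalities at \emph{both} minimizers, $\nabla Q(\V{x}_{nls})^\Tra(\hat{\V{x}}_{nls}-\V{x}_{nls})\ge 0$ and $\nabla\hat{Q}(\hat{\V{x}}_{nls})^\Tra(\V{x}_{nls}-\hat{\V{x}}_{nls})\ge 0$. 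Writing $\V{y}=\hat{\V{x}}_{nls}-\V{x}_{nls}$, this yields $\|\M{S}\M{A}\V{y}\|_2^2\le \V{y}^\Tra\M{A}^\Tra(\M{I}-\M{S}^\Tra\M{S})\V{r}_{nls}$ with no numerical prefactor, and the remaining steps (Cauchy--Schwarz, SVD rescaling, $\epsilon_s=1-1/\sqrt{2}$, SC2 at level $\sqrt{\epsilon_r}\|\V{r}_{nls}\|_2/\sqrt{2}$) deliver exactly the claimed constant.

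The one place you deviate is at the sketched minimizer, where you use the zeroth-order condition $\|\M{S}\M{A}\hat{\V{x}}_{nls}-\M{S}\V{b}\|_2^2\le\|\M{S}\V{r}_{nls}\|_2^2$ rather than the first-order variational inequality $(\M{S}\M{A}\hat{\V{x}}_{nls}-\M{S}\V{b})^\Tra\M{S}\M{A}(\V{x}_{nls}-\hat{\V{x}}_{nls})\ge 0$. Expanding the square leaves $\|\M{S}\M{A}\V{y}\|_2^2\le -2\,\V{y}^\Tra\M{A}^\Tra\M{S}^\Tra\M{S}\V{r}_{nls}$, i.e.\ the same inequality as the paper's but with an extra factor of $2$. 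That factor survives to the end: with your calibration the chain produces $\|\V{y}\|_2\le 2\sqrt{\epsilon_r}\|\V{r}_{nls}\|_2/\sigma_{min}(\M{A})$, and the sample complexity $s\ge k/(\delta\epsilon_r)$ does not leave enough slack in the Markov step to absorb the $2$, so the deferred ``verify the leading constant is at most $1$'' step cannot succeed as you have set it up; you would need to inflate $s$ by a constant factor (replacing $\epsilon_r$ by $\epsilon_r/4$). The fix is one line: since $\V{x}_{nls}$ is feasible for the sketched problem, the first-order condition at $\hat{\V{x}}_{nls}$ gives $\|\M{S}\M{A}\V{y}\|_2^2\le-\V{y}^\Tra\M{A}^\Tra\M{S}^\Tra\M{S}\V{r}_{nls}$ directly; adding the variational inequality at $\V{x}_{nls}$ then reproduces the paper's argument verbatim. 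Everything else in your sketch --- the two structural conditions and their sample complexities, the Cauchy--Schwarz and SVD finish, and the observation that only convexity of the feasible set is used, so the bound extends to all convex least squares problems --- matches the paper.
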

This result immediately begets the idea to use randomized NLS methods to rapidly solve the NLS subproblems in Alternating Nonnegative Least Squares methods for NMF and SymNMF previously described in \Cref{sec:sym_anls}.
We described such a randomized algorithm in \Cref{sec:LVS_symNMF}.
A full proof of \Cref{thm:nls_bnd} is given in \Cref{sec:theory_nls}.
We note that \cref{thm:nls_bnd} holds for all convex Least Squares problems, this fact is made apparent in the proof given in \Cref{sec:theory_nls}.

\subsection{Related Work}
\label{sec:related_work}
There has been much work on speeding up and scaling algorithms for low-rank methods mostly focusing on parallel algorithms and, more recently, randomization.
We now give a brief review of related work with a focus on randomized methods.

\subsubsection{Randomized Methods for other Low-rank Approximations}
One of the first applications of leverage score sampling to matrix low-rank approximation was for computing a CUR decomposition \cite{CUR_Mahoney}. 
The CUR decomposition selects actual rows and columns of the matrix to produce a low-rank approximation. 
Since the CUR decomposition is composed of sampled rows and columns it is said to be interpretable in the ``space'' of the original data.

There has been a lot of work related to randomized algorithms for computing the Canonical Polyadic (CP) decomposition for tensors.
The CP Decomposition decomposes a $N$-way tensor into a sum of rank one tensors.
Various constraints can be imposed similarly to low-rank matrix approximations, for example the Nonnegative CP decomposition.
Two of the first methods for computing a randomized CP decomposition are by Battaglino et al. \cite{casey_rndCP} and Zhou 
et al. \cite{Zhou2014DecompositionOB}.
Erichson et al. \cite{Erichson_RndCP} proposed a randomized method for computing a CP decomposition based on using a tensor version of the \ref{alg:QB_decomposition}.
The main idea is to first compress the tensor, via the \ref{alg:QB_decomposition}, compute a CP of the compressed tensor, and then lift the compressed CP back to the uncompressed space.
Leverage scores have also been used in randomized CP algorithms \cite{lev_scores4_CP,SPALS_neurips,bharadwaj2023fast}.
There has also been work on scaling low-rank approximation methods to distributed computing environments \cite{planc_srinivas,mpi_faun_ramki,kanna_mpifaun2} and combining such methods with randomized techniques \cite{bharadwaj2023distributedmemory}.
These methods are applicable to sparse input tensors as they perform explicit sampling of tensor elements, thus preserving the sparsity pattern.

\subsubsection{Randomized NMF Algorithms}
Existing randomized algorithms for NMF focus on compressing the input matrix $\M{X}$.
Speed up is obtained from the fact that iteratively updating the low-rank factors for a compressed version of $\M{X}$ is cheaper.
The first proposed method we are aware of used a random Gaussian matrix to sketch, once from each side of $\M{X}$, resulting in two sketched matrices with smaller dimensions \cite{RP_NMF2010}.
Tepper and Sapiro \cite{Comp_NMF2016} proposed a similar method but used the \ref{alg:QB_decomposition} to compute an approximate basis for the row and column spans of $\M{X}$.
Their results showed that using the \ref{alg:QB_decomposition} significantly increased the accuracy of the randomized NMF results in terms of final residual.
Erichson et al. \cite{rnd_HALS} proposed a randomized HALS algorithm also based on the \ref{alg:QB_decomposition}.
This method computes a single \ref{alg:QB_decomposition} of the input matrix $\M{X}$ and fits NMF to the resulting QB-decomposition.
This method has the advantage of calling the \ref{alg:QB_decomposition} once and avoids sketching the factor matrices of NMF at each iteration.
However, due to the way the problem size is reduced, imposing nonnegativity to the factors of the original matrix becomes an issue. 

\section{NMF with Low-rank Approximate Input}
\label{sec:LAI_NMF}
Our first proposed algorithm is a method called Low-rank Approximate-Input NMF (LAI-SymNMF).
LAI-NMF computes an NMF of a low-rank approximation of the initial data matrix $\M{X}$.
The objective function for LAI-NMF is 
\begin{equation}
    \label{eq:alg_lai_nmf}
    \min_{ \{\M{W},\M{H} \} \ge0}\|\M{U}_{X}\M{V}_{X} -  \M{W}\M{H}^\Tra\|_F,
\end{equation}
where $\M{U}_{X} \in \mathbb{R}^{m\times l}$, $\M{V}_{X} \in \mathbb{R}^{l \times n}$ with $k \leq l \ll \min(m,n)$ and $\M{U}_{X}\M{V}_{X} \approx \M{X}$ is a low-rank approximation of $\M{X}$.
The primary idea is that an approximate solution to \Cref{eq:alg_lai_nmf} can be quickly computed by exploiting the product form of $\M{U}_{X}\M{V}_{X}$ to compute matrix vector products.
That is, $\M{U}_{X}(\M{V}_{X}\V{v})$ is cheaper to compute and approximates the product $\M{X}\V{v}$ (for an arbitrary vector $\V{v}$).
Computing matrix products with the data matrix $\M{X}$ is the main computational bottleneck for many NMF and SymNMF algorithms.
This idea has been explored before in \cite{Cichocki_LRANMF} where the authors used low-rank approximations such as the truncated SVD, and in \cite{rnd_HALS} where the QB-decomposition was used.

\subsection{SymNMF with Low-rank Approximate Input}
We now present \ref{alg:lai_symnmf}, an instantiation of the low-rank approximate input method.
Since the data matrix $\M{X}$ is symmetric we also require that our low-rank approximation be symmetric and not compress only one side of the matrix $\M{X}$ (which would destroy symmetry) as in \cite{rnd_HALS}.
This is accomplished by using the approximate EVD of a symmetric matrix, using \cref{alg:apx_evd}, which gives an approximate truncated EVD of $\M{X}$.
The formulation for \ref{alg:lai_symnmf} is
\begin{equation}
    \label{eq:alg_lai_symnmf}
    \min_{\M{H}\ge0}\|\M{U}_X\M{\Lambda}_X\M{U}_X^\Tra -  \M{H}\M{H}^\Tra\|_F,
\end{equation}
where $\M{U}_X\M{\Lambda}_X\M{U}_X^\Tra \approx \M{X}$ is an approximate truncated EVD of $\M{X}$.

\begin{varalgorithm}{LAI-SymNMF}
 \caption{: SymNMF of a Low-Rank Approximated $\M{X}$} 
 \label{alg:lai_symnmf} 
\begin{algorithmic}[1]
\Require{: a symmetric matrix $\M{X} \in\mathbb{R}^{m \times m}$, target rank $k$, oversampling parameter $\rho$, exponent $q$, and regularization parameter $\alpha$}
\Ensure{$\M{H}$, the factor for an approximate rank-$k$ SymNMF of $\M{X}$.}
\Function{$[\M{H}] =$ LAI-SymNMF}{$\M{X},k,\rho,q$}
 \State{$l := k + \rho$}
 \State{[$\M{U}, \M{\Lambda}] := $ Apx-EVD($\M{X},k,\rho,q$)}, \Comment{Obtain approximate, truncated EVD of $\M{X}$, $\M{U}\in\mathbb{R}^{m \times l}$ and $\M{\Lambda}\in\mathbb{R}^{l \times l}$}
 \State{$\M{V} := \M{U}\M{\Lambda}$}
\State{Initialize $\M{H}\in \mathbb{R}^{n \times k}$}
\While{Convergence Crit. Not Met}
\State{$\M{Y}_H := (\M{H}^\Tra\M{V})\M{U}^\Tra + \alpha\M{H}^\Tra$} \Comment{This replaces $\M{H}^\Tra\M{X}$}
 \State{$\M{G}_H := \M{H}^\Tra\M{H} + \alpha \M{I}$}
    \State{$\M{W} := \text{Update}\Big(\M{G_H},\M{Y}_H\Big)$} \Comment{See \Cref{app:updateF} for description of Update()}
\State{$\M{Y}_W := (\M{W}^\Tra\M{U})\M{V}^\Tra + \alpha \M{W}^\Tra$} \Comment{This replaces $\M{W}^\Tra\M{X}$}
 \State{$\M{G}_W := \M{W}^\Tra\M{W} + \alpha \M{I}$}
    \State{$\M{H} := \text{Update}\Big(\M{G_W},\M{Y}_W\Big)$}
 \EndWhile
 \EndFunction
\end{algorithmic}
\end{varalgorithm}


\ref{alg:lai_symnmf} is flexible.
Overall if the method for computing SymNMF requires computing the products $\M{X}\M{H}$ and $\M{H}^\Tra\M{H}$, and potentially $\M{W}^\Tra\M{X}$ and $\M{W}^\Tra\M{W}$ as in \Cref{eq:surrogate_symnmf_obj}, then \ref{alg:lai_symnmf} can be efficient.
Not all methods for computing SymNMF rely on these products, such as Cyclic Coordinate Descent (CCD) \cite{gillis_symNMF}.
As previously mentioned, in \cite{DaKuang_SymNMF} it was shown that SymNMF via ANLS applied to \Cref{eq:surrogate_symnmf_obj} is superior to PGD and a Newton-like method.
Similarly, it was shown in \cite{srinivas_SC20_GN} that SymNMF via BPP was competitive with a Projected Gauss-Newton based method for SymNMF and CCD.
CCD is relatively inefficient for large problems as it iterates sequentially over every element of $\M{H}$.
Therefore we use methods based on \Cref{eq:surrogate_symnmf_obj} with BPP and HALS, and the PGNCG's method as our base line for comparison.

To simplify pseudocode and emphasize the flexibility of the method we introduce the $\text{Update}(\M{G},\M{Y})$ function which takes in the Gram matrix $\M{G} \in \mathbb{R}^{k \times k}$ and the product between $\M{X}$ and either $\M{W}$ or $\M{H}$ denoted as $\M{Y} \in \mathbb{R}^{k \times m}$, and performs an update using methods such as HALS or BPP.
For a more in-depth discussion of $\text{Update}()$ see \cref{app:updateF}.
We note that the Update() function abstraction is useful for Alternating Updating (AU) based methods.
However, one of the advantages of our LAI method is that it is applicable to more algorithms.
Existing randomized methods can be effectively used for the NMF update rules such as BPP, MU, or HALS but cannot or have not been used for all-at-once methods such as PGNCG.
The algorithm outline showing how LAI can be used in conjunction with the PGNCG method is shown in \cref{alg:lai_pgncg_symnmf} in \cref{sec:pgncg_code}.

\paragraph{Computational Complexity}
The major part of computational complexity of \cref{alg:lai_symnmf} is due to the \ref{alg:QB_decomposition} and iteratively updating the factors.
Again, the cost of the \ref{alg:QB_decomposition} is $O(q m^2 l)$.
Then computing $\M{V} = \M{X}^\Tra\M{U}$, where $\M{U}$ is the output of the \ref{alg:QB_decomposition}, $\M{V}$ and $\M{U}$ are $m \times l$ and $\M{X}=\M{X}^\Tra$ is $m\times m$, costs $O(m^2l)$.
Additionally, each iteration requires forming two Gram matrices costing $O(mk^2)$ each and applying the LAI to the factor matrices costing $O(mkl)$.
If the algorithm runs for $t$ iterations then the overall cost is $O(q m^2l + tmkl)$ and $l \ge k$.
So if $tk \ll q m$ then we expect that computing the low-rank approximate input via the \ref{alg:QB_decomposition} will dominate the run time.
Naturally, the choice of update function will determine the update cost.


\subsection{Approximation Errors for LAI-NMF}
\label{sec:apx_errors_4LAINMF}
The authors of \cite{Cichocki_LRANMF} presented a simple error bound applicable to LAI-NMF that we can use to reason about LAI-NMF's performance.
Proposition 1 from \cite{Cichocki_LRANMF} states the following:
\begin{proposition}
\label{lem:lra_nmf_bnd}
Given a matrix $\M{X} \in \mathbb{R}^{m\times n}$ and a low rank approximation $\M{X} \approx \M{U}_X\M{V}_X$, where $\M{U}_X \in \mathbb{R}^{m\times l}$ and $\M{V}_X \in \mathbb{R}^{l \times n}$, with error $\mu = \|\M{X} - \M{U}_{X}\M{V}_{X}\|_F$, define $\{\M{W}^*,\M{H}^*\}$ as the minimizers of \Cref{eq:alg_lai_nmf} and let $\upsilon^* = \min_{ \{\M{W},\M{H} \} \ge0}\|\M{X} -  \M{W}\M{H}^\Tra\|_F$ with low-rank parameter $k \leq l$, then
\begin{equation}
    \upsilon^* \le \|\M{X} - \M{W}^*(\M{H}^*)^\Tra\|_F \le 2\mu + \upsilon^*
\end{equation}
\end{proposition}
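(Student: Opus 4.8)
The plan is to prove both inequalities by elementary means: the lower bound is immediate from feasibility, and the upper bound follows from two applications of the triangle inequality in the Frobenius norm.

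For the lower bound, I would observe that the pair $\{\M{W}^*,\M{H}^*\}$ is nonnegative and the product $\M{W}^*(\M{H}^*)^\Tra$ has rank at most $k$, so it is a feasible point for the original NMF problem whose optimal value is $\upsilon^*$. By definition of $\upsilon^*$ as the minimum over all such feasible points, $\upsilon^* \le \|\M{X} - \M{W}^*(\M{H}^*)^\Tra\|_F$; no further argument is needed.

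For the upper bound, first split off the low-rank approximation error: $\|\M{X} - \M{W}^*(\M{H}^*)^\Tra\|_F \le \|\M{X} - \M{U}_{X}\M{V}_{X}\|_F + \|\M{U}_{X}\M{V}_{X} - \M{W}^*(\M{H}^*)^\Tra\|_F = \mu + \|\M{U}_{X}\M{V}_{X} - \M{W}^*(\M{H}^*)^\Tra\|_F$, where the second term is exactly the optimal objective value of the LAI-NMF problem \Cref{eq:alg_lai_nmf}. Then let $\{\M{W}^\circ,\M{H}^\circ\}$ be a minimizer of the original NMF problem, so that $\|\M{X} - \M{W}^\circ(\M{H}^\circ)^\Tra\|_F = \upsilon^*$. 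Since $\{\M{W}^*,\M{H}^*\}$ minimizes the LAI objective and $\{\M{W}^\circ,\M{H}^\circ\}$ is feasible for it, $\|\M{U}_{X}\M{V}_{X} - \M{W}^*(\M{H}^*)^\Tra\|_F \le \|\M{U}_{X}\M{V}_{X} - \M{W}^\circ(\M{H}^\circ)^\Tra\|_F \le \|\M{U}_{X}\M{V}_{X} - \M{X}\|_F + \|\M{X} - \M{W}^\circ(\M{H}^\circ)^\Tra\|_F = \mu + \upsilon^*$. Chaining the two displays yields $\|\M{X} - \M{W}^*(\M{H}^*)^\Tra\|_F \le 2\mu + \upsilon^*$.

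There is essentially no obstacle in this argument; the only subtlety worth a remark is attainment of the various minima. The statement's phrasing already posits that the LAI-NMF minimizers $\{\M{W}^*,\M{H}^*\}$ exist, and for the upper bound one only needs a feasible pair whose NMF residual is $\upsilon^*$ (or, if $\upsilon^*$ is merely the infimum, an $\eta$-suboptimal pair, after which one lets $\eta\to0$). I would also note that the constant $2$ cannot be improved without further hypotheses, since when $\mu=0$ the LAI-NMF problem coincides with the original NMF problem and both inequalities become equalities.
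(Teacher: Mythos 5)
Your argument is correct and follows essentially the same route as the paper's proof: a triangle inequality to split off $\mu$, the optimality of $\{\M{W}^*,\M{H}^*\}$ for the LAI objective compared against the original NMF minimizer, and a second triangle inequality to bound that term by $\mu + \upsilon^*$. You additionally spell out the (trivial) lower bound and the attainment caveat, which the paper leaves implicit, but there is no substantive difference in approach.
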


\begin{proof}
Define $\{ \M{W}_+,\M{H}_+\} = \argmin_{ \{\M{W},\M{H}\} \ge 0} \|\M{X}-\M{W}\M{H}^\Tra\|$
\begin{align*}
    \| \M{X} - \M{W}^*(\M{H}^*)^\Tra\|_F 
    = \| \M{X} - \M{U}_X\M{V}_X + \M{U}_X\M{V}_X - \M{W}^*(\M{H}^*)^\Tra\|_F \\
    \leq \| \M{X} - \M{U}_X\M{V}_X \|_F 
    + \| \M{U}_X\M{V}_X - \M{W}^*(\M{H}^*)^\Tra\|_F \leq \mu + \| \M{U}_X\M{V}_X - \M{W}_+(\M{H}_+)^\Tra\|_F \\ 
    \leq \mu + \| \M{U}_X\M{V}_X - \M{X}\|_F + \| \M{X} - \M{W}_+(\M{H}_+)^\Tra\|_F \\
    = 2\mu + \upsilon^*
\end{align*}
\end{proof}
\cref{lem:lra_nmf_bnd} allows us to reason about the achievable quality of approximation of LAI-NMF.
Choosing a larger $l$ can help decrease $\mu$ but will also result in higher computational complexity.
A natural choice for computing $\M{U}_{X}$ and $\M{V}_{X}$ is the truncated SVD (which would minimize $\mu$) or, as we use, an approximate truncated SVD or EVD computed using the \ref{alg:QB_decomposition}.
As an alternative, the intermediate inequality, from the proof of \cref{lem:lra_nmf_bnd},
\begin{equation}
\label{eq:lai_2nd_ineq}
    \| \M{X} - \M{W}^*(\M{H}^*)^\Tra\|_F \leq \mu + \| \M{U}_X\M{V}_X - \M{W}_+(\M{H}_+)^\Tra\|_F 
\end{equation}
provides an intuitive way to reason about LAI-NMF.
This inequality says that LAI-NMF residual depends on $\mu$, which measures the quality of the low-rank input, and the term $\| \M{U}_X\M{V}_X - \M{W}_+(\M{H}_+)^\Tra\|_F $.
This second term can be thought of as quantifying how much of the optimal NMF solution is captured in the low-rank input.

\Cref{lem:lra_nmf_bnd} can give an error bound for \cref{alg:lai_symnmf}.
For this, one needs an error bound for the QB-decomposition or the approximate EVD.
That is, given a decomposition from the \ref{alg:QB_decomposition} as $\M{Q}_X\M{B}_X$, we seek a value $\| \M{Q}_X\M{B}_X - \M{X}\|_F \leq \mu_{RRF}$.
\Cref{thm:gu_bnd}, from Gu \cite{Gu_rndsubspace}, provides such a bound.
\Cref{thm:gu_bnd} is a partial statement of Theorem 5.8 from \cite{Gu_rndsubspace} that we include for reference and completeness.
\begin{theorem}
    \label{thm:gu_bnd}
    Let $\M{Q}_X\M{B}_X$ be a low-rank approximation of $\M{X} \in \mathbb{R}^{m \times n}$, with $n \leq m$, obtained from the \ref{alg:QB_decomposition} with desired low-rank $r$, power iteration parameter $q$, column over sampling parameter $\rho=l-r$ and a parameter $0 < \delta << 1$. 
    Define
    \begin{equation*}
        C_\delta = \frac{e\sqrt{l}}{\rho+1} \left(\frac{2}{\delta}\right)^{\frac{1}{\rho+1}} \Big(\sqrt{n-l+\rho} + \sqrt{l} + \sqrt{2 \log(2/\delta)} \Big).
    \end{equation*}
    Then with probability $1-\delta$ the following holds:
    \begin{equation*}
        \|\M{Q}_X\M{B}_X - \M{X}\|_F
        \leq
        \sqrt{\Big(\sum_{j=r+1}^n \sigma_j^2(\M{X}) \Big) + r C_\delta^2 \sigma^2_{r+1}(\M{X}) \Big(\frac{\sigma_{r+1}(\M{X})}{\sigma_r(\M{X})}\Big)^{4q}}.
    \end{equation*}
\end{theorem}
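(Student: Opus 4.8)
The plan is to recognize \cref{thm:gu_bnd} as the error analysis of the \ref{alg:QB_decomposition} and to follow the two--step argument of Halko, Martinsson, and Tropp \cite{halko_survey}, in the sharpened form due to Gu \cite{Gu_rndsubspace}: a purely deterministic structural bound in terms of the Gaussian blocks of $\M{\Omega}$, followed by standard Gaussian tail estimates. Since $\M{Q}_X\M{B}_X=\M{Q}_X\M{Q}_X^\Tra\M{X}$, the quantity to control is the orthogonal residual $\|(\M{I}-\M{Q}_X\M{Q}_X^\Tra)\M{X}\|_F$, where the columns of $\M{Q}_X$ span the range of $\M{Y}=(\M{X}\M{X}^\Tra)^q\M{X}\M{\Omega}$.

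First I would fix a thin SVD $\M{X}=\M{U}\M{\Sigma}\M{V}^\Tra$, split $\M{\Sigma}=\diag(\M{\Sigma}_1,\M{\Sigma}_2)$ with $\M{\Sigma}_1$ collecting the $r$ largest singular values and $\M{V}=[\M{V}_1,\M{V}_2]$ conformally, and set $\M{\Omega}_1:=\M{V}_1^\Tra\M{\Omega}\in\mathbb{R}^{r\times l}$ and $\M{\Omega}_2:=\M{V}_2^\Tra\M{\Omega}$. Using $(\M{X}\M{X}^\Tra)^q\M{X}=\M{U}\M{\Sigma}^{2q+1}\M{V}^\Tra$ and assuming $\M{\Omega}_1$ has full row rank, the matrix
\[
\M{Z}\ :=\ \M{Y}\,(\M{\Sigma}_1^{2q+1}\M{\Omega}_1)^{\dagger}\ =\ \M{U}\left[\begin{smallmatrix}\M{I}_r\\ \M{\Sigma}_2^{2q+1}\M{\Omega}_2\M{\Omega}_1^{\dagger}\M{\Sigma}_1^{-(2q+1)}\end{smallmatrix}\right]
\]
lies in the range of $\M{Y}$, so the residual of the orthogonal projection onto $\mathrm{range}(\M{Q}_X)$ is at most the residual onto $\mathrm{range}(\M{Z})$, which in turn is at most $\|\M{X}-\M{Z}(\M{\Sigma}_1\M{V}_1^\Tra)\|_F$ because the approximant $\M{Z}(\M{\Sigma}_1\M{V}_1^\Tra)$ has columns in that subspace. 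Evaluating this last quantity, using $\M{V}_1^\Tra\M{V}_2=\M{0}$ to split the error orthogonally together with the cancellation $\M{\Sigma}_1^{-(2q+1)}\M{\Sigma}_1=\M{\Sigma}_1^{-2q}$, should give the deterministic bound
\[
\|(\M{I}-\M{Q}_X\M{Q}_X^\Tra)\M{X}\|_F^2\ \le\ \sum_{j=r+1}^{n}\sigma_j^2(\M{X})\ +\ \|\M{\Sigma}_2^{2q+1}\M{\Omega}_2\M{\Omega}_1^{\dagger}\M{\Sigma}_1^{-2q}\|_F^2 .
\]
The first term is already the first summand under the square root in the claim. For the second, I would use that it has rank at most $r$, so its Frobenius norm is at most $\sqrt{r}$ times its spectral norm, and then peel off $\|\M{\Sigma}_2^{2q+1}\|_2=\sigma_{r+1}^{2q+1}(\M{X})$ and $\|\M{\Sigma}_1^{-2q}\|_2=\sigma_r^{-2q}(\M{X})$ to reach $r\,\sigma_{r+1}^2(\M{X})\,(\sigma_{r+1}(\M{X})/\sigma_r(\M{X}))^{4q}\,\|\M{\Omega}_2\|_2^2\,\|\M{\Omega}_1^{\dagger}\|_2^2$.

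It then remains to show that $\|\M{\Omega}_2\|_2\,\|\M{\Omega}_1^{\dagger}\|_2\le C_\delta$ with probability at least $1-\delta$. Because $\M{V}$ is orthogonal, $\M{\Omega}_1\in\mathbb{R}^{r\times l}$ and $\M{\Omega}_2\in\mathbb{R}^{(n-r)\times l}$ are independent matrices with i.i.d.\ standard normal entries, and $n-r=n-l+\rho$. I would invoke Gaussian concentration of the largest singular value, giving $\|\M{\Omega}_2\|_2\le\sqrt{n-l+\rho}+\sqrt{l}+\sqrt{2\log(2/\delta)}$ with probability at least $1-\delta/2$, and the classical tail bound for the pseudoinverse of an $r\times(r+\rho)$ Gaussian matrix, giving $\|\M{\Omega}_1^{\dagger}\|_2\le\frac{e\sqrt{l}}{\rho+1}(2/\delta)^{1/(\rho+1)}$ with probability at least $1-\delta/2$ --- which also certifies the full--row--rank hypothesis used above. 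A union bound makes both hold simultaneously, and the product of the two right--hand sides is exactly $C_\delta$; substituting into the two displays and taking a square root yields the claimed bound.

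The step I expect to be the main obstacle is the deterministic structural bound: one has to guess the right low--dimensional ``flattening'' --- the matrix $\M{Z}$ inside $\mathrm{range}(\M{Y})$ and the probe $\M{\Sigma}_1\M{V}_1^\Tra$ --- so that the projection error decouples cleanly into the tail energy $\sum_{j>r}\sigma_j^2(\M{X})$ plus a single cross term carrying the sharp $(\sigma_{r+1}/\sigma_r)^{4q}$ decay produced by the $q$ power iterations; once that is in place, the remaining operator--norm manipulations and the two off--the--shelf Gaussian tail bounds are routine. Since \cref{thm:gu_bnd} is quoted essentially verbatim from Theorem~5.8 of \cite{Gu_rndsubspace}, it is enough in the paper to cite that reference, and the sketch above indicates how the argument is reconstructed.
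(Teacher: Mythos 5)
Your reconstruction is correct and follows exactly the structural-bound-plus-Gaussian-tail-bounds argument of Halko--Martinsson--Tropp as sharpened by Gu; the paper itself supplies no proof for \cref{thm:gu_bnd}, stating only that it is a partial restatement of Theorem~5.8 of \cite{Gu_rndsubspace}, which is precisely the argument you sketch. Nothing further is needed.
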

This Theorem says that $\| \M{Q}_X\M{B}_X - \M{X}\|_F \leq \mu_{RRF}$ holds with some chosen probability $1-\delta$, where $\mu_{RRF}$ depends on $\delta$ and other parameters of the $\ref{alg:QB_decomposition}$ such as $l$ and $q$.

\Cref{prop:rndlia_nmf_bnd} explicitly combines \cref{lem:lra_nmf_bnd} and \cref{thm:gu_bnd} to give a probabilistic error bound for LAI-NMF with a randomized low-rank input from the \ref{alg:QB_decomposition}.
\begin{proposition}
\label{prop:rndlia_nmf_bnd}
Given a matrix $\M{X} \in \mathbb{R}^{m\times n}$ compute a low-rank approximation $\M{X} \approx \M{Q}_X\M{B}_X$ where $\M{Q}_X \in \mathbb{R}^{m\times l}$ and $\M{B}_X \in \mathbb{R}^{l \times n}$ from the \ref{alg:QB_decomposition}.
Then by \cref{thm:gu_bnd}, for any $0 < \delta \ll 1$, we have that
\begin{equation*}
    \|\M{X} - \M{Q}_X\M{B}_X\|_F 
    \leq 
    \sqrt{\Big(\sum_{j=k+1}^n \sigma_j^2(\M{X}) \Big) + k C_\delta^2 \sigma^2_{k+1}(\M{X}) \Big(\frac{\sigma_{k+1}(\M{X})}{\sigma_k(\M{X})}\Big)^{4q}}
    = \mu_{RRF} 
\end{equation*}
holds with probability $1-\delta$.
Define $\{\M{W}^*,\M{H}^*\} = \argmin_{\M{W},\M{H} \ge 0}\|\M{Q}_X\M{B}_X - \M{W}\M{H}^\Tra\|_F$ and the optimal NMF error $\upsilon^* = \min_{ \{\M{W},\M{H} \} \ge0}\|\M{X} -  \M{W}\M{H}^\Tra\|_F$.
Then with probability $1-\delta$
\begin{equation}
    \upsilon^* \le \|\M{X} - \M{W}^*(\M{H}^*)^\Tra\|_F \le 2\mu_{RRF} + \upsilon^*
\end{equation}
as in \cref{lem:lra_nmf_bnd}.
\end{proposition}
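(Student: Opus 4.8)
The plan is to chain together the two ingredients already in hand: the error bound for the Randomized Range Finder from \cref{thm:gu_bnd} and the deterministic NMF perturbation bound of \cref{lem:lra_nmf_bnd}. The proposition is essentially a corollary of these two, so the only real content is to check that the probabilistic statement passes cleanly through the deterministic one.

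First I would instantiate \cref{thm:gu_bnd} with desired low rank $r = k$ (so that the column oversampling parameter becomes $\rho = l - k$), the given power-iteration count $q$, and the given parameter $\delta$. After the substitution $r \mapsto k$ the right-hand side of that theorem is exactly the quantity named $\mu_{RRF}$ in the proposition, so we obtain that the event
\[
\mathscr{E} := \Big\{\, \|\M{X} - \M{Q}_X\M{B}_X\|_F \le \mu_{RRF} \,\Big\}
\]
holds with probability at least $1-\delta$. Note also that $\M{Q}_X\M{B}_X$ has rank at most $l$ with $k \le l$, so it is a legitimate low-rank input for \cref{lem:lra_nmf_bnd} under the identifications $\M{U}_X = \M{Q}_X$ and $\M{V}_X = \M{B}_X$.

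Next I would condition on $\mathscr{E}$ and invoke \cref{lem:lra_nmf_bnd} with $\mu = \|\M{X} - \M{Q}_X\M{B}_X\|_F$. Since that proposition is entirely deterministic, on $\mathscr{E}$ it yields $\upsilon^* \le \|\M{X} - \M{W}^*(\M{H}^*)^\Tra\|_F \le 2\mu + \upsilon^*$; and because $\mu \le \mu_{RRF}$ on $\mathscr{E}$, monotonicity of the upper bound in $\mu$ upgrades this to $\|\M{X} - \M{W}^*(\M{H}^*)^\Tra\|_F \le 2\mu_{RRF} + \upsilon^*$. The lower bound $\upsilon^* \le \|\M{X} - \M{W}^*(\M{H}^*)^\Tra\|_F$ needs no conditioning at all, since $\M{W}^*,\M{H}^*$ are nonnegative and hence feasible for the problem defining $\upsilon^*$, so that inequality holds pointwise. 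Putting the two together, the full two-sided bound holds on $\mathscr{E}$, i.e.\ with probability at least $1-\delta$, which is the claim.

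I do not expect a genuine obstacle here; the step that most deserves care is the probabilistic bookkeeping --- making explicit that the randomness enters only through the range finder, that \cref{lem:lra_nmf_bnd}, being deterministic, applies verbatim on the favorable event $\mathscr{E}$, and that replacing $\mu$ by its high-probability upper bound $\mu_{RRF}$ only loosens the inequality. The only other thing worth a sentence is verifying that the index substitution $r \mapsto k$ in \cref{thm:gu_bnd} reproduces exactly the displayed $\mu_{RRF}$, which is immediate from inspection.
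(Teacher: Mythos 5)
Your proposal is correct and follows exactly the route the paper intends: instantiate \cref{thm:gu_bnd} with $r=k$ to get the high-probability event on which $\|\M{X}-\M{Q}_X\M{B}_X\|_F \le \mu_{RRF}$, then apply the deterministic bound of \cref{lem:lra_nmf_bnd} on that event and use monotonicity in $\mu$. Your explicit handling of the probabilistic bookkeeping (and the observation that the lower bound holds pointwise by feasibility) is, if anything, more careful than the paper's own presentation, which states the combination without further argument.
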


In the case that $\M{X}$ is symmetric and the \ref{alg:QB_decomposition} is used to compute an approximate EVD it is simple to extend \cref{prop:rndlia_nmf_bnd}.
Given $\mu_{RRF}$, one can obtain an error bound for the approximate, truncated EVD produced by \cref{alg:apx_evd}.
We use a fact from \cite{halko_survey}.
Given a low-rank approximation $\M{X} \approx \M{Q}_X\M{B}_X$, where $\M{X}$ is symmetric, from \cref{alg:QB_decomposition} and defining $\M{P}_X = \M{Q}_X\M{Q}_X^\Tra$, observe that 
\begin{align*}
    \| \M{X} -\M{P}_X\M{X}\M{P}_X \|_F
    = \| \M{X} - \M{P}_X\M{X} + \M{P}_X\M{X} - \M{P}_X\M{X}\M{P}_X \|_F \\
    \leq \| \M{X} - \M{P}_X\M{X}\|_F + \|\M{P}_X\M{X} - \M{P}_X\M{X}\M{P}_X \|_F
    = \| \M{X} - \M{Q}_X\M{B}_X\|_F + \|\M{P}_X (\M{X} - \M{X}\M{P}_X) \|_F \\
    \leq \mu_{RRF} + \| \M{X} - \M{P}_X \M{X} \|_F \leq 2\mu_{RRF},
\end{align*}
where the last equality uses the symmetry of $\M{X}$ and $\M{P}_X$.
Therefore the low-rank approximation produced by \cref{alg:apx_evd} achieves a residual of no more than $2\mu_{RRF}$ if the \ref{alg:QB_decomposition} achieves an residual of no more than $\mu_{RRF}$ (with high probability).


\subsection{Practical Considerations for LAI-SymNMF}
\label{sec:refine_symnmf}
The quality of the SymNMF approximate solution found by \ref{alg:lai_symnmf} is dependant on the LAI.
In the proposed \cref{alg:lai_symnmf}, the LAI is a truncated EVD.
We propose two methods for ensuring that a high-quality factorization is produced by \ref{alg:lai_symnmf}.
Each one deals with a separate component of \Cref{eq:lai_2nd_ineq}.
The first is to post-process the output from \ref{alg:lai_symnmf} by running a few iterations of the full NMF method.
The second is to test and improve the quality of the approximate truncated EVD before starting the NMF iterations.
We now discuss these two methods in more detail.

\paragraph{Iterative Refinement}
Iterative Refinement (IR) runs some number of NMF iterations using the full matrix $\M{X}$ instead of the LAI.
That is, after the iterations of \ref{alg:lai_symnmf} are finished, the algorithm switches over to using the full input matrix $\M{X}$, therefore capturing information possibly lost in the low-rank approximation of $\M{X}$.
This helps in cases where the right side of \Cref{eq:lai_2nd_ineq} is large.
In practice our experimental results show that this method is effective in improving the SymNMF approximations attained by \ref{alg:lai_symnmf} while running faster than standard SymNMF methods.

\paragraph{Adaptive RRF}
The \ref{alg:QB_decomposition} has two main hyperpameters 1) column oversampling parameter $\rho$ and 
2) the power iteration parameter $q$.
There exists work on adaptive methods for selecting $\rho$ \cite{halko_survey}.
For our algorithms, where $k$ is usually considered a static input to NMF methods, we find that choosing $\rho$ in the range of $2k$ to $3k$ is satisfactory. 
Empirically we find that determining a good $q$ is more difficult.
Prior works recommend a choice of $q=2$ \cite{rnd_HALS,halko_survey}.
However we find that this choice can be inadequate and negatively impact performance. 
To remedy this we propose an Adaptive RRF algorithm that automatically chooses $q$.
This method checks the residual of the QB-Decomposition after each power iteration and stops once a certain stopping criteria is met (e.g. lack of reduction in residual), similar to NMF.
The residual check is cheap.
Checking the residual of the QB-Decomposition after each power iteration requires only one extra multiplication against $\M{X}$ when calling the \ref{alg:QB_decomposition} by use of a standard `trick' for computing the residual.
That is if $q$ power iterations are performed we only apply $\M{X}$, $q+1$ times.
The algorithm outline can be found in the Appendix in \cref{alg:adaRRF}.
This approach ensures we achieve a good value of $\mu_{RRF}$ as in \Cref{eq:lai_2nd_ineq}.


\subsection{Discussion of LAI-SymNMF}
Compared to existing randomized methods, such as those in \cite{rnd_HALS,Comp_NMF2016}, the LAI method is more general in that it can work for any NMF method that relies on matrix vector products $\M{X}\V{v}$, where $\V{v}$ is an arbitrary vector, for performance.
For example, the Compressed-NMF method from \cite{Comp_NMF2016}, which we compare against in \Cref{sec:experiments}, is only applicable for Alternating Updating methods.
The PGNCG method for SymNMF from \cite{srinivas_SC20_GN} can be used for \ref{alg:lai_symnmf} but not for Compressed-NMF.
Finally, unlike the randomized method in \cite{rnd_HALS}, the LAI method decouples the randomization from the NMF algorithm and accordingly preserves the convergence properties, such as convergence to a stationary point \cite{BCD_park}, of existing NMF algorithms applied to the low-rank input and can be reasoned about via \cref{prop:rndlia_nmf_bnd}.
A more detailed discussion of existing randomized NMF methods and LAI-NMF is given in \cref{app:lai_vs_comp}.

\section{SymNMF via Leverage Score Sampling}
\label{sec:LVS_symNMF}
This section presents the algorithm for randomized SymNMF based on using leverage scores sampling to sketch the NLS problems in \Cref{eq:ANLS4symNMF}.
In the context of low-rank approximations leverage score sampling has been successfully used for computing CP decompositions \cite{lev_scores4_CP,SPALS_neurips}, especially of large sparse tensors.
We propose this method as suitable for large, sparse data sets such as graph data.
Though we focus on SymNMF we expect that this method would work well for standard NMF as well.
Leverage score sampling preserves not only sparsity but nonnegativity as well. 

\subsection{Leverage Score Sampling for Multiple Right Hand Sides}
Unlike in the CP decomposition, the coefficient matrix in the LS problem for low-rank matrix approximation has, in general, no special structure we can exploit to obtain fast leverage score estimates.
However, for many methods the products $\M{W}^\Tra\M{X}$ and $\M{X}\M{H}$ are the most expensive part of an NMF iteration \cite{mpi_faun_ramki}.
By computing a thin $\M{Q}\M{R}$ factorization of the matrices $\M{W}$ and $\M{H}$ at each iteration we can obtain exact leverage scores for use in sampling and avoid the expensive full matrix products involving the data matrix $\M{X}$.
The algorithm outline is given in \cref{alg:lvs_nmf}.

To formalize this idea, consider the NLS problem for updating $\M{H}$:
\begin{equation}
\label{eq:nls_update_H}
    \min_{\M{H} \ge 0}\|\M{W}\M{H}^\Tra - \M{X}\|_F,
\end{equation}
where the coefficient matrix $\M{W} \in \mathbb{R}^{m\times k}$ is much smaller than the right hand side matrix $\M{X}\in \mathbb{R}^{m\times m}$ if $k \ll n$.
Consider (approximately) solving the problem by the
update as $\M{H} := \text{Update}(\M{W}^\Tra\M{W}, \M{W}^\Tra\M{X})$.
Recall Update() was introduced for \cref{alg:lai_symnmf} and its details can be found in \cref{app:updateF}.
Computing $\M{W}^\Tra\M{W}$ and $\M{W}^\Tra\M{X}$ costs $O(mk^2)$ and $O(m^2k)$ flops.
The cost of the Update() will be denoted as $O(T(m,k))$ and is dependent on the method used.
In light of this we suggest the following randomized approach:
\begin{enumerate}
    \item Compute a thin QR-decomposition of $\M{W} = \M{Q}_W\M{R}_W$ for $O(mk^2)$ flops.
    \item Compute the leverage scores exactly using $\M{Q}_W$ and generate the sampling matrix $\M{S}_W \in \mathbb{R}^{s \times m}$ as in \Cref{eq:lev_score_S} drawing $s$ samples.
    \item Perform an Update() for the reduced problem $\min_{\M{H}\ge 0}\|\M{S}_W\M{W}\M{H}^\Tra - \M{S}_W\M{X}\|_F^2$ as $\M{H} := \text{Update}(\M{W}^\Tra \M{S}_W^\Tra \M{S}_W\M{W}, \M{W}^\Tra\M{S}_W^\Tra \M{S}_W\M{X})$. \\ (As opposed to $\M{H} := \text{Update}(\M{W}^\Tra\M{W}, \M{W}^\Tra\M{X})$.)
\end{enumerate}
The conditions for this scheme to provide speed up are roughly that $s\ll m$ and that the cost $T(m,k)$ does not dominate the overall complexity.
The key observation here is that computing the thin QR-decomposition costs only $O(mk^2)$ flops and so when a large number of right hand side (RHS) vectors is present, computing the leverage scores is not the dominating cost.
Note that this observation is relevant for problems with a similar structure to \Cref{eq:nls_update_H}. 
For example one can approximately solve an OLS problem with many RHS vectors and small coefficient matrix using this scheme.

Since the NLS problems given by \Cref{eq:ANLS4symNMF} are regularized, we propose the scheme given below for leverage score sampling:
\begin{align*}
    \Big\|
    \begin{bmatrix}
        \M{S} & 0 \\
        0 & \M{I}_k
    \end{bmatrix} 
    \Big(
    \begin{bmatrix}
    \M{H}\\
    \sqrt{\alpha}\M{I}_k
    \end{bmatrix}
    \M{W}^\Tra - 
     \begin{bmatrix}
    \M{X}\\
    \sqrt{\alpha}\M{H}^\Tra
    \end{bmatrix} \Big) \Big\|_F^2
    = 
    \Big\|
    \begin{bmatrix}
    \M{S}\M{H}\\
    \sqrt{\alpha}\M{I}_k
    \end{bmatrix}
    \M{W}^\Tra - 
     \begin{bmatrix}
    \M{S}\M{X}\\
    \sqrt{\alpha}\M{H}^\Tra
    \end{bmatrix} \Big\|_F^2 \\
    = \Big\|\M{S}\M{H}\M{W}^\Tra - \M{S}\M{X}
     \|_F^2 + \alpha \| \M{W} - \M{H} \Big\|_F^2,
\end{align*}
where the leverage score sampling matrix denoted by $\M{S}$ samples only rows of $\M{H}$ and the regularization portion is deterministically included.
A similar technique is used for the sampling of $\M{W}$ when $\M{H}$ is being updated.

\subsubsection{Complexity}
At a high level the main computational kernels of SymNMF via regularization include $\M{X}\M{H}$, $\M{W}^\Tra\M{X}$, $\M{W}^\Tra\M{W}$, and $\M{H}^\Tra\M{H}$, which cost $O(m^2k)$ flops for the products with $\M{X}$ and $O(mk^2)$ for the Gramians.
Once this is done these matrix products are used to perform an update, e.g. via (approximately) solving the NLS problem.

Leverage score sampling replaces these products with $\M{X}\M{S}_H^\Tra\M{S}_H\M{H}$, $\M{W}^\Tra\M{S}_W^\Tra\M{S}_W\M{X}$, $\M{W}^\Tra\M{S}_W^\Tra\M{S}_W\M{W}$, and $\M{H}^\Tra\M{S}_H^\Tra\M{S}_H\M{H}$,
which cost $O(msk)$ and $O(sk^2)$.
The number of samples $s$ will be discussed in more detail later in \Cref{sec:theory_nls}.
Additionally, computing the thin QR-Decomposition to obtain the leverage scores costs $O(mk^2)$.
The discrepancy between asymptotic flop costs of the deterministic method and the leverage score based method comes primarily from the difference between $m$ and $s$.

As previously stated, sampling does not generally affect the cost of the Update Rule which costs $O(T(m,k))$.
The two update rules we use are the HALS and BPP methods.
For a discussion of these rules and their properties see \cite{BCD_park,gillis_book}.

\begin{varalgorithm}{LvS-SymNMF}
 \caption{Randomized SymNMF via Leverage Score Sampling} 
 \label{alg:lvs_nmf} 
\begin{algorithmic}[1]
\Require{Symmetric data matrix $\M{X} \in\mathbb{R}^{m \times m}$, target rank $k$, threshold $\tau$}
 \Ensure{$\{\M{H}\}$ as the factors for an approximate rank-$k$ SymNMF of $\M{X}$.}
 \Function{$[\M{H}] = $ LvS-SymNMF}{$\M{X},k,s,\tau$}
\State{Randomly initialize $\M{H}\in \mathbb{R}^{m \times k}$}
 \While{Convergence Crit. Not Met}
 \State{$\M{R}_H :=$ chol$(\M{H}^\Tra\M{H})$} \Comment{Compute upper triangular Cholesky factor $\M{R}$}
 \State{Solve $\M{H}\M{R}_H^{-1} = \M{Q}_H$} \Comment{Triangular solve for $\M{Q}_H$}
 \State{Compute $\V{p}_H[i] = \|\M{Q}_H[i,:]\|_2^2$ for $i=1:m$} 
 \Comment{Compute leverage scores of $\M{H}$}
 \State{Construct $\M{S}_H$ using $\V{p}_H$ according to Eqn. (\ref{eq:lev_score_S})}
\State{$\M{Y}_H := \M{H}^\Tra\M{S}_H^\Tra\M{S}_H\M{X}^\Tra + \alpha \M{H}^\Tra$}
 \State{$\M{G}_H := \M{H}^\Tra\M{S}_H^\Tra \M{S}_H \M{H} + \alpha\M{I}$}
    \State{$\M{W} := \text{Update}\Big(\M{G_H},\M{Y}_H\Big)$} \Comment{See \Cref{app:updateF} for description of Update()}
\State{$\M{R}_W := $ chol$(\M{W}^\Tra\M{W})$} \Comment{Compute upper triangular Cholesky factor $\M{R}_W$}
\State{Solve $\M{W}\M{R}_W^{-1} = \M{Q}_W$} \Comment{Triangular solve for $\M{Q}_W$}
\State{Compute $\V{p}_W[i] = \|\M{Q}_W[i,:]\|_2^2$ for $i=1:m$} 
\State{Construct $\M{S}_W$ using $\V{p}_W$ according to Eqn. (\ref{eq:lev_score_S})}
\State{$\M{Y}_W := \M{W}^\Tra\M{S}_W^\Tra\M{S}_W\M{X} + \alpha \M{W}^\Tra$}
 \State{$\M{G}_W := \M{W}^\Tra\M{S}_W^\Tra\M{S}_W\M{W} + \alpha \M{I}$}
    \State{$\M{H} := \text{Update}\Big(\M{G_W},\M{Y}_W\Big)$}
\EndWhile
\EndFunction
\end{algorithmic}
\end{varalgorithm}

\subsection{Practical Considerations for SymNMF via Leverage Score Sampling}
This section describes the implementation of \cref{alg:lvs_nmf}.
We discuss how the leverage scores are computed and the use of a hybrid leverage score sampling scheme which was introduced in \cite{lev_scores4_CP}. 
A theoretical analysis of Hybrid Sampling is given later in \Cref{sec:hybrid_samp_theory}.

For computing the thin QR-decomposition of a full rank matrix $\M{F} \in \mathbb{R}^{m \times k}$ where $\M{F} = \M{Q}\M{R}$, $\M{Q} \in \mathbb{R}^{m \times k}$ and $\M{R}\in \mathbb{R}^{k \times k}$, we use the CholeskyQR algorithm.
CholeskyQR computes $\M{F}^\Tra\M{F}$, then computes the Cholesky Decomposition $\M{F}^\Tra\M{F} = \M{R}^\Tra\M{R}$ where $\M{R}$ is $k \times k$ and upper triangular, and lastly solves the triangular linear system $\M{F} = \M{Q}\M{R}$ to obtain $\M{Q}$.
CholeskyQR is numerically less stable than Householder QR but faster and empirically we find that it works well for computing leverage scores.

Hybrid leverage score sampling samples a subset of rows deterministically and then randomly samples from the remaining rows.
We find that Hybrid Sampling is crucial for good performance in our empirical results and we offer a rigorous analysis of its theoretical performance in \Cref{sec:hybrid_samp_theory}.
Hybrid Sampling was proposed and shown to be effective for computing CP decompositions of sparse tensors in \cite{lev_scores4_CP}.
A similar method has been used and theoretically analyzed before for the column subset selection problem \cite{deterministic_lvs_sampling}.

In Hybrid Sampling, a threshold $\tau \in [0,1]$ is used as a hyperparameter.
When sampling according to the leverage score distribution, all rows that satisfy $p_i \ge \tau$ are deterministically selected.
Let the full set of row indices be $\mathscr{I}$, the set that is deterministically included be $\mathscr{I}_D$ and $s_D = |\mathscr{I}_D|$, and the rest be $\mathscr{I}_R = \mathscr{I} \setminus \mathscr{I}_D$ with $s_R = |\mathscr{I}_R|$, the remaining indices from which random samples are drawn.

Let $\M{Q}_A$ be a $m \times k$ matrix with othonormal columns that is being sampled.
Assume without loss of generality that the rows of $\M{Q}_A$ permuted conformally to the sets $\mathscr{I}_D$ and $\mathscr{I}_R$ then the hybrid sampling matrix takes the form
\begin{equation} \label{eq:SH}
    \M{S}_{H}  
    = \begin{bmatrix}
    \M{S}_D & \M{0}_{s_D \times (m-s_D)} \\
    \M{0}_{(s_R) \times s_D} & \M{S}_R
    \end{bmatrix}
    \in \mathbb{R}^{s_{H} \times m}
\end{equation}
where $\M{S}_R \in \mathbb{R}^{s_R \times (m-s_D)}$ and $\M{S}_D \in \mathbb{R}^{s_D \times s_D}$ is a permutation matrix for the deterministically included portion defined as
\begin{equation}
    (\M{S}_D)_{ji} = 
    \begin{cases}
        1, \ \text{if row $i$ is the $j$th deterministic sample} \\
        0, \ \text{otherwise}.
    \end{cases}
\end{equation}
$\M{S}_D$ is often included in notation to make the deterministic inclusion aspect of the equations explicit.
The submatrix $\M{S}_{R}$ is a leverage score sampling matrix as defined in \Cref{eq:lev_score_S} but just of the indices in $\mathscr{I}_R$.
When sampling for $\M{S}_R$, rows that were sampled during the deterministic phase are not considered and the leverage scores are renormalized appropriately.
The new leverage scores probabilities are $\tilde{p}_i = \frac{l_i}{k - \theta}$ where $\theta = \sum_{i\in\mathscr{I}_D} l_i(\M{A})$.


\subsection{Analysis of LvS-SymNMF}
\label{sec:theory_lvsSymNMF}
We now explore some theoretical questions relevant to \cref{alg:lvs_nmf}.
Specifically we seek to answer two questions: 1) Can results for leverage score sampling for OLS problems be extended to NLS problems? and 2) What is the sample complexity of the Hybrid Sampling method from \cite{lev_scores4_CP}?

\subsubsection{Leverage Score Sampling for NLS Problems}
\label{sec:theory_nls}
Error bounds and corresponding sampling complexities for sketching the ordinary least squares (OLS) problem have been previously shown in a number of works \cite{sarlos_FLSA,sarlos_og}.
Larsen and Kolda gave a bound and proof structure in their work on computing a randomized CP decomposition \cite{lev_scores4_CP}.
Boutsidis and Drineas \cite{drineas_Rnd_NNLS} considered using the randomized Hadamard transform for solving the NLS problem.
Our proof structure generally follows that in \cite{sarlos_FLSA} for OLS problems.

This section provides a proof of \Cref{thm:nls_bnd} which provides generalized error bounds and sampling complexities for the Nonnegative Least Squares problem (NLS).
We note that our results hold for all constrained least squares (CLS) problems as far as the problem remains convex.
The NLS problem in \Cref{eq:nls} is a convex optimization problem but unlike the case of OLS, does not yield a closed form solution.
We are concerned only with the case where $\M{A}$ is full rank and overdetermined. 
The NLS error bound and sample complexity can be derived based on results by Daniels for the perturbation of Convex Quadratic Programs \cite{qp_pert1} and leverage score sampling for the OLS problem \cite{sarlos_FLSA,Mahoney16,larsen2022sketching,sketching4NLA_woodruff}.

The proof uses two \textit{Structural Conditions} (SC's) such that if both are true then the error bound in \cref{thm:nls_bnd} for the NLS problem holds.
We first discuss the sampling complexity and probability conditions under which these SC's hold.

Let $\M{A} = \M{U}_A\M{\Sigma}_A\M{V}_A^\Tra$ be the thin SVD of the coefficient matrix in \Cref{eq:nls} where $\M{U}_A \in \mathbb{R}^{m \times n}$, $\M{\Sigma}_A \in \mathbb{R}^{n \times n}$, and $\M{V}_A \in \mathbb{R}^{n \times n}$.
Let $\M{S}$ be a leverage score sampling matrix for $\M{A}$.
The first Structural Condition (SC1) is
\begin{equation}
    \label{eq:sc1}
    1-\epsilon_s \le \sigma_{i}(\M{U}_A^\Tra\M{S}^\Tra\M{S}\M{U}_A) \le 1+\epsilon_s
\end{equation}
for all $i \in [k]$, some $\epsilon_s \in (0,1)$, and where $\sigma_{i}(\M{A})$ is the $i$th singular value of $\M{A}$.
The second SC (SC2) is
\begin{equation}
    \label{eq:sc2}
    \|\M{U}_A^\Tra\V{r}_{nls} 
    - \M{U}_A^\Tra\M{S}^\Tra \M{S}\V{r}_{nls}\|_2 \le \sqrt{\epsilon_r} \frac{\|\V{r}_{nls}\|_2}{\sqrt{2}}
\end{equation}
for some $\epsilon_r \in (0,1)$.
The leverage score sketching matrix $\M{S}$ satisfies SC1 and SC2 with high probability (given sufficiently many samples).
For SC1 this is shown in \cite{sketching4NLA_woodruff} and for SC2 it can be shown by using \cref{thm:rnd_matmul} on the product $\M{U}_A^\Tra\V{r}_{nls}$ \cite{Mahoney16}.
We include the associated theorems for these statements in \cref{app:sc_theorems}.

The second result we make use of is a bound on the perturbation of convex Quadratic Programs (QP's).
Convex QP's have the general form
\begin{align}
\label{eq:QP_general}
    \min_{ \V{x} \in \mathscr{C}}Q(\V{x}) 
    = \min_{ \V{x} \in \mathscr{C} }\frac{1}{2} \V{x}^\Tra \M{K} \V{x} - \V{x}^\Tra \V{d}
\end{align}
where $\M{K}$ is a square positive semi-definite matrix, $\V{d}$ is a vector, and $\mathscr{C}$ is a convex set.
Consider the NLS problems in \Cref{eq:nls} and
\begin{equation}
    \label{eq:sampled_nls}
    \min_{\V{x} \ge 0}\|\M{S}\M{A}\V{x} - \M{S}\V{b} \|_2
\end{equation}
and the equivalent QP's, respectively,
\begin{align}
\label{eq:QP_def}
    \min_{ \V{x} \ge 0 }Q(\V{x}) 
    = \min_{ \V{x} \ge 0 }\frac{1}{2} \V{x}^\Tra\M{A}^\Tra\M{A}\V{x} - \V{x}^\Tra\M{A}^\Tra\V{b},
\end{align}
\begin{align}
    \label{eq:QP_pert_def}
    \min_{\V{x} \ge 0}\hat{Q}(\V{x}) 
    = \min_{ \V{x} \ge 0 }\frac{1}{2} \V{x}^\Tra\M{A}^\Tra\M{S}^\Tra\M{S}\M{A}\V{x} - \V{x}^\Tra\M{A}^\Tra\M{S}^\Tra\V{b}.
\end{align}
The objective function give by \ref{eq:QP_pert_def} can be interpreted as a perturbed version of the objective function given by \ref{eq:QP_def} with perturbed parameters $\hat{\M{K}} = \M{A}^\Tra\M{S}^\Tra\M{S}\M{A}$ and $\hat{\V{d}} = \M{A}^\Tra\M{S}^\Tra\V{b}$.
We make use of the following, which is a straightforward consequence of the fact that $\nabla f(\V{x}^{*})^\Tra(\V{x} - \V{x}^{*}) \ge 0$ for all $\V{x} \in \mathscr{C}$, which is the characterization of a minima (at $\V{x}^{*}$) of a differentiable convex function $f(\V{x})$, 
\begin{lemma}[{\cite[Equation 2.4]{qp_pert1}}] 
    \label{lem:daneils_qp_ineq}
    Convex Quadratic Program Inequality:
    $Q(\V{x})$ and $\hat{Q}(\V{x})$ from \Cref{eq:QP_def} and \Cref{eq:QP_pert_def} satisfy
    \begin{equation}
\label{eq:qp_ineq1}
    (\hat{\V{x}}_{nls} - \V{x}_{nls})^\Tra \Big[\nabla \hat{Q}(\hat{\V{x}}_{nls}) - \nabla\hat{Q}(\V{x}_{nls}) \Big]
   \leq
    (\hat{\V{x}}_{nls} - \V{x}_{nls})^\Tra \Big[ \nabla Q(\V{x}_{nls}) - \nabla\hat{Q}(\V{x}_{nls})\Big] 
\end{equation}
where $\nabla Q(\V{x}) = \M{K}\V{x} - \V{d}$ is the gradient of $Q(\V{x})$ at $\V{x}$, $\nabla \hat{Q}(\V{x}) = \M[\hat]{K}\V{x} - \V[\hat]{d}$ is the gradient of $\hat{Q}(\V{x})$ at $\V{x}$, and $\V{x}_{nls}$ and $\hat{\V{x}}_{nls}$ are the minimizers of \cref{eq:QP_def} and \cref{eq:QP_pert_def} respectively.
\end{lemma}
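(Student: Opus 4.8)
The plan is to derive \cref{eq:qp_ineq1} directly from the first-order (variational) optimality characterization of a constrained minimizer of a differentiable convex function, applied separately to each of the two quadratic programs and then combined. The only ingredients needed are that $Q$ and $\hat{Q}$ are convex (their Hessians $\M{A}^\Tra\M{A}$ and $\M{A}^\Tra\M{S}^\Tra\M{S}\M{A}$ are positive semidefinite) and differentiable, and — crucially — that \cref{eq:QP_def} and \cref{eq:QP_pert_def} share the same feasible set $\mathscr{C} = \{\V{x} : \V{x} \ge 0\}$, so that the minimizer of one problem is a feasible test point in the variational inequality of the other.

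First I would invoke the optimality condition for $\hat{Q}$ at its minimizer $\hat{\V{x}}_{nls}$: since $\hat{Q}$ is convex and differentiable on the convex set $\mathscr{C}$, we have $\nabla\hat{Q}(\hat{\V{x}}_{nls})^\Tra(\V{x} - \hat{\V{x}}_{nls}) \ge 0$ for every $\V{x} \in \mathscr{C}$; choosing $\V{x} = \V{x}_{nls} \in \mathscr{C}$ and rearranging gives $(\hat{\V{x}}_{nls} - \V{x}_{nls})^\Tra \nabla\hat{Q}(\hat{\V{x}}_{nls}) \le 0$. Symmetrically, optimality of $\V{x}_{nls}$ for $Q$ gives $\nabla Q(\V{x}_{nls})^\Tra(\V{x} - \V{x}_{nls}) \ge 0$ for all $\V{x} \in \mathscr{C}$, and taking $\V{x} = \hat{\V{x}}_{nls} \in \mathscr{C}$ yields $(\hat{\V{x}}_{nls} - \V{x}_{nls})^\Tra \nabla Q(\V{x}_{nls}) \ge 0$. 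Combining the two inequalities produces the chain $(\hat{\V{x}}_{nls} - \V{x}_{nls})^\Tra \nabla\hat{Q}(\hat{\V{x}}_{nls}) \le 0 \le (\hat{\V{x}}_{nls} - \V{x}_{nls})^\Tra \nabla Q(\V{x}_{nls})$. Finally I would subtract the common term $(\hat{\V{x}}_{nls} - \V{x}_{nls})^\Tra \nabla\hat{Q}(\V{x}_{nls})$ from both ends: the left end becomes $(\hat{\V{x}}_{nls} - \V{x}_{nls})^\Tra\big[\nabla\hat{Q}(\hat{\V{x}}_{nls}) - \nabla\hat{Q}(\V{x}_{nls})\big]$ and the right end becomes $(\hat{\V{x}}_{nls} - \V{x}_{nls})^\Tra\big[\nabla Q(\V{x}_{nls}) - \nabla\hat{Q}(\V{x}_{nls})\big]$, which is precisely \cref{eq:qp_ineq1}.

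There is no genuine obstacle here; the lemma is essentially a two-line consequence of the variational inequality. The only point that deserves to be stated explicitly — and the one I would emphasize — is that the feasible sets of the two programs coincide, which is exactly what legitimizes the cross-substitution of each minimizer into the other's optimality condition. The identical argument goes through for any common convex constraint set $\mathscr{C}$, which is why the resulting bound in \cref{thm:nls_bnd} extends from nonnegative least squares to arbitrary convex least squares problems.
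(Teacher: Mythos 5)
Your proposal is correct and is exactly the argument the paper has in mind: it cites the variational inequality $\nabla f(\V{x}^*)^\Tra(\V{x}-\V{x}^*)\ge 0$ as the sole ingredient and leaves the two cross-substitutions (each minimizer tested in the other program's optimality condition, then subtracting the common term $(\hat{\V{x}}_{nls}-\V{x}_{nls})^\Tra\nabla\hat{Q}(\V{x}_{nls})$) implicit. Your explicit emphasis that the two programs share the feasible set $\mathscr{C}$ is the right point to highlight, and it matches the paper's remark that the bound extends to arbitrary convex least squares problems.
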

All the tools needed for the proof of \cref{thm:nls_bnd} have now been established. 
\begin{proof}[Proof of Theorem~\ref{thm:nls_bnd}]
Substituting the QP formulations of the original and sampled NLS problems into \Cref{eq:qp_ineq1}, denoting $\hat{\V{x}}_{nls} - \V{x}_{nls} = \V{y}$ and parameterizing in terms of the matrix $\M{U}_A$ by writing $\M{Z} = \M{\Sigma}_A\M{V}^\Tra_A$ and $\M{Z}\V{y} = \V{z}$, the right hand side becomes 
\begin{align*}
    \V{y}^\Tra \Big[\nabla \hat{Q}(\hat{\V{x}}_{nls}) - \nabla\hat{Q}(\V{x}_{nls}) \Big]
    =
    \V{y}^\Tra\Big[\hat{\M{K}}\hat{\V{x}}_{nls} - \hat{\V{d}} - \hat{\M{K}}\V{x}_{nls} + \hat{\V{d}}\Big]
    = \V{y}^\Tra\hat{\M{K}}\V{y}
    = \\ \V{y}^\Tra\M{Z}^\Tra\M{U}_A^\Tra\M{S}^\Tra\M{S}\M{U}_A\M{Z}\V{y} 
    = \V{z}^\Tra(\M{U}_A^\Tra\M{S}^\Tra\M{S}\M{U}_A)\V{z}
    = \|\M{S}\M{U}_A\V{z}\|_2^2
    \ge  \sigma_{min}(\M{S}\M{U}_A)^{2} \|\V{z}\|_2^2.
\end{align*}
The left hand side of \Cref{eq:qp_ineq1} yields
\begin{align*}
    \V{y}^\Tra \Big[ \nabla Q(\V{x}_{nls}) - \nabla\hat{Q}(\V{x}_{nls})\Big]
    = \V{y}^\Tra \Big[\M{K}\V{x}_{nls} - \V{d} - \hat{\M{K}}\V{x}_{nls} + \hat{\V{d}}\Big] \\
    = \V{z}^\Tra \Big[\M{U}_A^\Tra\M{U}_A\M{Z}\V{x}_{nls} - (\M{U}_A^\Tra\M{S}^\Tra)(\M{S}\M{U}_A\M{Z})\V{x}_{nls} - \M{U}_A^\Tra\V{b}
    + (\M{U}_A^\Tra\M{S}^\Tra\M{S})\V{b}\Big] \\
    = \V{z}^\Tra \Big[ \M{U}_A^\Tra(\M{A}\V{x}_{nls} 
    -\V{b}) 
    - \M{U}_A^\Tra\M{S}^\Tra \M{S} (\M{A}\V{x}_{nls} 
    - \V{b}) \Big] \\
    = \V{z}^\Tra\Big[\M{U}_A^\Tra\V{r}_{nls} 
    - \M{U}_A^\Tra\M{S}^\Tra \M{S}\V{r}_{nls}\Big].
\end{align*}
Recall that $\V{r}_{nls}  = \M{A}\V{x}_{nls} - \V{b}$ is the NLS residual vector.
Combining the previous two equations back into \cref{lem:daneils_qp_ineq} we have 
\begin{align*}
    \V{z}^\Tra \Big[\M{U}_A^\Tra\V{r}_{nls} 
    - \M{U}_A^\Tra\M{S}^\Tra \M{S}\V{r}_{nls}\Big] 
    &\ge \sigma_{min}(\M{S}\M{U}_A)^2\|\V{z}\|_2^2,
    \end{align*}
    which implies
\begin{equation}
\label{eq:intermediate_ineq1}
    \|\V{z}\|_2 \Big\| \M{U}_A^\Tra\V{r}_{nls} 
    - \M{U}_A^\Tra\M{S}^\Tra \M{S}\V{r}_{nls}\Big\|_2 
    \ge \sigma_{min}(\M{S}\M{U}_A)^2\|\M{\Sigma}_A\M{V}_A(\hat{\V{x}}_{nls} - \V{x}_{nls})\|_2^2.  
\end{equation}
We now invoke the SC's with sufficient samples
\begin{equation*}
    s \ge k \max(C \log(k/\delta),1/(\delta \epsilon_r)) \ \ \text{where} \ \ C = 144/(1-\sqrt{2})^2
\end{equation*}
so that both SC1, as in \cref{thm:su_bound} with $\epsilon_s = 1-1/\sqrt{2}$ meaning that $\sigma_{min}(\M{S}\M{U}_A)^2 \ge 1/\sqrt{2}$, and SC2, as in \Cref{eq:sc2} as in \cref{thm:fro_bound} with $\M{U}_A^\Tra\V{r}_{nls}$ being the matrix product to approximate, both hold with high probability.
With these parameters each SC holds with at least probability $1-\delta/2$ so they both hold with at least probability $1-\delta$.
Thus, from \Cref{eq:intermediate_ineq1} we have that if both SC's hold, then
\begin{align*}
    \frac{1}{\sqrt{2}}\sigma_{min}(\M{A})\|(\hat{\V{x}}_{nls} - \V{x}_{nls})\|_2
    \leq
    \frac{\sqrt{\epsilon_r}\|\V{r}_{nls}\|_2}{\sqrt{2}}, 
\end{align*}
from which the claimed upper bound immediately follows.
\end{proof}

\paragraph{Discussion}
As stated previously \Cref{thm:nls_bnd} is applicable to all problems of the form
\begin{equation}
    \label{eq:cls}
    \min_{\V{x} \in \mathscr{C} }\|\M{A}\V{x} - \V{b}\|_2.
\end{equation}
with overdetermined, full rank $\M{A}$, i.e., all convex LS problems.
This is due to the fact that \cref{eq:qp_ineq1} is a result of convexity.
Additionally, there are other types of sketches which can be used.
For example, the Subsampled Randomized Hadamard Transform discussed in \cite{sarlos_FLSA} and the `sparse embedding matrices' from \cite{sketching4NLA_woodruff} can be applied.
This is because these sketches can satisfy the two SC's with an appropriate number of samples.

\subsubsection{Analysis of Hybrid Sampling}
\label{sec:hybrid_samp_theory}
This section presents our theoretical results for Hybrid Sampling.
According to the proof of \cref{thm:nls_bnd} to show that Hybrid Sampling works for OLS and NLS problems we need only show that the Hybrid Sampling matrix satisfies the two SC's in \Cref{eq:sc1} and \Cref{eq:sc2}.
Applying these results in the proof structure used for \cref{thm:nls_bnd} will yield sampling complexities, theoretical guarantees, and algorithms for Hybrid Sampling and solving NLS problems.

The first Structural Condition for Hybrid Sampling is given by 
\cref{thm:hybrid_su_bound}.
\begin{lemma}
\label{thm:hybrid_su_bound} 
    Given $\M{A} \in \mathbb{R}^{m\times k}$ consider its thin SVD $\M{U}_A\M{\Sigma}_A\M{V}_A^\Tra$ and its row leverage scores $l_i(\M{A})$ for each row $i \in [m]$, where $[m]$ denotes the set of integers from 1 to $m$.
    Let $\mathscr{I}_D$ be the set of deterministically included rows and define $\mathscr{I}_R = [m]\setminus \mathscr{I}_D$ as the set of rows to be sampled from.
    Let $\M{S}_{H} \in \mathbb{R}^{s \times m}$ be a row sampling and rescaling matrix contructed via sampling with replacement on $\mathscr{I}_R$ with $s_R$ samples drawn according to the renormalized leverage scores $l_i(\M{A})$, $s_D$ deterministic samples taken from $\mathscr{I}_D$ where $|\mathscr{I}_D| = s_D$, and $s = s_D +s_R$ .
    Define $\theta = \sum_{i \in \mathscr{I}_D} l_i(\M{A})$ and $\xi = k - \theta$.
    If $\frac{144\xi\log(2k/\delta)}{\epsilon_s^2} < s_R$
    then the following equation holds with probability at least $1-\delta$
    \begin{equation*}
        1-\epsilon_s \le \sigma^2_i(\M{S}_H\M{U}_A) \le 1+ \epsilon_s
    \end{equation*}
    for all $i \in [m]$ and $\epsilon_s,\delta \in (0,1)$.
\end{lemma}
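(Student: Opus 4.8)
The plan is to mirror the proof of the purely-random structural condition \cref{thm:su_bound}, peeling off the deterministic block of $\M{S}_H$ and controlling only the random block with a matrix concentration inequality. Write $\V{u}_i := \M{U}_A[i,:]^\Tra \in \mathbb{R}^k$, so that $l_i(\M{A}) = \|\V{u}_i\|_2^2$ and $\sum_{i\in[m]}\V{u}_i\V{u}_i^\Tra = \M{U}_A^\Tra\M{U}_A = \M{I}_k$. Expanding $\M{S}_H$ in the block form \cref{eq:SH} gives
\[
\M{U}_A^\Tra\M{S}_H^\Tra\M{S}_H\M{U}_A = \M{M}_D + \M{Y}, \quad \M{M}_D := \sum_{i\in\mathscr{I}_D}\V{u}_i\V{u}_i^\Tra, \quad \M{Y} := \sum_{r=1}^{s_R}\M{Y}_r, \quad \M{Y}_r := \tfrac{1}{s_R\tilde{p}_{i_r}}\V{u}_{i_r}\V{u}_{i_r}^\Tra,
\]
where $i_1,\dots,i_{s_R}$ are i.i.d.\ draws from $\mathscr{I}_R$ with $\tilde{p}_i = l_i(\M{A})/\xi$. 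Setting $\M{M}_R := \sum_{i\in\mathscr{I}_R}\V{u}_i\V{u}_i^\Tra$ we have $\M{M}_D + \M{M}_R = \M{I}_k$, and since $\sum_{i\in\mathscr{I}_R}\tilde{p}_i = \xi/\xi = 1$ we get $\mathbb{E}[\M{Y}_r] = \tfrac{1}{s_R}\M{M}_R$, hence $\mathbb{E}[\M{Y}] = \M{M}_R$ and $\mathbb{E}[\M{U}_A^\Tra\M{S}_H^\Tra\M{S}_H\M{U}_A] = \M{I}_k$. Since $\M{U}_A^\Tra\M{S}_H^\Tra\M{S}_H\M{U}_A - \M{I}_k = \M{Y} - \M{M}_R$ and $\sigma_i^2(\M{S}_H\M{U}_A) = \lambda_i(\M{U}_A^\Tra\M{S}_H^\Tra\M{S}_H\M{U}_A)$, Weyl's inequality reduces the claim to proving $\|\M{Y} - \M{M}_R\|_2 \le \epsilon_s$ with probability at least $1-\delta$.

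For that concentration estimate, the only place the renormalization enters is the uniform per-sample bound: for every $i \in \mathscr{I}_R$,
\[
\frac{\|\V{u}_i\|_2^2}{s_R\tilde{p}_i} = \frac{l_i(\M{A})}{s_R\,(l_i(\M{A})/\xi)} = \frac{\xi}{s_R},
\]
so $\M{0} \preceq \M{Y}_r \preceq \tfrac{\xi}{s_R}\M{I}_k$, and moreover $\M{Y}_r^2 = \tfrac{\xi}{s_R}\M{Y}_r$, whence $\bigl\|\sum_{r=1}^{s_R}\mathbb{E}[\M{Y}_r^2]\bigr\|_2 = \tfrac{\xi}{s_R}\|\M{M}_R\|_2 \le \tfrac{\xi}{s_R}$. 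These are exactly the analogues of the bound $k/s$ and variance $k/s$ that drive the analysis of leverage-score sampling of an orthonormal matrix, with $\xi = k-\theta$ in place of $k$ and $s_R$ in place of $s$. Feeding these two quantities into a matrix Bernstein (equivalently, matrix Chernoff) inequality in dimension $k$ — the same inequality underlying \cref{thm:su_bound} — gives $\Pr[\|\M{Y}-\M{M}_R\|_2 > \epsilon_s] \le 2k\exp(-c\,s_R\epsilon_s^2/\xi)$ for an absolute constant $c$, so the hypothesis $s_R > 144\xi\log(2k/\delta)/\epsilon_s^2$ forces the right-hand side to be at most $\delta$ (the generous constant $144$ absorbing $c$ and the lower-order $L\epsilon_s/3$ term in Bernstein's bound). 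Equivalently, the proof of \cref{thm:su_bound} uses only the identity $\sum_i \V{u}_i\V{u}_i^\Tra = \M{I}$ together with the uniform bound $\|\V{u}_i\|_2^2/(sp_i) \le k/s$; both hold here for the random block with $\M{I}$ replaced by $\M{M}_R$ and $k/s$ by $\xi/s_R$, so that argument transfers essentially verbatim.

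The routine content is the block bookkeeping of $\M{S}_H$ and of the renormalized sampling distribution. The two points that actually need care are: (i) subtracting the deterministic mass $\M{M}_D$ moves the concentration target from $\M{I}_k$ to $\M{M}_R$ but leaves the deviation $\M{Y}-\M{M}_R = \M{U}_A^\Tra\M{S}_H^\Tra\M{S}_H\M{U}_A - \M{I}_k$ unchanged, so an additive concentration bound still suffices; and (ii) renormalizing the leverage scores over $\mathscr{I}_R$ makes the operator-norm and variance proxies scale with $\xi$ rather than $k$, which is precisely the quantitative advantage the hybrid scheme buys when the deterministically included rows already carry a large fraction $\theta$ of the total leverage mass $k$. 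The main thing to watch is that $\M{M}_R$ can be rank-deficient with small eigenvalues, so one must use an additive-form bound (matrix Bernstein) rather than a multiplicative Chernoff bound stated relative to $\lambda_{\min}(\M{M}_R)$.
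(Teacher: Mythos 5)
Your proposal is correct and follows essentially the same route as the paper's proof: split the Gram matrix $\M{U}_A^\Tra\M{S}_H^\Tra\M{S}_H\M{U}_A$ into the deterministic block plus a sum of i.i.d.\ rescaled rank-one samples from $\mathscr{I}_R$, observe that the deviation from $\M{I}_k$ equals the deviation of the random block from $\M{M}_R$, and apply a matrix Chernoff/Bernstein bound with operator-norm and variance proxies scaling as $\xi$ rather than $k$. The only cosmetic difference is that the paper centers each summand (setting $\M{X}_j = \M{I}_k - \M{G}_D - \tfrac{1}{\tilde{\pi}_j}\V{q}_j\V{q}_j^\Tra$ so the average is exactly $\M{I}_k - \M{U}_A^\Tra\M{S}_H^\Tra\M{S}_H\M{U}_A$) while you keep uncentered summands and invoke Weyl's inequality; these are equivalent.
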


This proof is a modification of the proofs in \cite{sketching4NLA_woodruff,Ismail_row_samp} for proving the analogous statement for standard leverage score sampling, which apply a Matrix Chernoff Bound.
A statement of the Matrix Chernoff Bound is available in the Appendix as \cref{thm:matrix_chernoff}.
This theorem allows us to reason about the quantity $Pr\big[\|\M{W}\|_2 > \epsilon \big]$ where $\M{W} = \frac{1}{s_R}\sum_{j=1}^{s_R}\M{X}_j$ and $\M{X}_j$ are draws of an independent, symmetric random matrix $\M{X} \in \mathbb{R}^{k \times k}$.
Additionally let $\mathscr{S}_R$ be the set of samples or draws of $\M{X}$ such that $|\mathscr{S}_R| = s_R$.
To prove \cref{thm:hybrid_su_bound} we set $\M{W} = \M{I} - \M{U}_A^\Tra\M{S}_{H}^\Tra\M{S}_{H}\M{U}_A$, where $\M{S}_{H}$ is given by \cref{eq:SH}.

\begin{proof}
We assume an appropriate row permutation conformal with \cref{eq:SH} such that $\M{U}_A = 
\begin{bmatrix} 
\M{U}_D \\
\M{U}_R
\end{bmatrix}$
where $\M{U}_D \in \mathbb{R}^{(s_D \times k)}$ and $\M{U}_R \in\mathbb{R}^{(m-s_D) \times k}$.
Define $\M{G}_D = (\M{S}_D\M{U}_D)^\Tra\M{S}_D\M{U}_D = \M{U}_D^\Tra\M{U}_D$ and $\M{G}_R = \M{U}_R^\Tra\M{U}_R$.
Then we have $(\M{S}_{H}\M{U}_A)^\Tra(\M{S}_{H}\M{U}_A) = \M{G}_D + (\M{S}_{R}\M{U}_R)^\Tra(\M{S}_{R}\M{U}_R)$.
Let $\V{q}$ be a random row sample drawn from $\M{U}_R$ according to the distribution given by $\tilde{p}_i = \| \V{u}_i \|_2^2 / \| \M{U}_R \|_F^2 = \| \V{u}_i \|_2^2 / \xi$, the corresponding leverage score probability for $i \in \mathscr{I}_R$, and let $\tilde{\pi}$ be $\V{q}$'s leverage score.
Define $\V{q}_j$ for $j \in [s_R]$ to be a sequence of independent draws of $\V{q}$, so that $\V{q}_j=\V{u}_i$ and $\tilde{\pi}_j=\tilde{p}_i$ if the $i$th row of $\M{U}$ is the $j$th sample.
Let $\M{X}_j = \M{I}_k - \M{G}_D - \frac{1}{\tilde{\pi}_j}\V{q}_j\V{q}_j^\Tra$, which is an independent, symmetric random matrix drawn from $\M{X} = \M{I}_k - \M{G}_D - \frac{1}{\tilde{\pi}}\V{q}\V{q}^\Tra$.
Note that for any $i$ we have $\frac{\|\V{u}_i\|_2^2}{\tilde{p}_i} = \frac{\|\V{u}_i\|_2^2}{\|\V{u}_i\|_2^2}\xi = \xi$.

Then
\begin{align*}
    \M{W} = 
    \frac{1}{s_R}\sum_{j=1}^{s_R} \M{X}_j = \frac{1}{s_R}\sum_{j=1}^{s_R}  \Big(\M{I}_k - \M{G}_D - \frac{1}{\tilde{\pi}_j}\V{q}_j\V{q}_j^\Tra \Big) 
    = \M{I}_k - \M{G}_D - \sum_{j=1}^{s_R} \frac{\V{q}_j}{\sqrt{s_R\tilde{\pi}_j}}  \frac{\V{q}_j^\Tra}{\sqrt{s_R\tilde{\pi}_j}}  \\
    = \M{I}_k - \M{G}_D - (\M{S}_R\M{U}_R)^\Tra\M{S}_R\M{U}_R 
    = \M{I}_k - (\M{S}_{H}\M{U}_A)^\Tra\M{S}_{H}\M{U}_A.
\end{align*}
So $\M{I}_k - (\M{S}_{H}\M{U}_A)^\Tra\M{S}_{H}\M{U}_A$ is a sum of symmetric, independent random matrices.
Next, to apply \cref{thm:matrix_chernoff} we need to verify  three conditions: $\E[\M{X}] = 0$, $\|\M{X}\|_2 \le \gamma$, and $\|\E[\M{X}^\Tra\M{X}]\|_2 \le \nu^2$, where $\gamma$ and $\nu^2$ are bounds used in \cref{thm:matrix_chernoff} to be derived.

First, we prove $\E[\M{X}] = \M{0}$.
Observe that $\E[\frac{1}{\tilde{\pi}}\V{q}\V{q}^\Tra] = \M{G}_R$, then
\begin{align*}
    \E[\M{X}] 
    = \E[\M{I}_k - (\M{S}_D\M{U}_D)^\Tra(\M{S}_D\M{U}_D) - \frac{1}{\tilde{\pi}}\V{q}\V{q}^\Tra]
    = \M{G}_R - \E[\frac{1}{\tilde{\pi}} \V{q}\V{q}^\Tra]
    = \M{0}_{k\times k}
\end{align*}

Second, bound $\|\M{X}\|_2 \le \gamma$.
Observe $\frac{1}{\tilde{\pi}}\V{q}\V{q}^\Tra$ is a rank-1 matrix with its 2-norm bounded by $\max_{i\in {\cal I}_R} \frac{\|\V{u}_i\|_2^2}{\tilde{p}_i} = \xi$. 
Then 
\begin{align*}
    \|\M{X}\|_2
    \le \|\M{I}_k - (\M{S}_D\M{U}_D)^\Tra(\M{S}_D\M{U}_D) \|_2 + \| \frac{1}{\tilde{\pi}}\V{q}\V{q}^\Tra\|_2 
    \leq 1 + \xi.
\end{align*}
So $\gamma = 1 + \xi$ gives the needed bound.

Finally, we show $\|\E[\M{X}^\Tra\M{X}]\| \le \nu^2$.
By expanding $\E[\M{X}^\Tra\M{X}]$ and using the fact that
\begin{align*}
    \E\big[(\V{q}\V{q}^\Tra)(\V{q}\V{q}^\Tra)/\tilde{\pi}^2 \big] 
    = \sum_{i \in \mathscr{I}_R} \frac{1}{\tilde{p}_i}(\V{u}_i\V{u}_i)^\Tra (\V{u}_i \V{u}_i)^\Tra
    = \xi\sum_{i \in \mathscr{I}_R} \V{u}_i\V{u}_i^\Tra
    = \xi \M{U}_R^\Tra\M{U}_R,
\end{align*}
we obtain
\begin{align*}
    \|\E[\M{X}^\Tra\M{X}]\|_2 
    = \| \xi \M{U}_R^\Tra\M{U}_R - \M{G}_R\M{G}_R \|_2
    = \|\M{U}_R^\Tra( \xi\M{I} - \M{U}_R\M{U}_R^\Tra)\M{U}_R\|_2 \\ 
    \le \| \xi\M{I} - \M{U}_R\M{U}_R^\Tra\|_2 
    \le |\xi -\sigma_{max}(\M{U}_R\M{U}_R^\Tra)| \le |\xi - 1|.
\end{align*}
So $\nu^2 = |\xi - 1|$ gives the desired bound.

Substituting these into \Cref{eq:mcb_w_bound} leads to
\begin{align}
  Pr[\|\M{I} - \M{U}_A^\Tra\M{S}_{H}^\Tra\M{S}_{H}\M{U}_A \|_2^2 \ge \epsilon_s] 
  =  \delta \le 2k \exp\Big[-s \epsilon_s^2/\Big(2(|\xi -1|) + 2(1 + \xi)\epsilon_s/3 \Big)\Big]
\end{align}
for some desired probability $\delta$ and desired error tolerance $\epsilon_s$.
By choosing \\$\frac{144\xi\log(2k/\delta)}{\epsilon_s^2} \leq s_R$ we have with probability at least $1-\delta$
\begin{equation*}
    \|\M{I} - \M{U}_A^\Tra\M{S}_{H}^\Tra\M{S}_{H}\M{U}_A\|_2^2 \le \epsilon_s
\end{equation*}
which proves \cref{thm:hybrid_su_bound}.
\end{proof}

The second Structural Condition for Hybrid Sampling is given by \cref{thm:hybrid_fro_bound}.
\begin{lemma}
    \label{thm:hybrid_fro_bound}
    Let the thin SVD of $\M{A} \in \mathbb{R}^{m\times k}$ be $\M{U}_A\M{\Sigma}_A\M{V}_A^\Tra$ and define the nonnegative least squares problem $\V{x}_{nls} = \argmin_{\V{x}\ge 0 }\|\M{A}\V{X}-\V{b}\|_2^2$
    with $\V{b}\in\mathbb{R}^{m\times n}$.
    Let $\theta = \sum_{i\in\mathscr{I}_D}l_i(\M{A})$ be the sum of the leverage scores corresponding to rows in the deterministically included set.
    Let $\M{S}_{H}$ be the Hybrid Sampling matrix for $\M{A}$ with $\frac{2\xi}{\delta\epsilon_r} \leq s_R$.
    Then the following holds with probability at least $1-\delta$
    \begin{equation*}
        \|\M{U}_A\V{r}_{nls} -\M{U}_A\M{S}^\Tra_{H}\M{S}_{H}\V{r}_{nls}\|_2^2 \le \epsilon_r \mathscr{R}_{nls}^2/2
    \end{equation*}
    where $\mathscr{R}_{nls}^2 = \|\V{r}_{nls}\|_2^2$, $\V{r}_{nls} = \M{A}\V{x}_{nls} - \V{b}$, and $\epsilon_r,\delta \in (0,1)$.
\end{lemma}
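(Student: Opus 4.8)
The plan is to mirror the argument used for the non-hybrid second structural condition: bound the expected squared error of the sketch by an approximate-matrix-multiplication estimate, then convert that into a tail bound with Markov's inequality. The feature specific to the hybrid scheme is that the deterministic block contributes no error at all. Partition $\M{U}_A$ and the residual conformally with \cref{eq:SH} as $\M{U}_A = \begin{bmatrix}\M{U}_D\\ \M{U}_R\end{bmatrix}$ and $\V{r}_{nls} = \begin{bmatrix}\V{r}_D\\ \V{r}_R\end{bmatrix}$, where the blocks correspond to $\mathscr{I}_D$ and $\mathscr{I}_R$. Since $\M{S}_D$ is a permutation matrix, $\M{S}_D^\Tra\M{S}_D = \M{I}_{s_D}$, so the block structure of $\M{S}_H$ gives $\M{U}_A^\Tra\M{S}_H^\Tra\M{S}_H\V{r}_{nls} = \M{U}_D^\Tra\V{r}_D + \M{U}_R^\Tra\M{S}_R^\Tra\M{S}_R\V{r}_R$, whereas $\M{U}_A^\Tra\V{r}_{nls} = \M{U}_D^\Tra\V{r}_D + \M{U}_R^\Tra\V{r}_R$. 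Subtracting, the quantity to control is exactly the randomized-multiplication error $\M{U}_R^\Tra\V{r}_R - \M{U}_R^\Tra\M{S}_R^\Tra\M{S}_R\V{r}_R$ of the single product $\M{U}_R^\Tra\V{r}_R$ under the leverage-score sketch $\M{S}_R$ of the rows in $\mathscr{I}_R$.

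Next I would apply the expectation bound for approximate matrix multiplication (\cref{thm:rnd_matmul}) to the pair $\M{U}_R^\Tra$ and $\V{r}_R$, using the renormalized sampling probabilities $\tilde p_i = l_i(\M{A})/\xi = \|\V{u}_i\|_2^2/\xi$ for $i \in \mathscr{I}_R$; these are valid probabilities since $\sum_{i \in \mathscr{I}_R} l_i(\M{A}) = k - \theta = \xi$. Writing $r_i$ for the $i$th entry of $\V{r}_{nls}$, this gives
\begin{equation*}
\E\big[\|\M{U}_R^\Tra\V{r}_R - \M{U}_R^\Tra\M{S}_R^\Tra\M{S}_R\V{r}_R\|_2^2\big] \le \frac{1}{s_R}\sum_{i \in \mathscr{I}_R}\frac{\|\V{u}_i\|_2^2\, r_i^2}{\tilde p_i} = \frac{\xi}{s_R}\sum_{i \in \mathscr{I}_R} r_i^2 = \frac{\xi}{s_R}\|\V{r}_R\|_2^2 \le \frac{\xi}{s_R}\mathscr{R}_{nls}^2,
\end{equation*}
where the decisive cancellation is that $\|\V{u}_i\|_2^2/\tilde p_i = \xi$ for every $i$, and the last step holds because $\V{r}_R$ is a subvector of $\V{r}_{nls}$.

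Finally, since the error is a nonnegative random variable with mean at most $\xi\mathscr{R}_{nls}^2/s_R$, Markov's inequality yields
\begin{equation*}
\Pr\big[\|\M{U}_A^\Tra\V{r}_{nls} - \M{U}_A^\Tra\M{S}_H^\Tra\M{S}_H\V{r}_{nls}\|_2^2 \ge \tfrac{\epsilon_r}{2}\mathscr{R}_{nls}^2\big] \le \frac{\xi\mathscr{R}_{nls}^2/s_R}{\epsilon_r\mathscr{R}_{nls}^2/2} = \frac{2\xi}{s_R\epsilon_r},
\end{equation*}
and demanding the right-hand side be at most $\delta$ recovers the hypothesis $s_R \ge 2\xi/(\delta\epsilon_r)$, which proves the lemma. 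The identical computation, with the Frobenius norm in place of the $2$-norm, covers a matrix right-hand side $\V{b} \in \mathbb{R}^{m \times n}$.

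I expect the only delicate point to be the bookkeeping around the renormalized leverage scores: confirming that the $\tilde p_i$ on $\mathscr{I}_R$ sum to one (equivalently that $\sum_{i \in \mathscr{I}_R} l_i(\M{A}) = \xi$) and that the approximate-multiplication estimate is invoked with these renormalized probabilities rather than the global $l_i(\M{A})/k$, since the wrong normalization would change the constant $\xi$ appearing in the final sample count. Everything else is a direct transcription of the standard overdetermined-least-squares argument, with the deterministic rows simply falling out of the error term.
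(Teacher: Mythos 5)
Your proposal is correct and follows essentially the same route as the paper's proof: the deterministic block cancels exactly (the paper writes this cancellation using an auxiliary selector $\M{S}_{\bar{D}}$, you use $\M{S}_D^\Tra\M{S}_D=\M{I}$, which is the same observation), the remaining error is the approximate-matrix-multiplication error of $\M{U}_R^\Tra\V{r}_R$ under $\M{S}_R$ bounded in expectation by $\tfrac{\xi}{s_R}\mathscr{R}_{nls}^2$, and Markov's inequality finishes. Your explicit check that the renormalized probabilities $\tilde p_i = l_i(\M{A})/\xi$ produce the cancellation $\|\V{u}_i\|_2^2/\tilde p_i = \xi$ is exactly the step the paper absorbs into the identity $\|\M{U}_R\|_F^2 = \xi$.
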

\begin{proof}
Define $\M{S}_{\bar{D}}$ to be the $(m-s_{H})\times m$ matrix that picks out the rows of $\M{U}_A$ that are not deterministically sampled by $\M{S}_{D}$.
Row-partition the vector $\V{r}_{nls}$ conformally to $\M{U}_A$ as $\V{r}_{nls} = \begin{bmatrix}
    \V{r}_D \\
    \V{r}_R
\end{bmatrix}$.
Then we have
\begin{align*}
    \|\M{U}_A\V{r}_{nls} - \M{U}_A^\Tra \M{S}_{H}^\Tra\M{S}_{H}\V{r}_{nls}\|_F^2
    = \|\M{U}_A\V{r}_{nls} - \M{U}_D^\Tra \M{S}_D^\Tra\M{S}_D\V{r}_D - \M{U}_R^\Tra \M{S}_R^\Tra\M{S}_R\V{r}_R\|_F^2 \\
    = \|\M{U}_D^\Tra \M{S}_D^\Tra\M{S}_D\V{r}_D + \M{U}_R^\Tra \M{S}_{\bar{D}}^\Tra\M{S}_{\bar{D}}\V{r}_R - \M{U}_D^\Tra \M{S}_D^\Tra\M{S}_D\V{r}_D - \M{U}_R^\Tra \M{S}_R^\Tra\M{S}_R\V{r}_R\|_F^2 \\
    = \|\M{U}_R^\Tra \M{S}_{\bar{D}}^\Tra\M{S}_{\bar{D}}\V{r}_R - \M{U}_R^\Tra \M{S}_R^\Tra\M{S}_R\V{r}_R\|_F^2.
\end{align*}
From this form we can apply \cref{thm:fro_bound} to obtain
\begin{align*}
    \E[\|\M{U}_A\V{r}_{nls} - \M{U}_A^\Tra \M{S}_{H}^\Tra\M{S}_{H}\V{r}_{nls}\|_F^2] 
    =
    \E[\|\M{U}_R^\Tra \V{r}_R - \M{U}_R^\Tra \M{S}_R^\Tra\M{S}_R\V{r}_R\|_F^2 ] \\
    \le 
    \frac{1}{s_R} \|\M{U}_R\|_F^2\|\V{r}_R\|_F^2 
    \le
    \frac{\xi}{s_R} \|\V{r}_{nls}\|_F^2 \le \frac{\xi}{s_R} \mathscr{R}^2_{nls}.
\end{align*}
Finally using Markov's Inequality we have
\begin{align*}
    Pr\Big[ \|\M{U}^\Tra \M{S}_{H}^\Tra\M{S}_{H}\V{r}_{nls}\|_F^2
    \ge \frac{\epsilon_r \|\V{r}_{nls}\|_F^2}{2} \Big] 
    \le 2\frac{\E[\|\M{U}^\Tra \M{S}_{H}^\Tra\M{S}_{H}\V{r}_{nls}\|_F^2]}{\epsilon_r\|\V{r}_{nls}\|_F^2} 
    \le 2\frac{\frac{\xi}{s_R} \mathscr{R}^2_{nls}}{\epsilon_r\|\V{r}_{nls}\|_F^2}
    = \frac{2\xi}{\epsilon_r s_R}
\end{align*}
To succeed with probability at least $\delta$, we need $\frac{2\xi}{\epsilon_r \delta} \leq s_R$.
\end{proof}

\paragraph{Discussion of \Cref{thm:hybrid_su_bound} and \Cref{thm:hybrid_fro_bound}}
\cref{thm:hybrid_su_bound} and \cref{thm:hybrid_fro_bound} tell us that if $\theta = \sum_{i \in \mathscr{I}_D} l_i(\M{A})$ proportion of the $k$ total leverage score `mass' is taken deterministically with $s_D$ rows, one needs only to take an additional $\xi\phi$ random samples, where $\xi = k - \theta$, instead of $k\phi$ to achieve the same theoretical guarantees for NLS or OLS problems, 
where $\phi = \max(C \log(k/\delta),1/(\delta \epsilon_r)) $ and $C = 144/(1-\sqrt{2})^2$.
Overall hybrid sampling requires $s_D + \xi\phi$ samples whereas standard leverage score sampling requires $k \phi$ samples.
For hybrid sampling to result in a lower sample complexity we must have $s_D < (k-\xi)\phi = \theta\phi$.
This relies on the deterministic samples accounting for a sufficiently large $\theta$.
This helps to explain why the hybrid method typically outperforms the deterministic method when the same number of samples are taken and supports the experimental results for the CP decomposition of sparse tensors in \cite{lev_scores4_CP}.

To determine the deterministic inclusion set $\mathscr{I}_D$ we use the thresholding technique from \cite{lev_scores4_CP}.
This method chooses a parameter $0< \tau \le 1$ and then the deterministic set includes all rows such that $\tau \leq l_i(\M{A})/k$.
This means that all rows with a leverage score value greater or equal to the threshold $\tau$ are deterministically included.
\cref{thm:hybrid_su_bound} and \cref{thm:hybrid_fro_bound} do not strictly rely on using a thresholding technique to determine which rows should be deterministically included.
Both results easily hold in the case that a different method is used to determine the set $\mathscr{I}_D$.
Any such technique for determining $\mathscr{I}_D$, like the thresholding one, will likely want to keep $|s_D|$ small while making $\theta$ as large as possible.



\section{Experimental Results}
\label{sec:experiments}
This section presents empirical findings on the proposed SymNMF algorithms.
The section focuses on two data sets, one dense and one sparse.
Both data sets are represented as graphs and the proposed SymNMF algorithms are used to cluster the vertices of the graphs following approaches given in previous work  \cite{edvw_hayashi,DaKuang_SymNMF}. Namely, clustering via \cite{edvw_hayashi} proceeds by constructing a hypergraph random walk from the data, which is then used to construct a symmetric adjacency matrix that serves as the input to SymNMF clustering.
For all experiments we use the initialization strategy from \cite{DaKuang_SymNMF}, though we acknowledge other initialization techniques such as those based on the SVD \cite{BOUTSIDIS20081350}.
The strategy in \cite{DaKuang_SymNMF} proceeds by generating a initial matrix for $\M{H} \in \mathbb{R}_{+}^{m\times k}$ with elements drawn uniformly from the interval on 0 to 1, and scaling them by the value $2*\sqrt{\zeta/k}$ where $\zeta$ is the average of all elements of $\M{X}$.
Intuitively this ensures that the norm of the initial factor matrices is not too large or small in comparison to $\M{X}$.
Given an output factor $\M{H}$ the cluster membership for the $i$th vertex $v$ is determined by finding the column index with the maximum value in the $i$th row of $\M{H}$ \cite{DaKuang_SymNMF}.

For checking convergence we use the normalized residual and the norm of the projected gradient.
These are two standard metrics for checking the convergence of NMF and its variants \cite{BCD_park,rnd_HALS}.
Definitions and some more discussion of these two metrics can be found in \cref{app:stopping_crit}.

\subsection{Web of Science Text Data}
For the dense, symmetric case we utilize the Web of Science (WoS) Text\footnote{https://data.mendeley.com/datasets/9rw3vkcfy4/6} data set.
The collection contains 46985 documents and 58120 terms.
The number of term-document relationships is roughly $0.0013\%$ of all possible connections.
Each document is assigned one of 7 categories based on its content which we use as ground truth labels.

We cluster the data set using SymNMF via the Hypergraph with Edge Dependent Vertex Weights (EDVW) methodology \cite{edvw_hayashi}.
This turns the matrix into a symmetric adjacency matrix where documents are vertices and words are considered as hyperedges.
The matrix is likely dense as each hyperedge is expanded into a weighted clique in the obtained adjacency matrix.
To assess clustering quality we compute the Adjusted Rand Index (ARI) for the clusterings produced by each SymNMF algorithm.
To cluster with SymNMF we follow the methodology in \cite{JNMF_jogo}.

Once the algorithms are unable to reduce the normalized residual by more than \texttt{1e-4} for four consecutive iterations the methods stop.
If Iterative Refinement (IR), described in \Cref{sec:refine_symnmf}, is being used the method will switch over to using the full input data matrix and apply the same stopping criteria to determine when to stop fitting to the full input matrix.
All methods use \ref{alg:adaRRF} to determine how many power iterations to perform.
\ref{alg:adaRRF} iterates until the normalized residual ceases to decrease by \texttt{1e-3} per power iteration.
Each method is run 10 times.

We use a labeling system to name the algorithms.
The algorithms we consider here vary in terms of Update rules among 1) HALS, 2) BPP, 3) PGNCG,
if they use LAI, Compression (Comp) as in \cite{Comp_NMF2016}, or neither,
and finally if Iterative Refinement is used, denoted with IR.
A combination of these labels indicates the method used.
For example LAI-BPP-IR is the \cref{alg:lai_symnmf} with BPP as the update rule and iterative refinement run at the end.
LAI-BPP indicates the same technique but without IR at the end.
Finally BPP indicates standard SymNMF method with BPP as the update rule.

The regularized version of SymNMF in \Cref{eq:surrogate_symnmf_obj} requires an input hyperparameter $\alpha$ to balance the two objectives.
The compressed problem also needs this hyperparameter.
In practice we find that using the value  $\alpha = \max(\M{X})$ recommended for the uncompressed problem works well \cite{DaKuang_SymNMF}.

\subsubsection{Results}
The convergence results for these experiments are shown in \cref{fig:WoS_NR_plts}.
These show that the \ref{alg:lai_symnmf} method results in significant speed up. \ref{alg:lai_symnmf} without Iterative Refinement (IR) achieves about a 4$\times$ speed up over standard SymNMF with BPP.
For \ref{alg:lai_symnmf} with HALS we observe a speed up of 7.5$\times$.
For both BPP and HALS, IR does not appear to be needed when \ref{alg:adaRRF} finds a good starting low-rank approximation.
For comparison we include the SymNMF variant of the ``Compressed NMF'' method, from \cite{Comp_NMF2016}, and see that it performs nearly identically to \ref{alg:lai_symnmf}.
See \cref{app:lai_vs_comp} for a discussion of the similarity of the two methods.
In terms of the PGNCG method in \cref{fig:WoS_nr_GNCG_p30} and \cref{tab:WoS_symnmf_exp} we see that randomization greatly benefits PGNCG resulting in about a $6\times$ speed up.
Additionally, IR in conjunction with PGNCG achieves the lowest normalized residual norm.
Overall we see that randomized methods for SymNMF result in significant speed up while preserving, or even improving, solution quality.
The run time of the randomized methods included the time needed to compute the LAI.
This is why all the randomized methods appear to `start' later than the non-randomized methods.

Note that we do not report results for \ref{alg:lvs_nmf} on this data set as the code took too long to execute.
This is due to the fact that copying large portions (sampled rows) of a large dense data matrix $\M{X}$ at each iteration takes a long time.

Additional experimental data can be found in \cref{sec:additional_wos_exps}.
\cref{fig:WoS_NR_plts_vp} and \cref{tab:WoS_exp_additional_p40,tab:WoS_exp_additional_p80} show results for different values of the oversampling parameter $\rho$.
We find that varying $\rho$ does not have much effect on the solution quality but can negatively impact run time.
Finally, \cref{sec:additional_wos_exps} includes plots and tables showing statistics for using a static choice of $q=2$ with IR.
Comparing these results with those from using \cref{alg:adaRRF} shows the superiority of \cref{alg:adaRRF} versus a static choice of $q$.

We briefly compare against Spectral Clustering as described in \cite{Ng_spectral_clust,edvw_hayashi}
to validate SymNMF-based clustering results.
Spectral clustering achieves an average ARI of $0.293$ over 20 runs.
This is a worse ARI score than all of the SymNMF methods we try, see \cref{tab:WoS_symnmf_exp}.
Spectral clustering using the MATLAB functions \textbf{eigs()} and \textbf{kmeans()} takes about $35$ seconds to execute.
We also note that the normalized residual achieved by a rank $=7$ SVD is $0.9340$.


\begin{figure}
     \centering
     \begin{subfigure}[b]{0.48\textwidth}
         \centering
         \includegraphics[width=\textwidth]{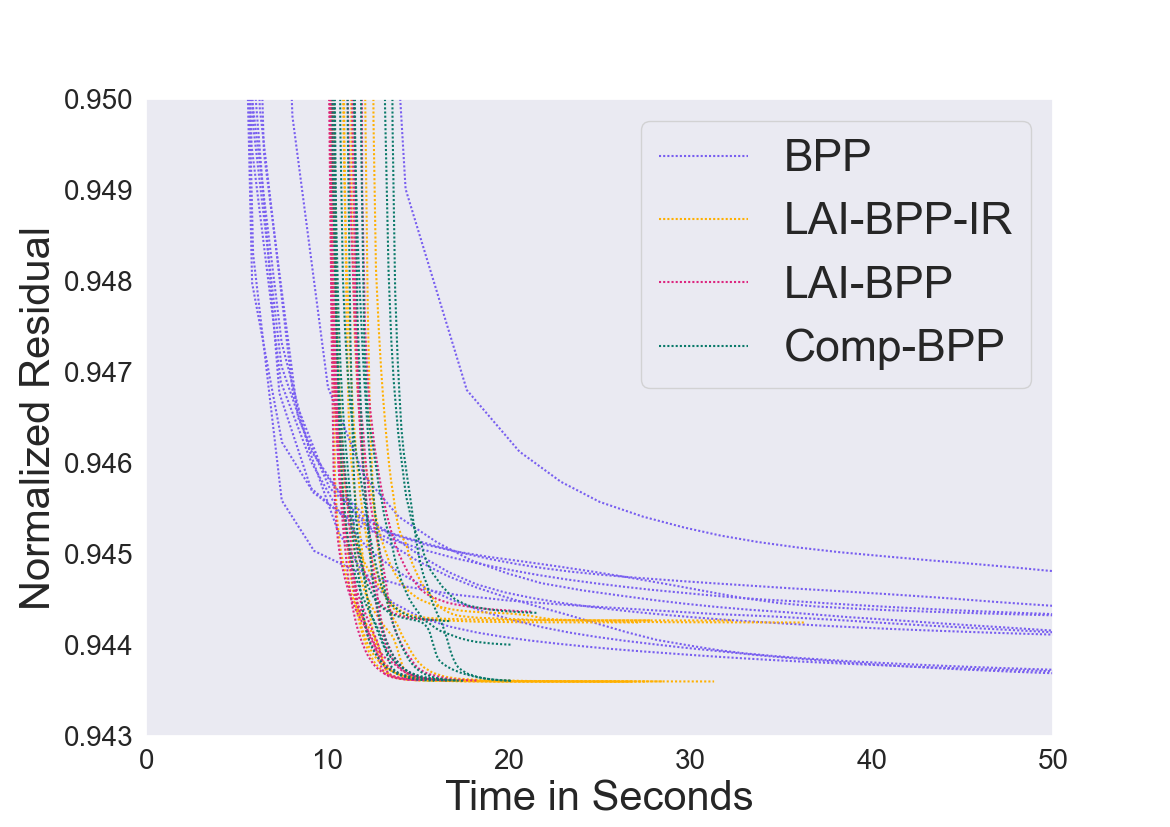}
         \caption{BPP}
         \label{fig:WoS_residual_BPP}
     \end{subfigure}
     \begin{subfigure}[b]{0.48\textwidth}
         \centering
         \includegraphics[width=\textwidth]{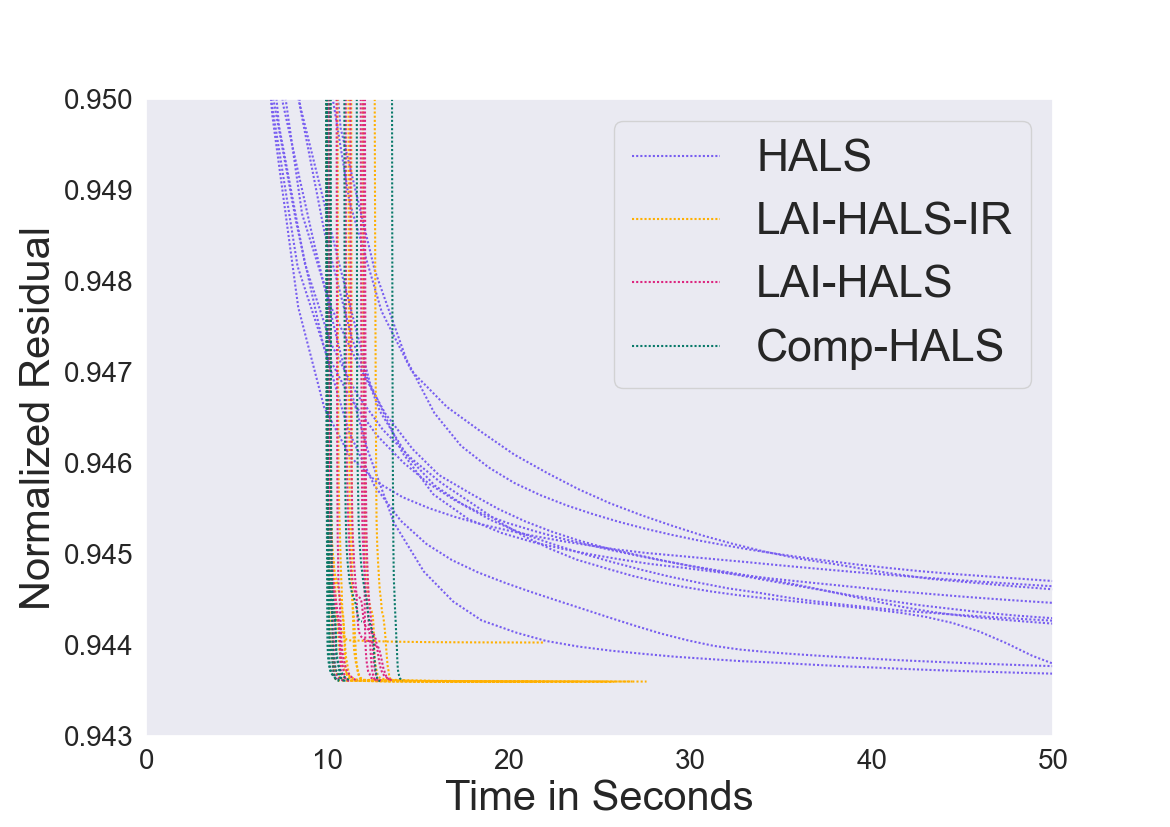}
         \caption{HALS}
         \label{fig:WoS_residual_HALS}
     \end{subfigure}

     \begin{subfigure}[b]{0.48\textwidth}
         \centering
         \includegraphics[width=\textwidth]{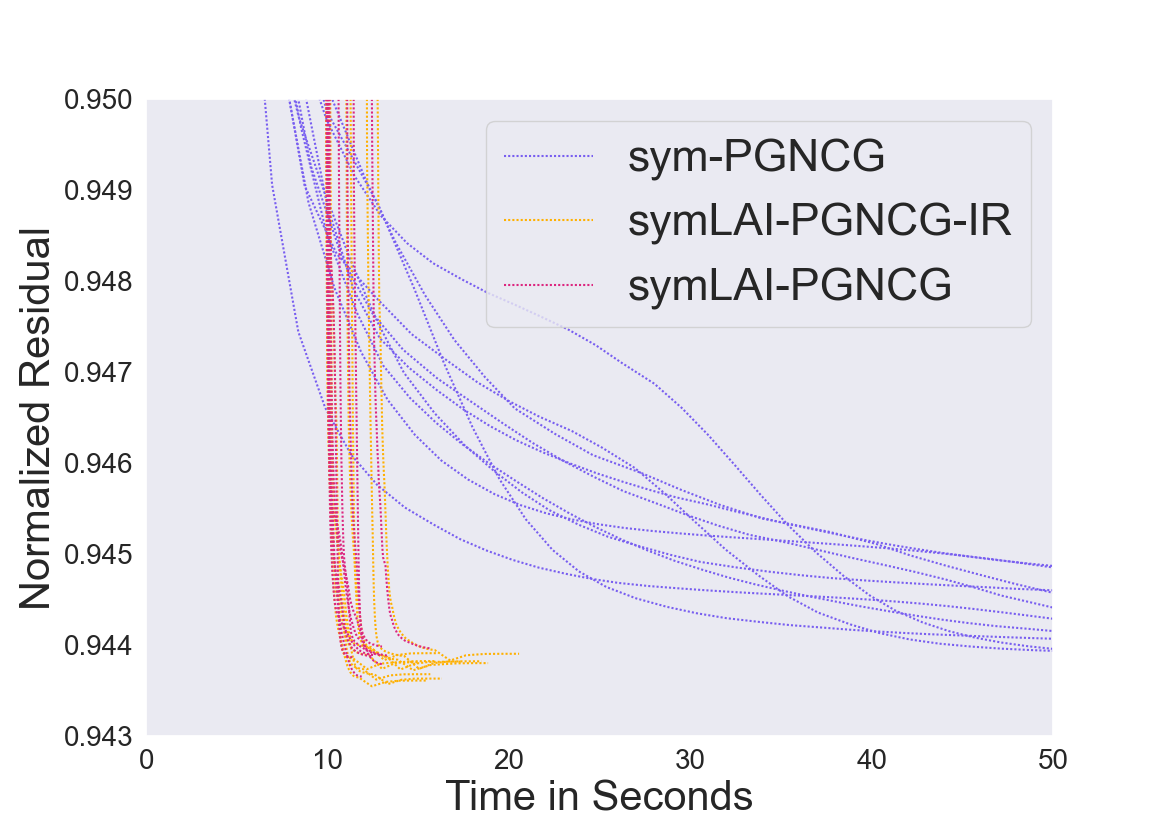}
         \caption{PGNCG}
         \label{fig:WoS_residual_GNCG}
     \end{subfigure}  
\caption{Normalized Residual Norm Plots for the WoS data set. Three different update rules are shown: BPP, HALS, and PGNCG.}
\label{fig:WoS_NR_plts}
\end{figure}

\begin{table}
\resizebox{\columnwidth}{!}{
\begin{tabular}{|l||c|c|c|c|c|}%
\centering
\bfseries Alg. & \bfseries Iters & \bfseries Time & \bfseries Avg. Min-Res & \bfseries Min-Res & \bfseries Mean-ARI \\ \hline 
PGNCG & 50.8 & 80.311 & 0.944 & 0.9437 & 0.3063 \\
LAI-PGNCG & 92.1 & 12.834 & 0.9439 & 0.9437 & 0.3107 \\
PGNCG-IR & 93.8 & 16.984 & 0.9437 & 0.9435 & 0.3161 \\
\hline
BPP & 38.1 & 66.95 & 0.9439 & 0.9436 & 0.3224 \\
LAI-BPP & 71.5 & 16.734 & 0.9438 & 0.9436 & 0.3264 \\
LAI-BPP-IR & 79.0 & 29.062 & 0.9439 & 0.9436 & 0.3148 \\
Comp-BPP & 79.3 & 18.266 & 0.9438 & 0.9436 & 0.3207 \\
\hline
HALS & 58.1 & 92.461 & 0.9437 & 0.9436 & 0.3201 \\
LAI-HALS & 97.0 & 12.124 & 0.9436 & 0.9436 & 0.3269 \\
LAI-HALS-IR & 90.5 & 23.799 & 0.9436 & 0.9436 & 0.3205 \\
Comp-HALS & 75.0 & 11.454 & 0.9437 & 0.9436 & 0.3237 
\end{tabular}
}
\caption{Metrics for the various run shown in \cref{fig:WoS_NR_plts}. Iters is mean number of iterations taken, Time is mean run time in seconds, Avg. Min-Res is average minimum achieved residual, Min-res is overall lowest achieved residual, and Mean-ARI is the mean ARI score. Averages are taken over 20 runs.}
\label{tab:WoS_symnmf_exp}
\end{table}

\subsection{Microsoft Open Academic Graph}
We run our methods for SymNMF on the Microsoft Open Academic Graph (OAG) \cite{zhang2019oag}.
This is a data set that combines the Microsoft Academic Graph (MAG) \cite{microsoft_graph_sinha} and the Arnet-Miner (Aminer) academic graph \cite{arnetminer_tang}.
From the OAG\footnote{https://github.com/ramkikannan/planc} we take a subset of 37,732,477 papers and use their citation information to form a sparse graph with 966,206,008 non-zeros as in \cite{ics_paper}.
This adjacency matrix is symmetrically normalized and the diagonal is zeroed out following the methodology in \cite{DaKuang_SymNMF}.
The rank is set to $k=16$ for all experiments. 
The parameter $s$ is set to $\lceil 0.05*m\rceil$ where $m$ is the dimension of the square input matrix.

The regularized version of SymNMF in \Cref{eq:surrogate_symnmf_obj} requires an input hyperparameter $\alpha$ to balance the two objectives.
The sampled problem will also need this hyperparameter.
We propose that using the same values of $\alpha$ for the leverage score sampling method and the deterministic method is reasonable since $\mathbb{E} [\|\M{S}_H\M{H}\|_F^2] = \|\M{H}\|_F^2$ and $\mathbb{E} [\|\M{S}_W\M{W}\|_F^2] = \|\M{W}\|_F^2$.

\cref{alg:lvs_nmf} is run with BPP and HALS as the update rules.
The normalized residual norms and the projected gradient values are shown in \cref{fig:OAG_plts}.
See \Cref{sec:prj_grads} for details on projected gradient values.
First we note that the LAI method with BPP update rule is generally unsuccessful at reducing the error.
We hypothesize that this is because the \ref{alg:QB_decomposition} produces a LAI with a large value of $\| \M{U}_X\M{V}_X - \M{W}_+(\M{H}_+)^\Tra\|_F$ as in \Cref{eq:lai_2nd_ineq}. 

Next we note that using the Hybrid Sampling method is important to achieve good speed-up.
In the case of the BPP update rule, in \cref{fig:aminer_residual_BPP} one can observe that the purely random method ($\tau = 1$) does not provide any speed up over the standard BPP method.
However when the hybrid method with $\tau=1/s$ is used, one can see that the leverage score sampling algorithm becomes competitive.
When the BPP update rule is used, a significant amount of the time per iteration goes into solving the $2m$ NLS problems.
So while leverage score sampling effectively eliminates the cost of multiplying by $\M{X}$, only about a $50\%$ speed up is achieved as leverage score sampling has no effect on the cost of the update function. In the case of HALS, we observe significantly better speed up when leverage score sampling is used.
\cref{fig:aminer_residual_BPP} clearly shows that hybrid leverage score sampling outperforms both leverage score sampling and the standard HALS method.
The hybrid leverage score sampling method in this case is able to achieve about a $5.5\times$ speed up per iteration over standard HALS. 
We note that the use of the modified HALS update rules, as in \Cref{eq:symhals_update1,eq:symhals_update2}, are crucial for obtaining good performance.

\cref{fig:aminer_time_bd} shows the time break down for 3 algorithms: leverage score sampling with HALS and BPP as the update rule and a standard NMF method with HALS as the update rule.
The timings are divided into 3 categories: 1) ``Matrix Multiplication'' (MM) for computing the four main matrix products, 2) ``Solve'' for the time spent in the update rule (e.g. time spent in the BPP solver), and 3) ``Sampling'' time spent performing leverage score sampling which includes computing the thin QR decomposition.
The sampling time also includes time to form the needed subsampled matrices.
The OAG data set is stored using MATLAB's sparse matrix representation which allows for fast row/column slicing (because the matrix is symmetric) via the Compressed Sparse Column or (CSC) format.
This is in juxtaposition to the hypergraph representation of the WoS data set which is dense and so does not benefit from sparse matrix formats.
Additionally, the MM time is greatly reduced by using leverage score sampling while incurring an acceptable overhead in terms of sampling time.
Lastly we see that this data confirms that the BPP solver has limited potential computational gains from leverage score sampling.

\Cref{fig:OAG_plts} also shows the projected gradient values for the OAG runs.
As discussed in the next section, the solutions found by the randomized and non-randomized methods are quite different.
The fact that the randomized methods does a good job of reducing both the Residual and the Projected Gradient helps increase confidence that \cref{alg:lvs_nmf} is performing well.
See \cref{sec:prj_grads} for details on the Projected Gradient.

\begin{figure}
     \centering
     \begin{subfigure}[b]{0.48\textwidth}
         \centering
         \includegraphics[width=\textwidth]{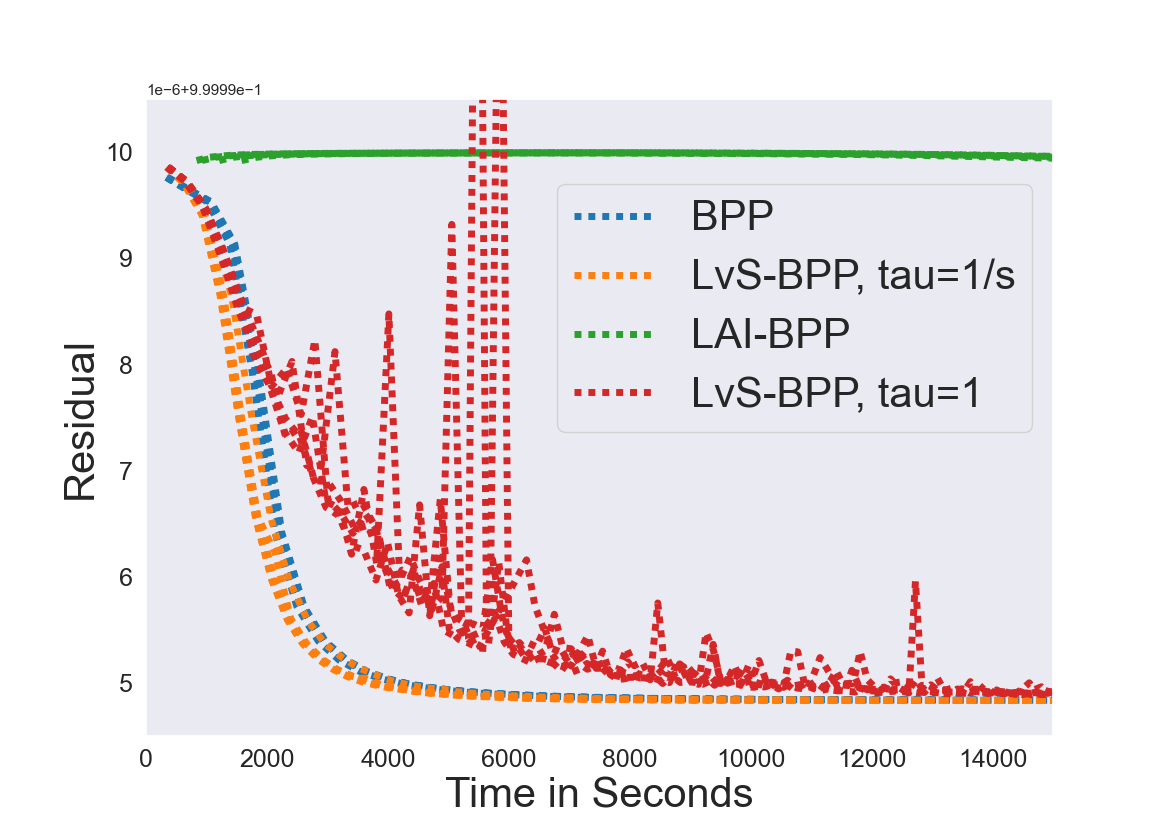}
         \caption{BPP, Normalized Residual Value}
         \label{fig:aminer_residual_BPP}
     \end{subfigure}
     \hfill     
     \begin{subfigure}[b]{0.48\textwidth}
         \centering
         \includegraphics[width=\textwidth]{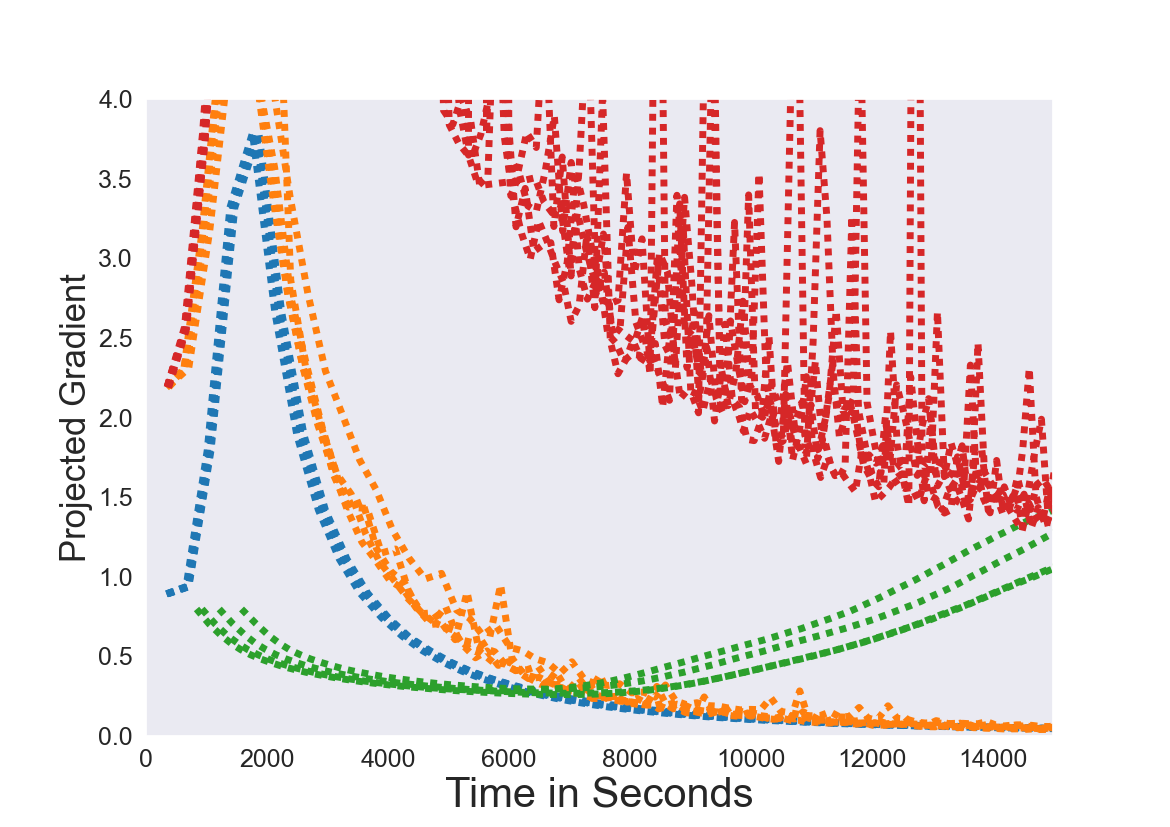}
         \caption{BPP, Projected Gradient Value}
         \label{fig:aminer_pg_BPP}
     \end{subfigure}

     \centering
     \begin{subfigure}[b]{0.48\textwidth}
         \centering
         \includegraphics[width=\textwidth]{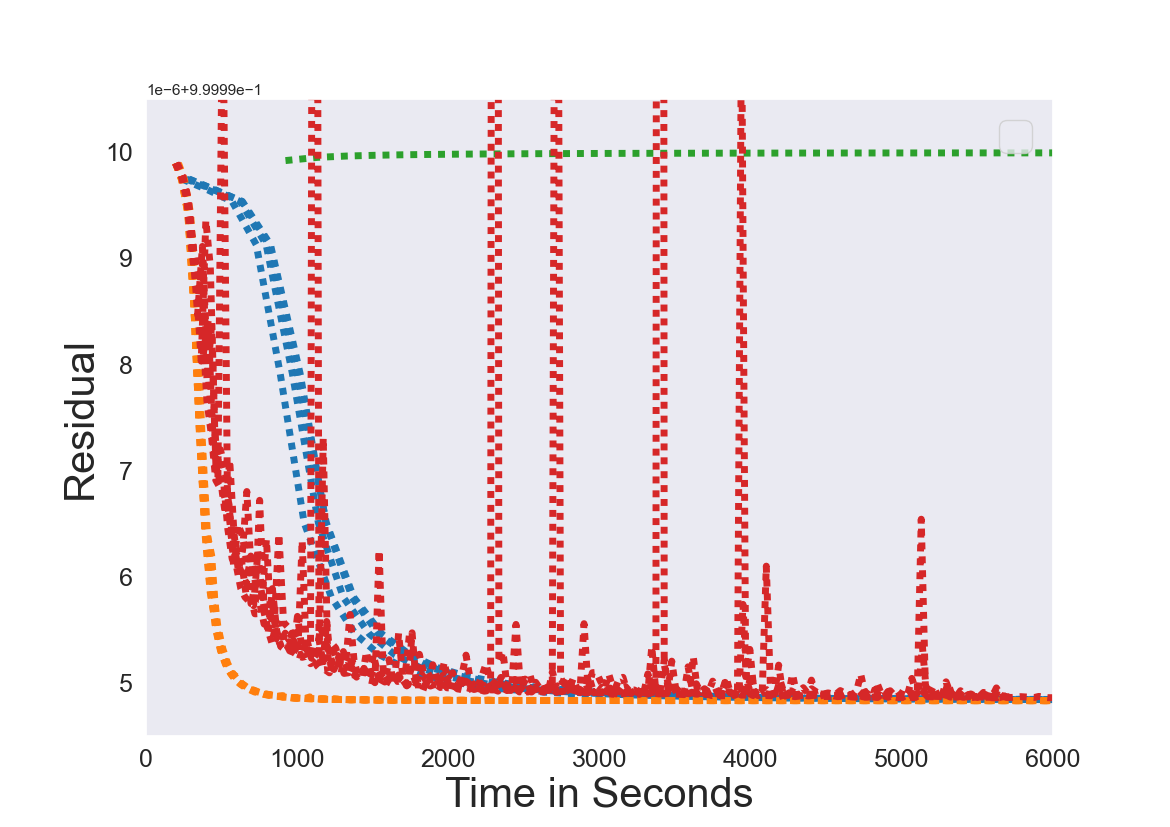}
         \caption{HALS, Normalized Residual Value}
         \label{fig:aminer_residual_HALS}
     \end{subfigure}
     \hfill     
     \begin{subfigure}[b]{0.48\textwidth}
         \centering
         \includegraphics[width=\textwidth]{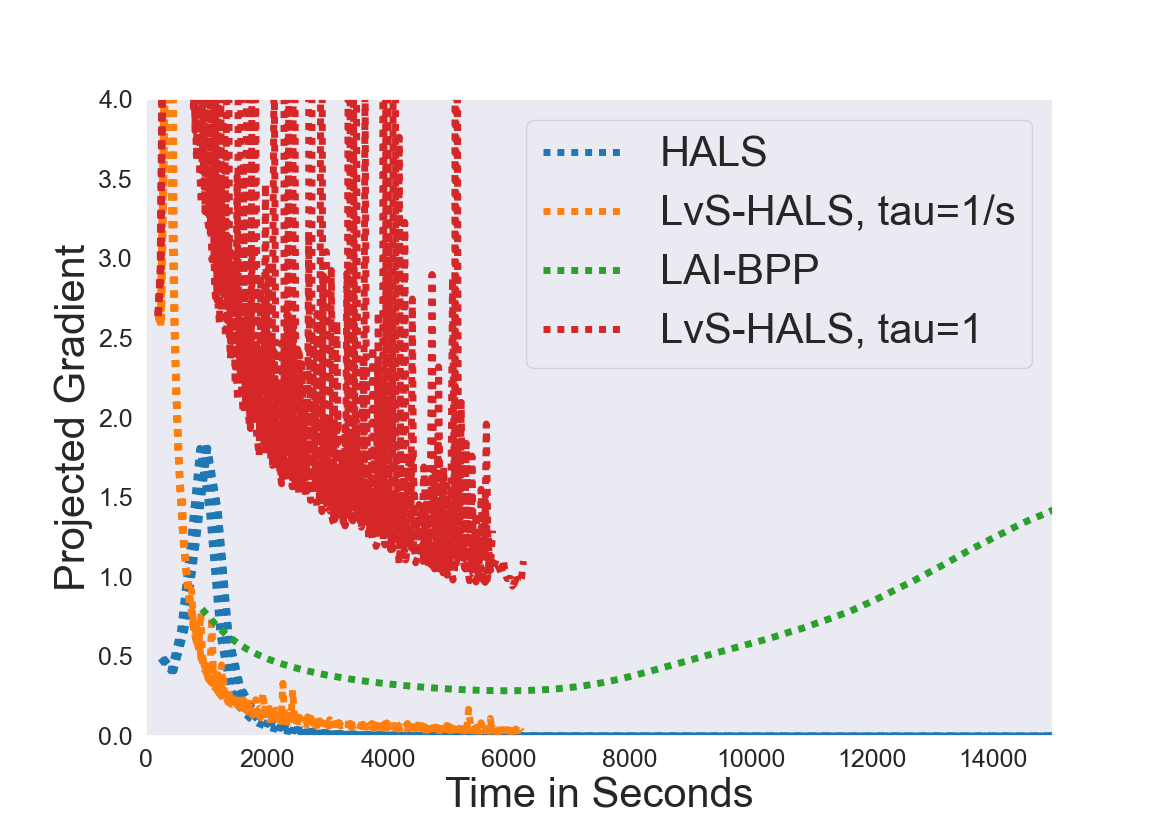}
         \caption{HALS, Projected Gradient Value}
         \label{fig:aminer_pg_HALS}
     \end{subfigure}
    \caption{Normalized residual and projected gradient values for OAG using HALS and BPP update rules. $k=16$, $\tau=1/s$ or $\tau=1$ as indicated in the legend.}
    \label{fig:OAG_plts}
\end{figure}

\begin{figure}
     \centering  
     \includegraphics[width=0.7\textwidth]{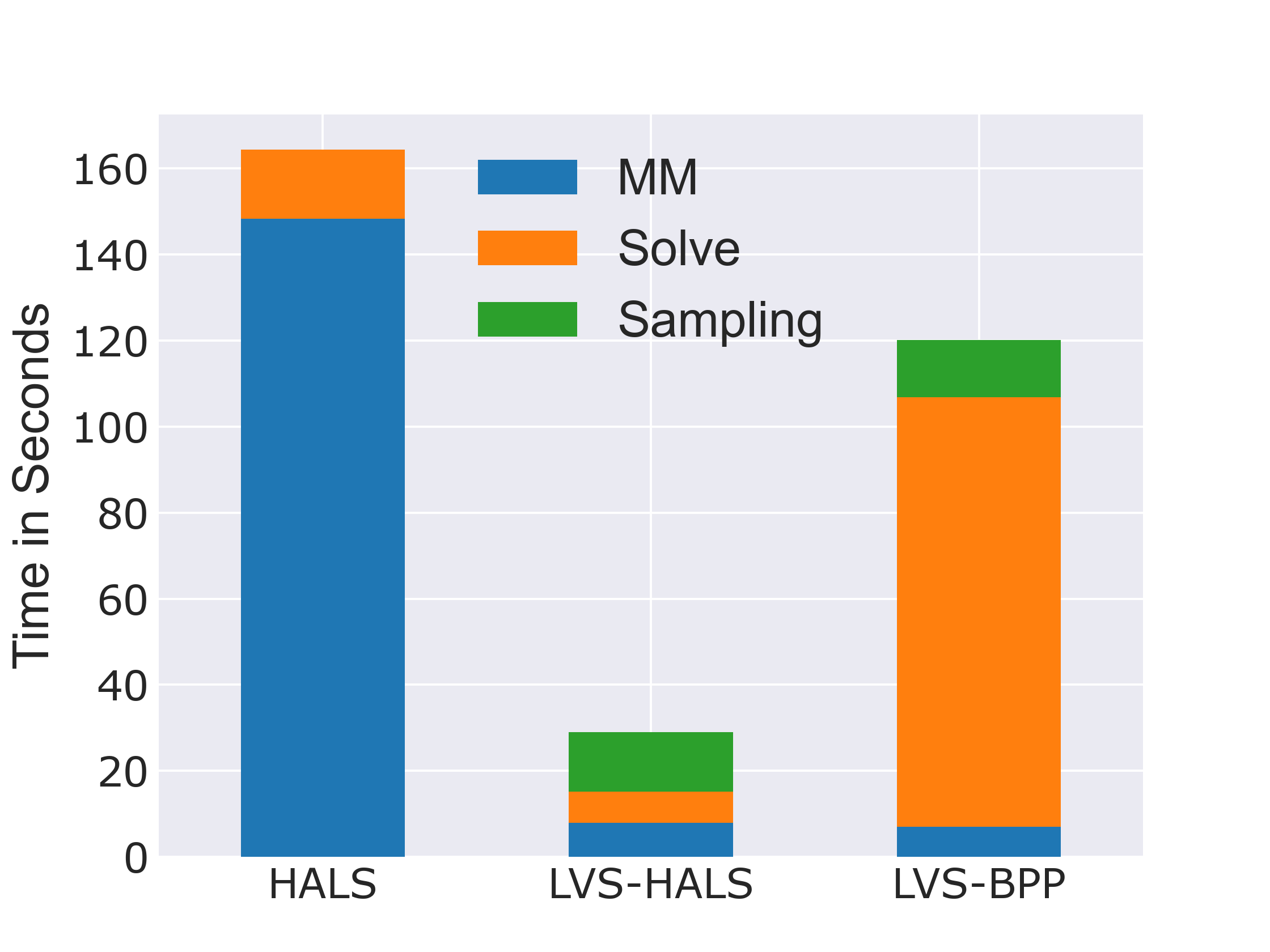}
    \caption{Time Breakdown per iteration for three algorithms. Standard SymNMF with HALS update rule (HALS), SymNMF with leverage score sampling and HALS (LvS-HALS), and SymNMF with leverage score sampling and BPP (LvS-BPP).}
    \label{fig:aminer_time_bd}
\end{figure}

\subsubsection{Results for the OAG}
We now analyze the output of running the \ref{alg:lvs_nmf} algorithms on the OAG.
The factors with $k=16$ are used to form $k$ clusters, as in the WoS hypergraph, following the methodology in \cite{DaKuang_SymNMF}.
Since this data set does not come with ground truth, assessing the quality of the \ref{alg:lvs_nmf} outputs is more difficult.
The residuals output by both the HALS and the  LvS-HALS method  differ by a tiny amount, on the order of \texttt{1e-12}, with the leverage score method achieving the lower residual.

We use a similar labeling scheme as for the WoS experiments.
LvS represents the use of leverage scores sampling and the considered update rules are HALS and BPP.
So LvS-BPP represents \Cref{alg:lvs_nmf} with BPP.
Additionally for leverage score sampling methods the value of $\tau$ used is also indicated.

The two sets of clusters produced by the methods are different, as can be immediately deduced from the difference in cluster sizes.
Standard HALS produces one huge cluster containing all but around $100,000$ of the vertices.
These $100,000$ vertices are then split into an additional 15 clusters whose sizes range from around 5000 to 8000 vertices.
The LvS-HALS method finds 3 large clusters of sizes about $20$ million, $17$ million, and $150,000$.
The remaining 13 clusters have between one to six thousand vertices each.
Cluster-0 is the largest cluster for both methods.
Cluster-15 and Cluster-10 are the $17$ million and $150$ thousand sized cluster for LVS-HALS.
The smaller clusters are all somewhat connected to Cluster-0 and sparsely, if at all, connected to each other.

To analyze these clusters more rigorously we compute the Silhouette Scores for each vertex \cite{sil_scores}.
For a set of clusters $\{C_j\}$ for $j\in [k]$ we define the following quantities for a vertex $v$ in cluster $C_l$
\begin{align*}
    a(v) = \frac{1}{|C_l|-1}\sum_{j \in C_l,j\neq v} \M{A}_{vj} \ \ \text{and} \ \ b(v) = \max_{t, t \neq l} \frac{1}{|C_t|}\sum_{j \in C_t} \M{A}_{vj}.
\end{align*}
Here $a(v)$ measures a vertex's similarity to its own cluster, $b(v)$ measures a vertex's similarity to the next most similar cluster and $\M{A}$ is the graph adjacency matrix input to the algorithm SymNMF.
The Silhouette Score for $v$ is then given by
\begin{equation*}
    s(v) = \frac{a(v) - b(v)}{\max(a(v),b(v))}.
\end{equation*}
Silhouette Scores range from $1$ to $-1$, with $1$ meaning a vertex is very similar to its own cluster and $-1$ meaning it should be moved to its next closest cluster.
Individual vertex scores are then averaged over their cluster memberships to obtain cluster level Silhouette Scores.
Note that this definition of Silhouette Score is for similarity metrics and so is different that the typical Silhouette Score equation which is defined for dissimilarity metrics. 

The Silhouette Scores are computed for each cluster.
The Silhouette Scores for the HALS methods are perfect with the exception of Cluster-0, which still achieves a high score of 0.78.
The Silhouette Scores for the LVS-HALS algorithm are not as high as the ones for HALS but are still well above 0.
The exception is Cluster-0 (from LVS-HALS), which has a near 0 Silhouette Score of $-0.04$, meaning that it does not form a good cluster but also that its vertices do not necessarily belong in one of the other clusters.
Other than Cluster-0 the LVS-HALS method finds clusters with mean Silhouette Score $0.92 \pm 0.1$ ($\pm$ Standard Deviation).
This indicates that the LVS-HALS method is able to find well structured clusters in terms of Silhouette Score.
To further validate that LVS-HALS is finding meaningful clusters we show the top 10 words associated with Clusters 10 to 15 in \cref{tab:t10words_lvshals}.
More data about this experiment can be found in \cref{sec:moag_appendix}.


\begin{table}
\resizebox{\columnwidth}{!}{
\begin{tabular}{|c|c|c|c|c|c|c|c|c|c|}%
\hline
\centering
\bfseries T10 & \bfseries T11 & \bfseries T12 & \bfseries T13 & \bfseries T14 & \bfseries T15 
\csvreader[head to column names]{CSV_Data/pdT_lvs_twords.csv}{}
{\\\hline \topicK & \topicL & \topicM & \topicN & \topicO & \topicP}
\\ \hline
\end{tabular}
}
\caption{Top key words for Leverage Score HALS output. Topics 10-15 from the Microsoft Open Academic Graph run using HALS as the update rule with Leverage Score sampling ($\tau=1/s$) and $k=16$. The 10 top key words in terms of tf-idf association are shown in the table. We can see many of the top key words per topic seem to form a coherent subject matter.}
\label{tab:t10words_lvshals}
\end{table}

\section{Conclusion}
We have presented two novel algorithms for computing SymNMF in randomized ways.
These methods are suitable for large problems as demonstrated by our findings on two data sets.
Our proposed methods achieve significant speed ups of 5$\times$ or more and produce quality solutions on downstream clustering tasks.
Additionally our techniques are applicable to standard NMF formulations as well. 
Finally we presented a number of theoretical results that justify why and when our proposed algorithms perform well in practice.
These results include an analysis of leverage score sampling for approximately solving Nonnegative Least Squares problems, sample complexity analysis for the Hybrid Sampling scheme from \cite{larsen2022sketching}, and some guarantees for \cref{alg:lai_symnmf}.

Comparing these two methods we believe that the \cref{alg:lai_symnmf} method is appropriate when a high quality LAI can be computed quickly and captures the underlying NMF signal.
This is highlighted by \Cref{eq:lai_2nd_ineq} which bounds the potential quality of the \cref{alg:lai_symnmf} in terms of the quality of the Low-rank Approximate Input (LAI).
In other cases \cref{alg:lvs_nmf} can be used, particularly in the case of sparse input data, as it does not compute an upfront LAI but instead samples new subsets of rows at every iteration.
Comparing these methods computationally, \cref{alg:lai_symnmf} costs roughly $O(qm^2l)$ for computing the LAI and then $O(mkl)$ per iteration, where $l = O(k)$ is the rank of the intermediate approximation, and \cref{alg:lvs_nmf} costs roughly $O(mks) = O(mk^2 \, \max(\log(\frac{k}{\delta}),\frac{1}{\delta\epsilon_r}))$ per iteration, where $s$ is the number of row samples.
The upfront cost of \cref{alg:lai_symnmf} is expensive but its per iteration cost is lower than that of \cref{alg:lvs_nmf}.
This is assuming one stays in the regime of $k\ll m$.
Additionally it is worth noting that both methods still require operations costing $O(mk^2)$ for the products $\M{H}^\Tra\M{H}$ and $\M{W}^\Tra\M{W}$ or computing a thin-QR of $\M{W}$ and $\M{H}$.
This again emphasizes the reliance of $k\ll m$ for computational efficiency.
It is also important to note that the computational complexity costs of each algorithm do not tell the whole story.
For example on large dense problems \Cref{alg:lvs_nmf}'s need to copy rows of the data matrix can be expensive as mentioned in \Cref{sec:experiments}.


\appendix
\newpage

\newpage
\section{Derivation of Symmetrically Regularized HALS}
\label{app_efficient_HALS_update_derivation}
This section presents the derivation of the regularized HALS method for SymNMF and its efficient updating.
For this derivation $\M{X} = \M{X}^\Tra$ is symmetric throughout.
Define
\begin{equation*}
    \M{R}_{\neq i} = \M{A} 
    - \M{W}_{\neq i} \M{H}^\Tra_{\neq i} 
    = \M{A} - \sum_{j \neq i, j=1}^k \V{w}_j \V{h}_j^\Tra
\end{equation*}
Where $\M{W}_{\neq i}$ is $\M{W}$ without the $i$th column and similar for $\M{H}_{\neq i}$.
Then the rank one update equation can be posed as
\begin{align*}
    \|\begin{bmatrix}
        \M{H} \\
        \sqrt{\alpha}\M{I}_k
    \end{bmatrix}
    \M{W}^\Tra
    - 
    \begin{bmatrix}
        \M{X} \\
        \sqrt{\alpha}\M{H}^\Tra
    \end{bmatrix}\|_F^2
    \rightarrow 
    \|\begin{bmatrix}
        \V{h}_{i} \\
    \sqrt{\alpha}\V{e}_i
    \end{bmatrix}\V{w}_i^\Tra
    +
    \begin{bmatrix}
        \M{H}_{\neq i} \\
    \sqrt{\alpha}\M{I}_{\neq i}
    \end{bmatrix}
    \M{W}_{\neq i}^\Tra
    - 
    \begin{bmatrix}
        \M{A} \\
        \sqrt{\alpha}\M{H}^\Tra
    \end{bmatrix}\|_F^2 \\
    =  
    \|\begin{bmatrix}
        \V{h}_{i} \\
    \sqrt{\alpha}\V{e}_i
    \end{bmatrix}\V{w}_i^\Tra
    - 
    \begin{bmatrix}
        \M{R}_{\neq i}^\Tra \\
        \sqrt{\alpha}(\M{I}_{\neq i}\M{W}_{\neq i}^\Tra - \M{H}^\Tra)
    \end{bmatrix}\|_F^2.
\end{align*}
Next denote the matrix $\tilde{\M{R}}_{\neq i}$ and the vector $\V{u}_i$ as (Note that we have flipped sign here)
\begin{equation*}
    \tilde{\M{R}}_{\neq i} =     \begin{bmatrix}
        \M{R}_{\neq i}^\Tra \\
        \sqrt{\alpha}(\M{H}^\Tra - \M{I}_{\neq i}\M{W}_{\neq i}^\Tra)
    \end{bmatrix},
    \V{u}_i = \begin{bmatrix}
        \V{h}_{i} \\
    \sqrt{\alpha}\V{e}_i
    \end{bmatrix}
\end{equation*}
Now apply the standard HALS update
\begin{align*}
    \V{w}_i^\Tra = \Big[\frac{\tilde{\M{R}}_{\neq i}^\Tra\V{u}_i}{\|\V{u}_i\|^2_2} \Big]_+
    = \Big[\Big(\begin{bmatrix}
        \M{R}_{\neq i}^\Tra \\
        \sqrt{\alpha}(\M{H}^\Tra - \M{I}_{\neq i}\M{W}_{\neq i}^\Tra)
    \end{bmatrix}^\Tra
    \begin{bmatrix}
        \V{h}_{i} \\
    \sqrt{\alpha}\V{e}_i
    \end{bmatrix}\Big)/ (\|\V{h}_i\|_2^2 + \alpha)\Big]_+ \\
    = \frac{\M{R}_{\neq i}\V{h}_i + \alpha\V{h}_i}{\|\V{h}_i\|_2^2 + \alpha}
    = \frac{(\M{R}_{\neq i} + \alpha\M{I})\V{h}_i}{\|\V{h}_i\|_2^2 + \alpha}
\end{align*}
Finally the efficient updating method is derived as
\begin{align*}
    \frac{(\M{R}_{\neq i} + \alpha\M{I})\V{h}_i}{\|\V{h}_i\|_2^2 + \alpha} 
    = 
    \frac{(\M{A} - \M{W}_{\neq i}\M{H}_{\neq i}^\Tra + \alpha\M{I})\V{h}_i}{\|\V{h}_i\|_2^2 + \alpha} \\
    = 
    \frac{(\M{A} - \M{W}_{\neq i}\M{H}_{\neq i}^\Tra + \V{w}_{i}\V{h}_{i}^\Tra  - \V{w}_{i}\V{h}_{i}^\Tra + \alpha\M{I})\V{h}_i}{\|\V{h}_i\|_2^2 + \alpha}
    = \frac{(\M{A} - \M{W}\M{H}^\Tra  + \V{w}_{i}\V{h}_{i}^\Tra + \alpha\M{I})\V{h}_i}{\|\V{h}_i\|_2^2 + \alpha} \\
    = \frac{(\M{A} - \M{W}\M{H}^\Tra + \alpha\M{I})\V{h}_i + (\V{w}_{i}\V{h}_{i}^\Tra)\V{h}_i} {\|\V{h}_i\|_2^2 + \alpha} 
    = \frac{(\M{A} - \M{W}\M{H}^\Tra + \alpha\M{I})\V{h}_i + \|\V{h}_{i}\|_2^2\V{w}_{i}} {\|\V{h}_i\|_2^2 + \alpha} 
\end{align*}
The final update rule is written as
\begin{equation*}
    \V{w}_i \leftarrow \Big( \frac{(\M{A} - \M{W}\M{H}^\Tra + \alpha\M{I})\V{h}_i} {\|\V{h}_i\|_2^2 + \alpha} + \frac{\|\V{h}_{i}\|_2^2 }{\|\V{h}_i\|_2^2 + \alpha}\V{w}_{i}\Big)_+
\end{equation*}
for the columns of $\M{W}$ and this can be done similarly for the columns of $\M{H}$.

\section{Additional Material for LAI-SymNMF}

\subsection{Comparison between LAI-NMF and Compressed-NMF} 
\label{app:lai_vs_comp}
\ref{alg:lai_symnmf} is similar to the randomized NMF algorithms proposed in \cite{rnd_HALS,Comp_NMF2016}.
The ``Randomized HALS'' method proposed in \cite{rnd_HALS} which destroys symmetry is inherently unsymmetric, as it compresses only one dimension of the matrix, and so we do not consider comparing against it.
In fact \ref{alg:lai_symnmf} could be viewed as a natural extension of this ``Randomized HALS'' method to randomized SymNMF.
The so called ``Compressed-NMF'' algorithm proposed in \cite{Comp_NMF2016} can be straightforwardly extended to the SymNMF problem.
We do so and compare against it later.
Generally, Compressed-NMF operates as follows: 1) compute two approximate orthonormal bases for $\M{X}$ and $\M{X}^\Tra$ as $\M{Q}_{\M{X}} = \Call{RRF}{\M{X},k,\rho,\omega}$ and $\M{Q}_{\M{X}^\Tra} = \Call{RRF}{\M{X}^\Tra,k,\rho,\omega}$ respectively and 
2) alternatingly solve the two problems
    \begin{align*}
        \min_{\M{H}\ge 0}\| \M{Q}_{\M{X}}^\Tra \big( \M{W}\M{H}^\Tra - \M{X} \big) \|_F \ \ \text{and} \ \
        \min_{\M{W}\ge 0}\| \M{Q}_{\M{X}^\Tra}^\Tra \big( \M{H}\M{W}^\Tra - \M{X}^\Tra \big) \|_F
    \end{align*}
The products $\M{Q}_{\M{X}}^\Tra\M{X}$ and $\M{Q}_{\M{X}^\Tra}^\Tra\M{X}^\Tra$ can be computed once but $\M{Q}_{\M{X}}^\Tra\M{H}$ and $\M{Q}_{\M{X}}^\Tra\M{W}$ need be recomputed each iteration.
In this way a smaller problem is solved at each iteration resulting in computational savings.
This method can be used for SymNMF by applying it to \cref{eq:surrogate_symnmf_obj}.
Additionally, in the symmetric case one need call the \ref{alg:QB_decomposition} only once.

We relate Compressed-NMF to LAI-NMF in the following way.
Consider a QB-decomposition computed using the \ref{alg:QB_decomposition} where $\M{Q}_{\M{X}} = \Call{RRF}{\M{X},k,\rho,\omega}$ and $\M{B}_{\M{X}} = \M{Q}_{\M{X}}^\Tra\M{X}$.
Using the $\M{Q}\M{B}$ decomposition for LAI-NMF in \cref{eq:alg_lai_nmf}, the update to $\M{H}$ and its Quadratic Program (QP) form are
\begin{equation*}
    \min_{ \M{H} \ge 0}\| \M{W}\M{H}^\Tra - \M{Q}_{\M{X}}\M{B}_{\M{X}} \|_F 
    \ \leftrightarrow \ \min_{\M{H}\ge 0} tr\Big(\frac{1}{2} \M{H}\M{W}^\Tra\M{W}\M{H}^\Tra - \M{H}\M{W}^\Tra\M{Q}_{\M{X}}\M{B}_{\M{X}} \Big).
\end{equation*}
Taking the QP formulation for the Compressed NMF update for $\M{H}$, \\ $\min_{\M{H}\ge 0}\| \M{Q}_{\M{X}}^\Tra \big( \M{W}\M{H}^\Tra - \M{X} \big) \|_F^2$, yields 
\begin{align*}
\min_{\M{H}\ge0} tr\Big(\frac{1}{2} \M{H}\M{W}^\Tra\M{Q}_{\M{X}}\M{Q}_{\M{X}}^\Tra\M{W}\M{H}^\Tra - \M{H}\M{W}^\Tra\M{Q}_{\M{X}}\M{Q}_{\M{X}}^\Tra\M{X} \Big) \\
= \min_{\M{H}\ge0} tr\Big(\frac{1}{2}\M{H}\M{W}^\Tra\M{Q}_{\M{X}}\M{Q}_{\M{X}}^\Tra\M{W}\M{H}^\Tra - \M{H}\M{W}^\Tra\M{Q}_{\M{X}}\M{B}_{\M{X}}\Big). 
\end{align*}
The term on the right, $\M{H}\M{W}^\Tra\M{Q}_{\M{X}}\M{B}_{\M{X}}$, is the same for both problems.
The difference between the two methods comes from the fact that the Gram matrix in LAI-NMF is $\M{W}^\Tra\M{W}$ but the Gram in Compressed-NMF contains the projection $\M{Q}_{\M{X}}\M{Q}_{\M{X}}^\Tra$.
Empirically, for SymNMF, we find that these methods perform nearly identically.
From this perspective we argue the effect of $\M{Q}_{\M{X}}\M{Q}_{\M{X}}^\Tra$ in the Gram matrix is minimal.
Overall we propose that the LAI approach is easier to reason about, as it simply fits an NMF model of the low-rank approximation of $\M{X}$.
As previously stated, using the basis $\M{Q}_{\M{X}}$ as a sketch for NLS problems is not justified theoretically.

\subsection{Low-rank Approximate Input Projected Gauss-Newton with Conjugate Gradients for SymNMF}
\label{sec:pgncg_code}
This section contains the pseudo code for the PGNCG method with low-rank approximate inputs.
The original algorithm was developed for high performance distributed computing environments \cite{srinivas_SC20_GN}.
The Pseudocode is given in \cref{alg:lai_pgncg_symnmf}.
The only alterations to this pseudo code are lines 3 and 7.
This highlights how simple the idea of LAI-SymNMF is in practice.

\begin{varalgorithm}{LAI-PGNCG-SymNMF}[H]
 \caption{: SymNMF of a Low-Rank Approximation} 
 \label{alg:lai_pgncg_symnmf} 
\begin{algorithmic}[1]
\Require{\textbf{input}: a symmetric matrix $\M{X} \in\mathbb{R}^{m \times m}$, target rank $k$, column oversampling parameter $\rho$, power iteration exponent $q$, and max number of CG iterations $s_{max}$}
\Ensure{$\{\M{H}\}$ as the factors for an approximate rank-$k$ SymNMF of $\M{X}$.}
\Function{$[\M{H}] =$ LAI-SymNMF}{$\M{X},k,\rho,\omega$}
 \State{$l := k + \rho$}
 \State{[$\M{U}, \M{\Lambda}] := $ Apx-EVD($\M{X},k,\rho,q$)}, \% Obtain approximate basis for range of $\M{X}$, $\M{U}\in\mathbb{R}^{m \times l}$ and $\M{T}\in\mathbb{R}^{l \times l}$
\State{Initialize $\M{H}\in \mathbb{R}^{n \times k}$}
\While{Convergence Crit. Not Met}
    \State{$\M{Z} = \M{0}_{n\times k}$}
    \State{$\M{R} = -2(\M{U}\M{\Lambda}(\M{U}^\Tra\M{H}) - \M{H}(\M{H}^\Tra\M{H}))$} \Comment{This replaces $\M{X}\M{H} - \M{H}(\M{H}^\Tra\M{H})$}
    \State{$\M{P} = \M{R}$}
    \State{$e_{old} = \|\M{R}\|^2_F$}
    \For{$s = 1:s_{max}$}
        \State{$\M{Y} = 2(\M{P}\M{H}^\Tra\M{H} + \M{H}\M{P}^\Tra\M{H})$}
        \State{$\alpha = e_{old}/(\sum_{i,j}\M{P}_{ij}\M{Y}_{ij})$}
        \State{$\M{Z} = \M{Z} + \alpha\M{P}$}
        \State{$\M{R} = \M{R} - \alpha\M{Y}$}
        \State{$e_{new} = \|\M{R}\|^2_F$}
        \State{$e_{old} = e_{new}$}
    \EndFor
    \State{$\M{H} = \max(\M{H} - \M{Z},0)$}
 \EndWhile
 \EndFunction
\end{algorithmic}
\end{varalgorithm}

\section{Stopping Criteria}
\label{app:stopping_crit}
Having a stopping criteria is often useful for NMF algorithms. 
In this work we use two measures the Normalized Residual and the Projected Gradient to measure convergence. 
\subsection{Residual Checks}
The NMF algorithms often require computation of the residual $\|\M{X} - \M{W}\M{H}^\Tra\|_F$ or the normalized residual 
\begin{equation}
    \frac{\|\M{X} -\M{W}\M{H}^\Tra \|_F}{\|\M{X}\|_F}.
\end{equation}
However, checking the residual working with the full matrix $\M{X}$ can be computationally expensive. 
Since checking the residual requires working with the full data matrix $\M{X}$ it can potentially dominate the run time of a randomized algorithm.
Therefore, it is important to have an appropriate method for estimating the residual at each iteration \cite{lev_scores4_CP,casey_rndCP}.

\paragraph{Residual Computation for LAI-NMF}
For \ref{alg:lai_symnmf} the issue is easily remedied as we can simply check the residual against the factored form of $\M{X} \approx \M{U}\M{V}^\Tra$ as
\begin{equation}
    \frac{\|\M{U}\M{V}^\Tra -\M{W}\M{H}^\Tra \|_F}{\|\M{U}\M{V}^\Tra\|_F}.
\end{equation}
This allows for rapid evaluation of an approximate residual using techniques similar to that used for NMF.
Our experiments show this is often a reasonable approach to use.

\subsection{Fast Residual Evaluation in NMF}
The NMF residual is given by $r^2 = \|\M{X} - \M{W}\M{H}^\Tra\|_F^2$.
This quantity can be computed cheaply by reusing already computed quantities from the previous update.
To see this observe that
\begin{align*}
    \|\M{X} - \M{W}\M{H}^\Tra\|_F^2 = tr\Big( (\M{X} - \M{W}\M{H}^\Tra)^\Tra (\M{X} - \M{W}\M{H}^\Tra) \Big) \\
    = tr(\M{X}^\Tra\M{X}) + tr(\M{\M{H}\M{W}^\Tra\M{W}\M{H}^\Tra}) - 2tr(\M{W}^\Tra\M{X}\M{H}).
\end{align*}
The $tr(\M{X}^\Tra\M{X})$ need only be computed one time, computing $tr(\M{\M{H}\M{W}^\Tra\M{W}\M{H}^\Tra})$ is cheap as long as $k << \min(m,n)$ and can reuse the most recent Gram matrix (either $\M{W}^\Tra\M{W}$ or $\M{H}^\Tra\M{H}$), and $2tr(\M{W}^\Tra\M{X}\M{H})$ can be computed cheaply by reusing the more recently computed product with $\M{X}$ depending on the update order of $\M{W}$ and $\M{H}$.
This method can easily be used with \ref{alg:lai_symnmf} to compute $\|\M{U}_X\M{\Lambda}_X\M{U}_X^\Tra -  \M{H}\M{H}^\Tra\|_F^2$ at each iteration.
Therefore we are able to evaluate the residual for almost free at each iteration of the algorithm.
This is important for determining when to stop iterating.

\subsection{Projected Gradients}
\label{sec:prj_grads}
The norm of the projected gradient is often used as a stopping criteria for NMF algorithms in place of the residual.
The idea is that when the NMF objective is at a stationary point, according to the Karush-Kuhn-Tucker (KKT) conditions, the projected gradients with respect to $\M{W}$ and $\M{H}$ will be 0.
Therefore we assume that if the projected gradients are small then we are close to a stationary point.

The projected gradient norm is defined as 
\begin{equation}
    \label{eq:prj_grad_4nmf}
    \sqrt{\|\nabla f_W\|_F^2 + \|\nabla f_H\|_F^2}
\end{equation}
where the partial gradients are
\begin{align}
    \nabla f_W = 2(\M{W}\M{H}^\Tra\M{H} - \M{A}\M{H}) \\
    \nabla f_H = 2(\M{H}\M{W}^\Tra\M{W} - \M{A}^\Tra\M{W})
\end{align}
and the projected gradient is
\begin{align}
    (\nabla^p f_W)_{ij} = 
    \begin{cases}
    (\nabla f_W)_{ij} \ \text{if} \ (\nabla f_W)_{ij}  < 0 \ \text{or} \ \M{W}_{ij} > 0 \\
    0
    \end{cases}
\end{align}
similar for the partial gradient with respect to $\M{H}$.
One issue with using the projected gradient as a comparison measure between algorithms is that different diagonal scalings of $\M{W}$ and $\M{H}$ can give different projected gradient values \cite{BCD_park,Jinug_BPP}.
However this is not an issue with the SymNMF objective. 
The gradient of \cref{eq:symnmf_obj} is given by
\begin{equation}
    \nabla f_H = 4(\M{H}\M{H}^\Tra - \M{X})\M{H}
\end{equation}

\section{Adaptive Randomized Range Finder}
Using the standard ``trick'' for efficiently checking the residual we can derive the following formula which we use in \cref{alg:adaRRF} : 
\begin{align*}
    \|\M{Q}\M{B} - \M{X}\|_F^2 = \|\M{X}\|_F^2 - 2tr(\M{Q}^\Tra\M{X}\M{B}^\Tra) + tr(\M{B}^\Tra\M{Q}^\Tra\M{Q}\M{B}) \\
    = \|\M{X}\|_F^2 - 2tr(\M{B}\M{B}^\Tra) + tr(\M{B}^\Tra\M{B}) = \|\M{X}\|_F^2 - tr(\M{B}\M{B}^\Tra)
\end{align*}
To the check the residual of our LAI we need only compute the matrix $\M{B}$.
This $\M{B}$ can then be used in the next power iteration if needed.
\begin{varalgorithm}{Ada-RRF}
 \caption{Adaptive Randomized Range Finder}
 \label{alg:adaRRF} 
\begin{algorithmic}[1]
\Require{\textbf{input}: data matrix $\M{X} \in\mathbb{R}^{m \times n}$, target rank $r$, oversampling parameter $\rho$, maximum number of power iterations $q_{max}$}
\Ensure{$\M{Q}_Y \in \mathbb{R}^{m \times l}$}
\Function{$[\M{\M{U},\M{\Lambda}]} =$ Apx-EVD}{$X,r,\rho,\omega$}
 \State{$l := r + \rho$}
 \State{Draw a Gaussian Random matrix $\M{\Omega} \in \mathbb{R}^{n \times l}$}
 \State{Compute $\M{Q} = qr(\M{X}\M{\Omega}) \in \mathbb{R}^{m \times l}$}
 \While{$j \le q_{max}$}
 \State{$\M{B}^\Tra = \M{X}^\Tra\M{Q}\in \mathbb{R}^{n \times l}$}
 \State{Evaluate $\|\M{Q}\M{B} - \M{X}\|_F^2 = \|\M{X}\|_F^2 - tr(\M{B}\M{B}^\Tra)$}
 \State{$\M{Q} = qr(\M{B}^\Tra) \in \mathbb{R}^{n \times l}$}
 \If{Stopping Criteria satisfied}
    \State{break}
 \EndIf
 \State{$\M{Y} = \M{X}\M{Q}\in \mathbb{R}^{m \times l}$}
 \State{$\M{Q} = qr(\M{Y},0)$}
 \EndWhile
 \State{Return $\M{Q}$ }
 \EndFunction
\end{algorithmic}
\end{varalgorithm}
\section{Update() Function}
\label{app:updateF}
In this section we thoroughly explain the Update($\M{G},\M{Y}$) function we use to simplify our pseudocode.
This function takes in two matrices $\M{G}$ which is $k\times k$ and $\M{Y}$ which is $k \times m$ or $k \times n$.
The Framework for Alternating Updating NMF (FAUN) was proposed for the design of a massively parallel NMF code \cite{mpi_faun_ramki}.
In the FAUN the matrix products $\M{X}^\Tra\M{W}$, $\M{X}\M{H}$, $\M{W}^\Tra\M{W}$, and $\M{H}^\Tra\M{H}$ are computed and used to perform updates.
Many of the most effective NMF update rules require the computation of these 4 matrices as they appear in the gradient of the NMF NLS subproblems 
\begin{equation}
    \label{eq:updateW}
\min_{\M{W}\ge0}\| \M{X} - \M{W}\M{H}^\Tra \|_F
\end{equation}
and
\begin{equation}
    \label{eq:updateH}
\min_{\M{H}\ge0}\| \M{X} - \M{W}\M{H}^\Tra \|_F.
\end{equation}

Many methods can be implemented using the FAUN.
We now briefly discuss a few of the more popular methods.
\paragraph{Multiplicative Updates} is one of the most popular methods for performing NMF updates. Proposed in \cite{mu_4nmf}, it is guaranteed to non increase the objective function and uses the rules
\begin{align*}
    \M{W}_{ij} \leftarrow \M{W}_{ij} \frac{(\M{X}\M{H})_{ij}}{(\M{W}\M{H}^\Tra\M{H})_{ij}} \ \ \text{and} \ \
    \M{H}_{ij} \leftarrow \M{H}_{ij} \frac{(\M{X}^\Tra\M{W})_{ij}}{(\M{H}\M{W}^\Tra\M{W})_{ij}}.
\end{align*}

\paragraph{Hierarchical Least Squares} (HALS) uses the following update rules
\begin{align*}
    \V{w}_i \leftarrow \Big[ \V{w}_i + \frac{(\M{X}\M{H})_i - (\M{W}\M{H}^\Tra\M{H})_i}{(\M{H}^\Tra\M{H})_{ii}}\Big]_+ \ \ \text{and} \ \
    \V{h}_i \leftarrow \Big[ \V{h}_i + \frac{(\M{X}^\Tra\M{W})_i - (\M{H}\M{W}^\Tra\M{W})_i}{(\M{W}^\Tra\M{W})_{ii}}\Big]_+.
\end{align*}
All the columns of $\M{W}$ and $\M{H}$ are updated in sequence.
This method is popular for its good convergence properties and its simplicity to implement. 

\paragraph{Alternating Nonnegative Least Squares} ANLS updates the full $\M{W}$ and $\M{H}$ matrices in an alternating fashion by computing the optimal solutions for the NLS problems Eqns. (\ref{eq:updateH}) and (\ref{eq:updateW}).
These NLS problems are equivalent to solving the following $m+n$ quadratic programs (QPs)
\begin{align*}
\min_{\hat{\V{w}}_j \ge 0} \Big(\hat{\V{w}}_j^\Tra\M{H}^\Tra\M{H}\hat{\V{w}}_j - \hat{\V{w}}_j^\Tra\M{H}^\Tra\hat{\V{x}}_j \Big) \ \text{for} \ j = 1:m \\
\min_{\hat{\V{h}}_t \ge0} \Big(\hat{\V{h}}_t^\Tra\M{W}^\Tra\M{W}\hat{\V{h}}_t - \hat{\V{h}}_t^\Tra\M{W}^\Tra\V{x}_t \Big) \ \text{for} \ t = 1:n
\end{align*}
for every row vector $\hat{\V{w}}_j$ and $\hat{\V{h}}_t$ of $\M{W}$ and $\M{H}$, respectively.

It is simple to see that all of these rules rely on the aforementioned four matrix products.
Our proposed algorithms \ref{alg:lai_symnmf} and \ref{alg:lvs_nmf} can use any of these updates rules, and more e.g. projected gradient methods for solving NLS problems \cite{proj_grad4nmf}.

\section{Theorems for Randomized Numerical Linear Algebra}
\label{app:rndNLA_thms}
This section contains a number of theorems that we use from the Randomized NLA literature.

\subsection{Structural Condition Theorems}
\label{app:sc_theorems}
We include statements of the two Structural Conditions taken from the work by Larsen and Kolda \cite{larsen2022sketching}.
Prior versions and similar statements of this result are originally from \cite{sketching4NLA_woodruff,Ismail_row_samp}
These theorems are included for completeness and reference.
\begin{theorem}
\label{thm:su_bound} 
    Given $\M{A} \in \mathbb{R}^{m\times k}$ consider its SVD $\M{U}\M{\Sigma}\M{V}^\Tra$ and its row leverage scores $l_i(\M{A})$ for each row $i \in [m]$, where $[m]$ denotes the set of integers form 1 to $m$.
    Let $\hat{l}_i(\M{A})$ be an overestimate of the leverage score such that for some $\beta \le 1$ it holds that $p_i(\hat{l}_i(\M{A}))\ge\beta*p_i(l_i(\M{A}))$ for all $i \in [m]$, where $p_i(l_i(\M{A}))$ denotes the probability corresponding to the $i$th leverage score.
    Construct a row sampling and rescaling matrix $\M{S} \in \mathbb{R}^{s \times m}$ via importance sampling according to the leverage score overestimates $\hat{l}_i(\M{A})$.
    If $s > \frac{144k\log(2k/\delta)}{\beta\epsilon^2}$ then the following equation holds with at least probability $1-\delta$
    \begin{equation*}
        1-\epsilon \le \sigma^2_i(\M{S}\M{U}) \le 1+ \epsilon
    \end{equation*}
    for all $i \in [m]$ and $\epsilon,\delta \in (0,1)$.
\end{theorem}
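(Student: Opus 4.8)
The plan is to reproduce, in the simpler setting where no rows are fixed deterministically, the Matrix Chernoff argument already used for \cref{thm:hybrid_su_bound}; the statement with the missestimation factor $\beta$ in \cref{sketching4NLA_woodruff,Ismail_row_samp} is obtained the same way, just keeping track of $\beta$ in the spectral-norm and variance bounds.

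First I would reduce the two-sided singular value bound to a single operator-norm estimate: since the numbers $\sigma_i^2(\M{S}\M{U})$ are exactly the eigenvalues of $\M{U}^\Tra\M{S}^\Tra\M{S}\M{U}$, the claim $1-\epsilon \le \sigma_i^2(\M{S}\M{U}) \le 1+\epsilon$ for all $i$ is equivalent to $\|\M{I}_k - \M{U}^\Tra\M{S}^\Tra\M{S}\M{U}\|_2 \le \epsilon$. Writing $\V{u}_i$ for the $i$th row of $\M{U}$, $\hat{p}_i$ for the sampling probability built from the overestimates $\hat{l}_i(\M{A})$, and $c_j$ for the index drawn at the $j$th trial, the special structure of $\M{S}$ in \cref{eq:lev_score_S} gives $\M{U}^\Tra\M{S}^\Tra\M{S}\M{U} = \tfrac{1}{s}\sum_{j=1}^s \tfrac{1}{\hat{p}_{c_j}}\V{u}_{c_j}\V{u}_{c_j}^\Tra$, so that
\begin{equation*}
    \M{I}_k - \M{U}^\Tra\M{S}^\Tra\M{S}\M{U} = \frac{1}{s}\sum_{j=1}^s \M{X}_j, \qquad \M{X}_j = \M{I}_k - \frac{1}{\hat{p}_{c_j}}\V{u}_{c_j}\V{u}_{c_j}^\Tra ,
\end{equation*}
a sum of independent, identically distributed, symmetric random matrices.

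Next I would verify the three hypotheses of the Matrix Chernoff Bound (\cref{thm:matrix_chernoff}) for $\M{X}_j$. Mean zero follows from $\E[\tfrac{1}{\hat{p}}\V{u}\V{u}^\Tra] = \sum_i \V{u}_i\V{u}_i^\Tra = \M{U}^\Tra\M{U} = \M{I}_k$, using that $\M{U}$ has orthonormal columns (so also $\sum_i \|\V{u}_i\|_2^2 = k$). The norm bound $\gamma$ and the variance bound $\nu^2$ both hinge on the missestimation hypothesis: since $\hat{p}_i \ge \beta\, p_i = \beta\|\V{u}_i\|_2^2/k$, each rank-one term obeys $\|\tfrac{1}{\hat{p}_i}\V{u}_i\V{u}_i^\Tra\|_2 = \|\V{u}_i\|_2^2/\hat{p}_i \le k/\beta$, which yields $\|\M{X}_j\|_2 \le \gamma := k/\beta$ (a rank-one subtraction from the identity), and, after expanding $\E[\M{X}^\Tra\M{X}] = \E[\tfrac{\|\V{u}\|_2^2}{\hat{p}^2}\V{u}\V{u}^\Tra] - \M{I}_k \preceq (k/\beta)\M{I}_k - \M{I}_k$, a PSD matrix, the variance bound $\nu^2 := k/\beta$. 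Plugging $\gamma$ and $\nu^2$ into \cref{thm:matrix_chernoff} bounds the failure probability by $2k\exp\!\big(-s\epsilon^2/(2\nu^2 + 2\gamma\epsilon/3)\big)$; since $\epsilon < 1$ this is at most $2k\exp(-c\,s\epsilon^2\beta/k)$ for an absolute constant $c$, and taking $s > 144\,k\log(2k/\delta)/(\beta\epsilon^2)$ forces this below $\delta$, which is the asserted bound.

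I do not expect a genuine obstacle: this is a routine second-moment computation fed into a Matrix Chernoff inequality. The only points requiring care are (i) propagating the missestimation factor $\beta$ consistently through the $\gamma$ and $\nu^2$ estimates via the single inequality $\|\V{u}_i\|_2^2/\hat{p}_i \le k/\beta$, and (ii) matching the numerical constant $144$ to the precise form of the bound stated as \cref{thm:matrix_chernoff} — i.e.\ being correspondingly wasteful in the $\gamma$, $\nu^2$ bounds, exactly as in the proof of \cref{thm:hybrid_su_bound}, which arrives at the same constant.
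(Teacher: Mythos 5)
Your proof is correct: the reduction to $\|\M{I}_k-\M{U}^\Tra\M{S}^\Tra\M{S}\M{U}\|_2\le\epsilon$, the mean-zero/$\gamma$/$\nu^2$ verifications with $\|\V{u}_i\|_2^2/\hat{p}_i\le k/\beta$, and the final application of \cref{thm:matrix_chernoff} all check out, with the constant $144$ comfortably absorbing your slack. The paper does not prove this statement itself (it imports it from \cite{larsen2022sketching,sketching4NLA_woodruff,Ismail_row_samp}), but your argument is exactly the specialization, with the $\beta$ factor tracked, of the Matrix Chernoff proof the paper gives for its hybrid generalization \cref{thm:hybrid_su_bound}, so this is essentially the same approach.
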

This theorem tells us that all the singular values of $\M{S}\M{U}$ are close to 1 if we take enough samples $s$.
Note that this implies
\begin{equation}
\label{eq:i_ussu}
     \|\M{I} - \M{U}^\Tra\M{S}^\Tra\M{S}\M{U}\|_2 < \epsilon.
\end{equation}

\subsection{Randomized Matrix Multiply}
For reference we state a result originally from \cite{apx_matmul_drineas}.
\begin{theorem}
\label{thm:rnd_matmul}
    Consider two matrices $\M{A} \in \mathbb{R}^{k\times m}$ and $\M{B} \in \mathbb{R}^{k\times n}$ and their product $\M{A}^\Tra\M{B} = \M{C}$.
    Construct a sampling matrix $\M{S}$ with $s$ rows that are chosen according to the probability distribution $\V{p} \in [0,1]^n$ with a $\beta > 0$ such that $\V{p}_i \ge \beta\|\M{A}[i,:]\|_2^2/\|\M{A}\|_F^2$ for all $i \in [m]$.
    Define $\V{t}$ as vector of length $s$ of sampled indices such that $\V{t}_i$ is the index of the $i$th sampled row.
    Consider the approximate matrix product
    \begin{equation*}
        \frac{1}{s}\sum_{i}^s \M{A}[\V{t}_i,:]^\Tra\M{B}[\V{t}_i,:] = \M{A}^\Tra\M{S}^\Tra\M{S}\M{B}
    \end{equation*}
    Then the approximate matrix product satisfies
    \begin{equation*}
        \E \Big[\|\M{A}^\Tra\M{B} - \M{A}^\Tra\M{S}^\Tra\M{S}\M{B}\|_F^2 \Big] \le \frac{1}{\beta s} \|\M{A}\|_F^2 \|\M{B}\|_F^2
    \end{equation*}
\end{theorem}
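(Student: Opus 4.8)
The plan is to write the sketched product as a sum of $s$ independent rank-one random matrices, show each is an unbiased estimator of $\tfrac1s\M{A}^\Tra\M{B}$, and then bound the variance term by term. First I would unfold the special structure of the sampling matrix: by \eqref{eq:lev_score_S}, row $\ell$ of $\M{S}$ equals $\tfrac{1}{\sqrt{s\,\V{p}_{\V{t}_\ell}}}\V{e}_{\V{t}_\ell}^\Tra$, so $\M{S}^\Tra\M{S} = \sum_{\ell=1}^{s}\tfrac{1}{s\,\V{p}_{\V{t}_\ell}}\V{e}_{\V{t}_\ell}\V{e}_{\V{t}_\ell}^\Tra$, whence
\begin{equation*}
  \M{A}^\Tra\M{S}^\Tra\M{S}\M{B} \;=\; \sum_{\ell=1}^{s}\M{Z}_\ell,
  \qquad \M{Z}_\ell \;:=\; \frac{1}{s\,\V{p}_{\V{t}_\ell}}\,\M{A}[\V{t}_\ell,:]^\Tra\M{B}[\V{t}_\ell,:],
\end{equation*}
and the $\M{Z}_\ell$ are i.i.d.\ because the indices $\V{t}_\ell$ are drawn independently according to $\V{p}$. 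Unbiasedness is then immediate: $\E[\M{Z}_\ell] = \sum_i \V{p}_i\cdot\tfrac{1}{s\,\V{p}_i}\M{A}[i,:]^\Tra\M{B}[i,:] = \tfrac1s\M{A}^\Tra\M{B}$, so $\E[\M{A}^\Tra\M{S}^\Tra\M{S}\M{B}] = \M{A}^\Tra\M{B}$.

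Next I would expand the Frobenius error with the trace inner product. Because the $\M{Z}_\ell$ are independent and centered about $\tfrac1s\M{A}^\Tra\M{B}$, the cross terms vanish and
\begin{equation*}
  \E\!\left[\Big\|\M{A}^\Tra\M{B} - \textstyle\sum_\ell\M{Z}_\ell\Big\|_F^2\right]
  \;=\; \sum_{\ell=1}^{s}\E\!\left[\big\|\M{Z}_\ell - \tfrac1s\M{A}^\Tra\M{B}\big\|_F^2\right]
  \;\le\; s\,\E\!\left[\|\M{Z}_1\|_F^2\right],
\end{equation*}
where the last step discards the nonnegative term $\|\tfrac1s\M{A}^\Tra\M{B}\|_F^2$, which only weakens the inequality. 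Using that $\M{A}[i,:]^\Tra\M{B}[i,:]$ is rank one with $\|\M{A}[i,:]^\Tra\M{B}[i,:]\|_F^2 = \|\M{A}[i,:]\|_2^2\,\|\M{B}[i,:]\|_2^2$, one computes
\begin{equation*}
  \E\!\left[\|\M{Z}_1\|_F^2\right]
  \;=\; \sum_i \V{p}_i\cdot\frac{\|\M{A}[i,:]\|_2^2\,\|\M{B}[i,:]\|_2^2}{s^2\,\V{p}_i^2}
  \;=\; \frac{1}{s^2}\sum_i \frac{\|\M{A}[i,:]\|_2^2\,\|\M{B}[i,:]\|_2^2}{\V{p}_i}.
\end{equation*}

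Finally I would invoke the sampling hypothesis $\V{p}_i \ge \beta\,\|\M{A}[i,:]\|_2^2/\|\M{A}\|_F^2$, which gives $\|\M{A}[i,:]\|_2^2/\V{p}_i \le \|\M{A}\|_F^2/\beta$ for every $i$, hence $\sum_i \|\M{A}[i,:]\|_2^2\,\|\M{B}[i,:]\|_2^2/\V{p}_i \le (\|\M{A}\|_F^2/\beta)\sum_i \|\M{B}[i,:]\|_2^2 = \|\M{A}\|_F^2\|\M{B}\|_F^2/\beta$. Chaining the last three displays yields $\E[\|\M{A}^\Tra\M{B} - \M{A}^\Tra\M{S}^\Tra\M{S}\M{B}\|_F^2] \le \tfrac{1}{\beta s}\|\M{A}\|_F^2\|\M{B}\|_F^2$, as claimed. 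There is no real obstacle here — the result is the standard second-moment analysis of \cite{apx_matmul_drineas}; the only point requiring care is the variance bookkeeping, namely that sampling with replacement makes the $\M{Z}_\ell$ genuinely independent so the cross terms drop, and that the $\tfrac1s\M{A}^\Tra\M{B}$ correction can be thrown away for an upper bound.
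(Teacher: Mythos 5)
Your proof is correct. The paper does not actually prove this theorem---it states it for reference and cites \cite{apx_matmul_drineas}---and your argument is precisely the standard second-moment analysis from that source: decompose $\M{A}^\Tra\M{S}^\Tra\M{S}\M{B}$ into i.i.d.\ rank-one terms, use unbiasedness and independence to kill the cross terms, drop the nonnegative $\|\tfrac{1}{s}\M{A}^\Tra\M{B}\|_F^2$ correction, and apply the hypothesis on $\V{p}_i$ to bound the second moment. All the steps check out, including the variance bookkeeping you flag as the only delicate point.
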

Applying Theorem (\ref{thm:rnd_matmul}) in conjunction with Markov's Inequality we obtain the following Lemma
\begin{lemma}
\label{thm:fro_bound}
    Taking $s \ge \frac{2\|\M{A}\|_F^2}{\beta\delta\epsilon_r}$ samples where $\delta,\epsilon_r \in (0,1)$ results in 
    \begin{equation*}
        \|\M{A}^\Tra\M{B} - \M{A}^\Tra\M{S}^\Tra\M{S}\M{B}\|_F^2  \ge \frac{\epsilon_r\|\M{B}\|}{2}
    \end{equation*}
    with probability $\delta$.
\end{lemma}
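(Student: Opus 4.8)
The plan is to obtain \cref{thm:fro_bound} as a one-line consequence of the second-moment estimate in \cref{thm:rnd_matmul} combined with Markov's inequality. First I would apply \cref{thm:rnd_matmul} directly to the pair $\M{A},\M{B}$, using the same row-sampling-and-rescaling matrix $\M{S}$ built from a distribution $\V{p}$ satisfying $\V{p}_i \ge \beta\|\M{A}[i,:]\|_2^2/\|\M{A}\|_F^2$, which yields $\E\big[\|\M{A}^\Tra\M{B} - \M{A}^\Tra\M{S}^\Tra\M{S}\M{B}\|_F^2\big] \le \tfrac{1}{\beta s}\|\M{A}\|_F^2\|\M{B}\|_F^2$. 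So the only probabilistic input is this variance-type bound, and everything after it is deterministic bookkeeping.

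Next, writing $Z := \|\M{A}^\Tra\M{B} - \M{A}^\Tra\M{S}^\Tra\M{S}\M{B}\|_F^2 \ge 0$ and taking the threshold $t := \epsilon_r\|\M{B}\|_F^2/2$, Markov's inequality gives $Pr[Z \ge t] \le \E[Z]/t \le \tfrac{2\|\M{A}\|_F^2}{\beta s\,\epsilon_r}$, where the factor $\|\M{B}\|_F^2$ cancels. I would then substitute the hypothesis $s \ge 2\|\M{A}\|_F^2/(\beta\delta\epsilon_r)$, which forces this tail probability to be at most $\delta$; hence $Z \ge \epsilon_r\|\M{B}\|_F^2/2$ occurs with probability at most $\delta$ (equivalently $Z < \epsilon_r\|\M{B}\|_F^2/2$ with probability at least $1-\delta$), which is the stated conclusion. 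This is precisely the computation that gets instantiated later inside the proof of \cref{thm:hybrid_fro_bound} with the roles $\M{A}=\M{U}_R$, $\M{B}=\V{r}_R$, $\beta=1$.

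I do not anticipate any genuine obstacle: \cref{thm:rnd_matmul} does all of the real work and Markov's inequality is the routine expectation-to-tail step. The only points deserving a moment of care are tracking the constant (the $\tfrac{2}{\epsilon_r}$ emerging from the threshold against the $\tfrac{1}{\beta s}$ from the second-moment bound) and confirming that ``$\|\M{B}\|$'' in the displayed inequality is intended as $\|\M{B}\|_F^2$, so that it cancels cleanly against the expectation bound; under that reading the stated sample count $s \ge 2\|\M{A}\|_F^2/(\beta\delta\epsilon_r)$ is exactly what the argument needs.
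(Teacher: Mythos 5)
Your proposal is correct and follows exactly the same route as the paper's proof: invoke \cref{thm:rnd_matmul} for the second-moment bound $\E[Z] \le \tfrac{1}{\beta s}\|\M{A}\|_F^2\|\M{B}\|_F^2$, apply Markov's inequality at threshold $\epsilon_r\|\M{B}\|_F^2/2$ so that $\|\M{B}\|_F^2$ cancels, and substitute $s \ge 2\|\M{A}\|_F^2/(\beta\delta\epsilon_r)$ to bound the tail probability by $\delta$. Your two reading corrections are also right: the displayed ``$\|\M{B}\|$'' is intended as $\|\M{B}\|_F^2$, and ``with probability $\delta$'' means the bad event occurs with probability at most $\delta$ (the paper's own intermediate display even drops a factor of $2$ that your bookkeeping correctly retains).
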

\begin{proof}
    Applying Markov's Inequality we have 
    \begin{align*}
        Pr\Big[ \|\M{A}^\Tra\M{B} - \M{A}^\Tra\M{S}^\Tra\M{S}\M{B}\|_F^2 \ge \frac{\epsilon_r\|\M{B}\|_F^2}{2}  \Big] \le \frac{2\E[\|\M{A}^\Tra\M{B} - \M{A}^\Tra\M{S}^\Tra\M{S}\M{B}\|_F^2]}{\epsilon_r\|\M{B}\|_F^2} \\
        \le \frac{\|\M{A}\|_F^2 \|\M{B}\|_F^2}{\beta s \epsilon_r\|\M{B}\|_F^2}
        = \frac{\|\M{A}\|_F^2}{\beta s \epsilon_r}
    \end{align*}
    For the bound to hold with probability $\delta$ we must have $s \ge \frac{2\|\M{A}\|_F^2}{\beta\delta\epsilon_r}$.
\end{proof}

\subsection{Matrix Chernoff Bounds}
This is a Matrix Chernoff Bound taken from \cite{sketching4NLA_woodruff} which is used to show the validity of the Hybrid Sampling leverage scores matrix.
\begin{theorem}
\label{thm:matrix_chernoff}
Matrix Chernoff Bound:
Let $\M{X}_1,\cdots,\M{X}_s$ be independent copies of a symmetric random matrix $\M{X} \in \mathbb{R}^{k \times k}$ with $\E[\M{X}] = 0$, $\|\M{X}\|_2 \le \gamma$, and $\|\E[\M{X}^\Tra\M{X}]\|_2 \le \nu^2$.
Let $\M{W} = \frac{1}{s}\sum_{i=1}^s\M{X}_i$.
Then for any $\epsilon > 0$ we have
\begin{equation}
\label{eq:mcb_w_bound}
    Pr[\|\M{W}\|_2 > \epsilon ] \le 2k\exp(-s\epsilon^2/(2\nu^2+2\gamma \epsilon/3))
\end{equation}
\end{theorem}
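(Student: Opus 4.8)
The plan is to prove this by the matrix Chernoff--Cram\'er (matrix Laplace transform) method. Since $\M{X}$, and hence $\M{W}$, is symmetric, the event $\{\|\M{W}\|_2 > \epsilon\}$ is the union of $\{\lambda_{\max}(\M{W}) > \epsilon\}$ and $\{\lambda_{\max}(-\M{W}) > \epsilon\}$. The three hypotheses $\E[\M{X}]=\M{0}$, $\|\M{X}\|_2\le\gamma$, $\|\E[\M{X}^\Tra\M{X}]\|_2\le\nu^2$ are invariant under $\M{X}\mapsto-\M{X}$, so it suffices to bound $\Pr[\lambda_{\max}(\M{W})>\epsilon]$ and then apply a union bound, which accounts for the leading factor $2$. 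Writing $\M{Y}=\sum_{i=1}^s\M{X}_i=s\M{W}$, the task reduces to a tail bound for $\lambda_{\max}(\M{Y})$ at level $s\epsilon$.

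First I would apply the matrix Markov / Laplace-transform inequality: for any $\theta>0$, since $e^{\theta\lambda_{\max}(\M{Y})}=\lambda_{\max}(e^{\theta\M{Y}})\le\operatorname{tr} e^{\theta\M{Y}}$,
\begin{equation*}
    \Pr[\lambda_{\max}(\M{Y})\ge t]\ \le\ e^{-\theta t}\,\E\operatorname{tr}\exp(\theta\M{Y}).
\end{equation*}
The only genuinely non-elementary step is to ``factor'' $\E\operatorname{tr}\exp(\theta\sum_i\M{X}_i)$ across the independent summands despite non-commutativity. Because the $\M{X}_i$ are i.i.d., iterating the Golden--Thompson inequality $\operatorname{tr} e^{\M{A}+\M{B}}\le\operatorname{tr}(e^{\M{A}}e^{\M{B}})$ together with independence and the observation $\E\,e^{\theta\M{X}}\preceq\lambda_{\max}(\E\,e^{\theta\M{X}})\,\M{I}$ yields $\E\operatorname{tr}\exp(\theta\M{Y})\le k\,\big(\lambda_{\max}(\E\,e^{\theta\M{X}})\big)^{s}$. (For non-identically-distributed summands one would instead invoke Lieb's concavity theorem to get subadditivity of the matrix cumulant generating function, but Golden--Thompson is enough here.)

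Next I would bound the scalar $\lambda_{\max}(\E\,e^{\theta\M{X}})$. For $0<\theta<3/\gamma$ put $g(\theta)=\tfrac{\theta^2/2}{1-\theta\gamma/3}$; the elementary estimate $e^{\theta\mu}\le 1+\theta\mu+g(\theta)\mu^2$ for all $|\mu|\le\gamma$ (bound the tail of the exponential series by a geometric series), applied to the eigenvalues of $\M{X}$ via the spectral theorem, gives $e^{\theta\M{X}}\preceq\M{I}+\theta\M{X}+g(\theta)\M{X}^2$. Taking expectations, using $\E[\M{X}]=\M{0}$ and then $\M{I}+\M{M}\preceq e^{\M{M}}$ for $\M{M}\succeq\M{0}$, gives $\E\,e^{\theta\M{X}}\preceq\exp\!\big(g(\theta)\E[\M{X}^2]\big)$. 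Since $\M{X}^\Tra=\M{X}$ we have $\|\E[\M{X}^2]\|_2=\|\E[\M{X}^\Tra\M{X}]\|_2\le\nu^2$, so $\lambda_{\max}(\E\,e^{\theta\M{X}})\le e^{g(\theta)\nu^2}$ and therefore $\E\operatorname{tr}\exp(\theta\M{Y})\le k\exp\!\big(s\,g(\theta)\nu^2\big)$.

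Finally I would optimize over $\theta$. Combining with the first step, $\Pr[\lambda_{\max}(\M{Y})\ge s\epsilon]\le k\exp\!\big(s\,(-\theta\epsilon+g(\theta)\nu^2)\big)$, and the Bernstein choice $\theta=\epsilon/(\nu^2+\gamma\epsilon/3)\in(0,3/\gamma)$ makes $1-\theta\gamma/3=\nu^2/(\nu^2+\gamma\epsilon/3)$ and collapses the exponent to $-s\epsilon^2/\big(2(\nu^2+\gamma\epsilon/3)\big)=-s\epsilon^2/(2\nu^2+2\gamma\epsilon/3)$; doubling for the two-sided bound recovers \cref{eq:mcb_w_bound}. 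I expect the main obstacle to be the single non-commutative manipulation of the second paragraph --- passing from $\E\operatorname{tr}\exp(\theta\sum_i\M{X}_i)$ to a product over $i$ --- which is precisely where Golden--Thompson (or, in the general setting, Lieb's theorem) does the real work; everything else is scalar Bernstein-type calculus transported to matrices by operator monotonicity of $\exp$ and $\log$ together with the spectral mapping theorem.
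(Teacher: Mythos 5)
Your proposal is correct, and it is the standard Ahlswede--Winter/Tropp-style proof of the matrix Bernstein inequality: Laplace-transform bound on $\lambda_{\max}$, Golden--Thompson to factor the trace-exponential across the i.i.d.\ summands, the scalar estimate $e^{\theta\mu}\le 1+\theta\mu+g(\theta)\mu^2$ transported to $\M{X}$ by the spectral mapping theorem, and the Bernstein choice of $\theta$; I checked the algebra in the final optimization and it does collapse to $-s\epsilon^2/(2\nu^2+2\gamma\epsilon/3)$, with the factor $2k$ coming from $\operatorname{tr}\M{I}=k$ and the two-sided union bound. Note, however, that the paper does not prove this theorem at all --- it is stated in the appendix purely as an imported result, cited from the sketching literature, and used as a black box in the proof of \cref{thm:hybrid_su_bound}. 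So there is no in-paper argument to compare against; your derivation is a self-contained and correct reconstruction of the cited result. One small remark: the iterated Golden--Thompson step as you run it gives the variance proxy $s\,\lambda_{\max}(\E[\M{X}^2])$ rather than $\lambda_{\max}(\sum_i\E[\M{X}_i^2])$, which is weaker in general but identical here because the copies are i.i.d.; your parenthetical about Lieb's theorem correctly identifies what would be needed in the non-identically-distributed case.
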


\section{Additional Experimental Data}
This section contains some additional data and experiments related to the experiments on the WoS and OAG data sets.
\subsection{World of Science Data Set}
\label{sec:additional_wos_exps}
This section contains some additional experiments on the World of Science data set. The first set of experiments are concerned with varying $\rho$ or the column over sampling parameter in the \ref{alg:QB_decomposition}.
The observation here is that increasing $\rho$ does not seem to have a beneficial impact on clustering quality or final residual.
\cref{fig:WoS_NR_plts} shows the runs for the various algorithms with $\rho=40$ and $\rho=80$.
\cref{tab:WoS_exp_additional_p40} and  \cref{tab:WoS_exp_additional_p80} give various metrics associated with these runs.
Each algorithm was run 10 times.
``Iters'' is average number of iterations, ``Time'' is the average time in seconds, ``Avg. Min-Res'' is the average minimum residual achieved, ``Min-Res'' is the overall minimum residual achieved, and ``Mean-ARI'' is the mean of the ARI scores for each algorithm.
\begin{figure}
     \centering
     \begin{subfigure}[b]{0.475\textwidth}
         \centering
         \includegraphics[width=\textwidth]{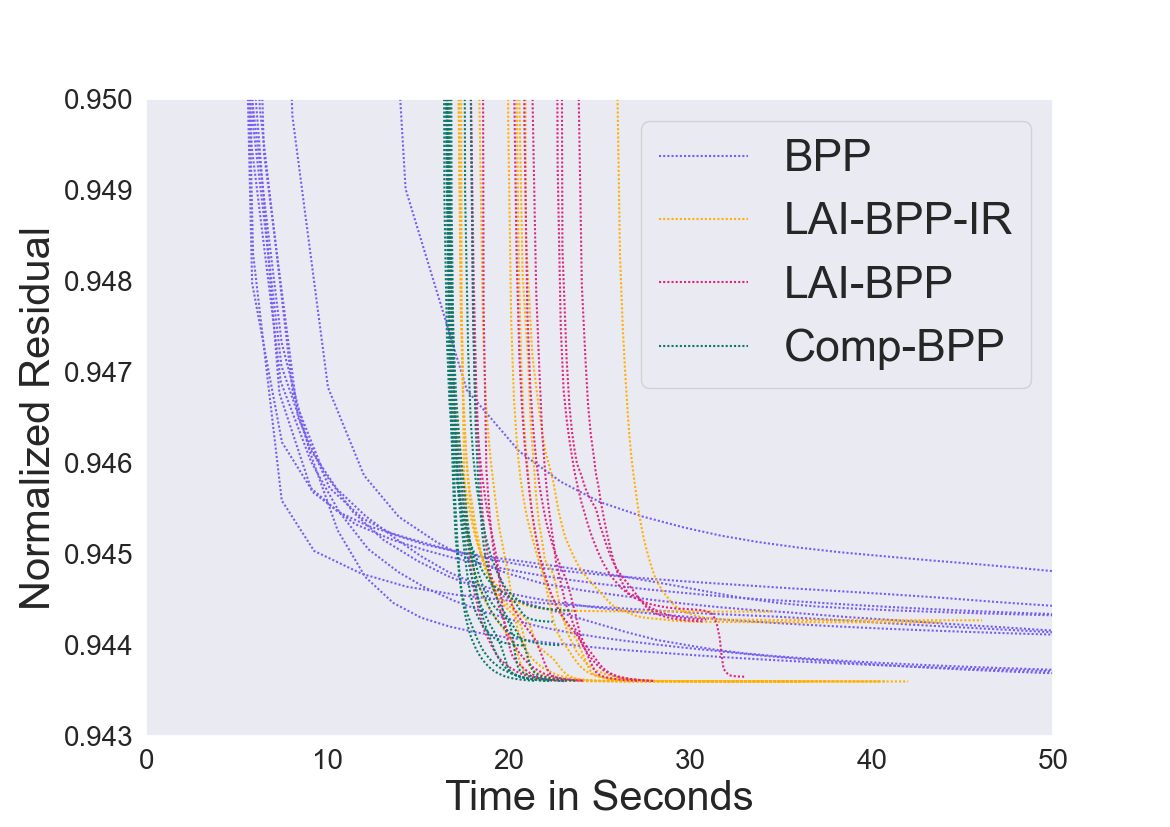}
         \caption{BPP, p=40}
         \label{fig:WoS_nr_BPP_p30}
     \end{subfigure}
     \hfill     
     \begin{subfigure}[b]{0.475\textwidth}
         \centering
         \includegraphics[width=\textwidth]{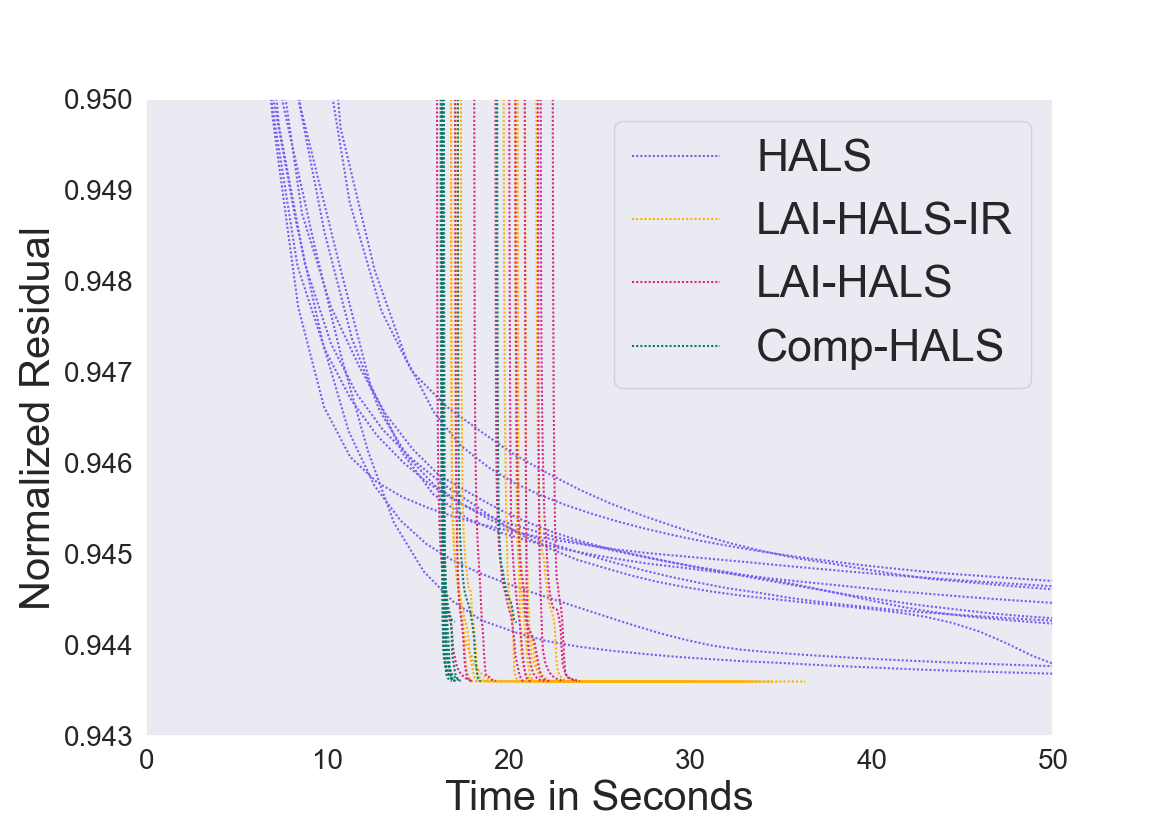}
         \caption{HALS, p=40}
         \label{fig:WoS_nr_HALS_p30}
     \end{subfigure}
     \begin{subfigure}[b]{0.475\textwidth}
         \centering
         \includegraphics[width=\textwidth]{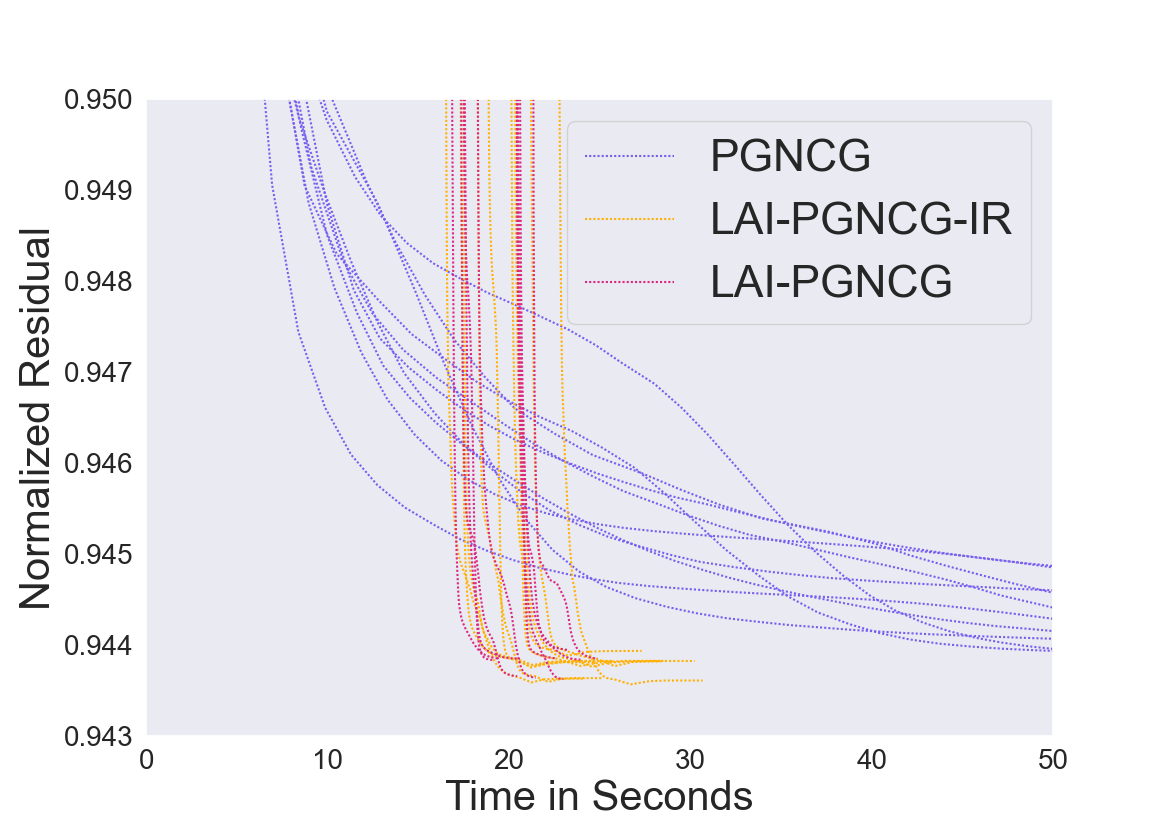}
         \caption{GNCG, p=40}
         \label{fig:WoS_nr_GNCG_p30}
     \end{subfigure}  
     \hfill
     \begin{subfigure}[b]{0.475\textwidth}
         \centering
         \includegraphics[width=\textwidth]{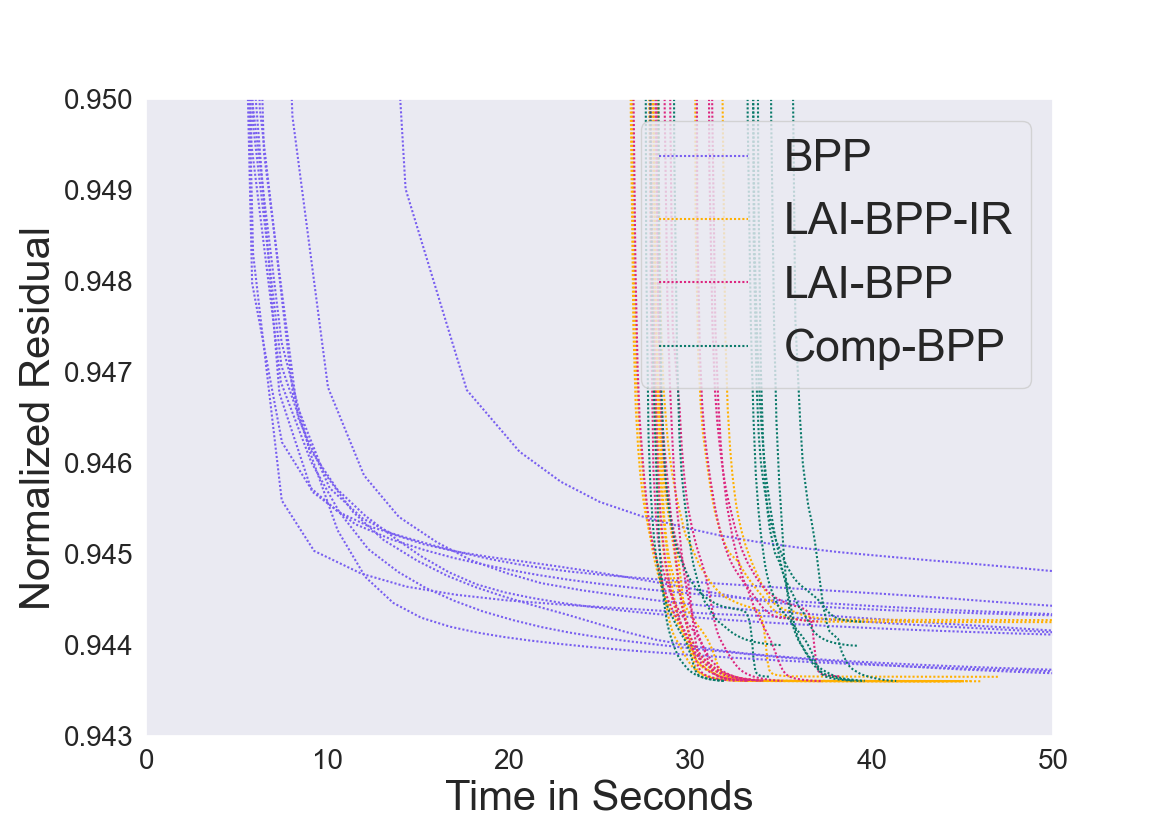}
         \caption{BPP, p=80}
         \label{fig:fig:WoS_nr_bpp_p40}
     \end{subfigure}
          \begin{subfigure}[b]{0.475\textwidth}
         \centering
         \includegraphics[width=\textwidth]{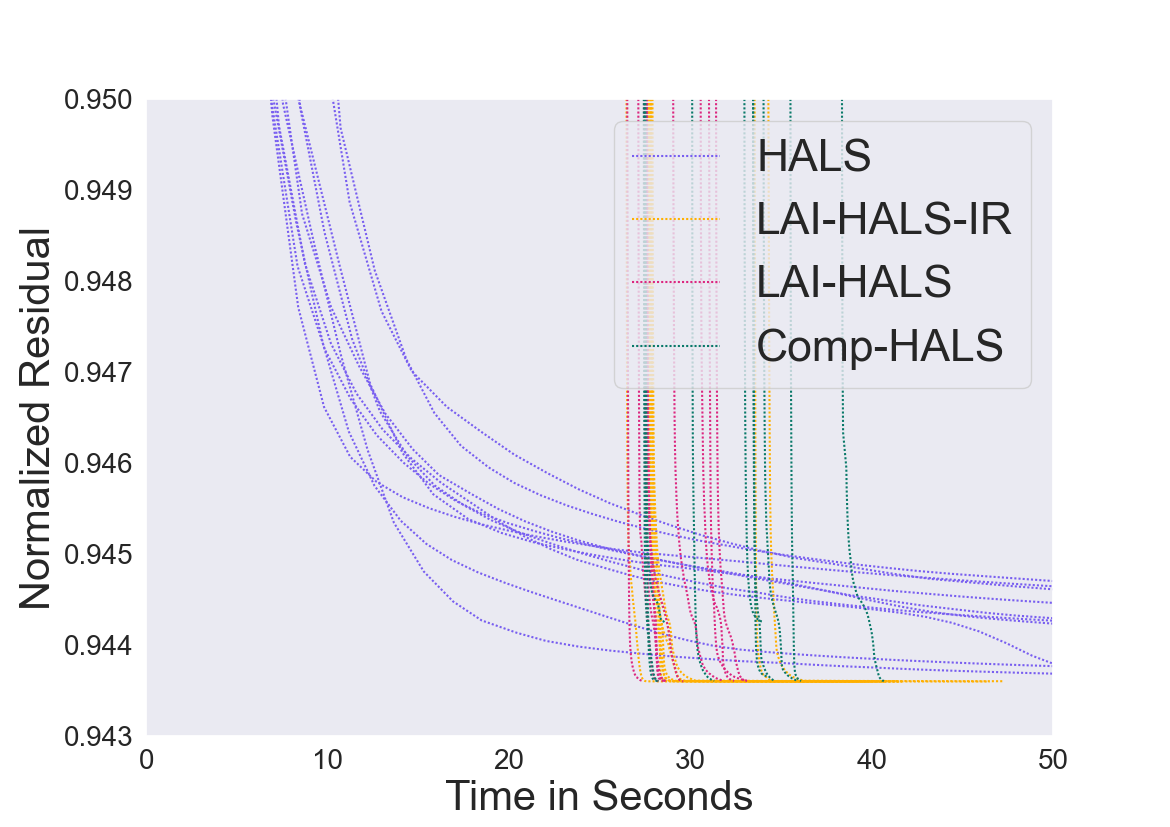}
         \caption{HALS, p=80}
         \label{fig:WoS_nr_HALS_p40}
     \end{subfigure}  
     \hfill
     \begin{subfigure}[b]{0.475\textwidth}
         \centering
         \includegraphics[width=\textwidth]{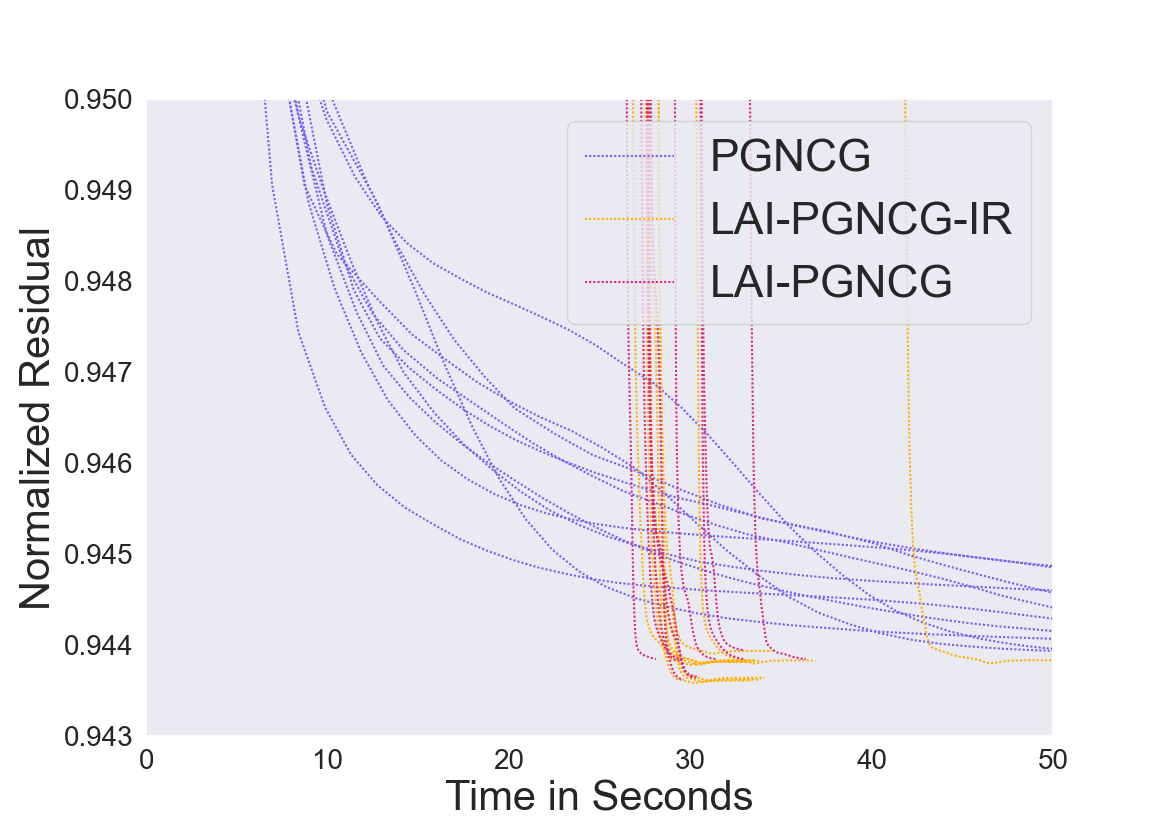}
         \caption{GNCG, p=80}
         \label{fig:WoS_nr_gncg_p40}
     \end{subfigure}
     \hfill
    \caption{Normalized Residual error value for various SymNMF Algorithms on the WoS data set using an EDVW Hypergraph Representation.}
    \label{fig:WoS_NR_plts_vp}
\end{figure}

\begin{table}
\resizebox{\columnwidth}{!}{
\begin{tabular}{l||c|c|c|c|c|}%
\centering
\bfseries Alg. & \bfseries Iters & \bfseries Time & \bfseries Avg. Min-Res & \bfseries Min-Res & \bfseries Mean-ARI 
\csvreader[head to column names]{CSV_Data/WoS_exp31_T.csv}{}
{\\\hline \Alg & \AvgIters & \Time & \AvgMinRes & \MinRes & \MeanARI}
\end{tabular}
}
\caption{Data Table for the WoS data set. Each algorithm was run 10 times. For LIA methods auto-$q$ and $p=40$. The columns record average number of iterations until the stopping criteria is met, average time in seconds, average minimum residual, and minimum residual achieved over all runs.}
\label{tab:WoS_exp_additional_p40}
\end{table}

\begin{table}
\resizebox{\columnwidth}{!}{
\begin{tabular}{l||c|c|c|c|c|}%
\centering
\bfseries Alg. & \bfseries Iters & \bfseries Time & \bfseries Avg. Min-Res & \bfseries Min-Res & \bfseries Mean-ARI 
\csvreader[head to column names]{CSV_Data/WoS_exp32_T.csv}{}
{\\\hline \Alg & \AvgIters & \Time & \AvgMinRes & \MinRes & \MeanARI}
\end{tabular}
}
\caption{Data Table for the WoS data set. Each algorithm was run 10 times. For LIA methods auto-$q$ and $p=80$. The columns record average number of iterations until the stopping criteria is met, average time in seconds, average minimum residual, and minimum residual achieved over all runs.}
\label{tab:WoS_exp_additional_p80}
\end{table}

The second set of experiments are run with $q=2$ and without the use of \cref{alg:adaRRF}.
\cref{tab:WoS_exp_additional_q2} gives data for this run.
One can observe that method without IR do not achieve high quality residual or ARI scores.
While IR can help fix these issues it does so at a higher computational cost.
The convergence plots for these experiments are in \cref{fig:WoS25_residual_GNCG,fig:WoS25_residual_BPP,fig:WoS25_residual_HALS}.
Contrasting these withe the results in \Cref{tab:WoS_symnmf_exp} we conclude that using \ref{alg:adaRRF} to obtain an appropriate $q$ is generally more efficient in terms of reducing residual and overall run time that using a static choice of $q=2$.

\begin{figure}
     \centering
     \begin{subfigure}[b]{0.32\textwidth}
         \centering
         \includegraphics[width=\textwidth]{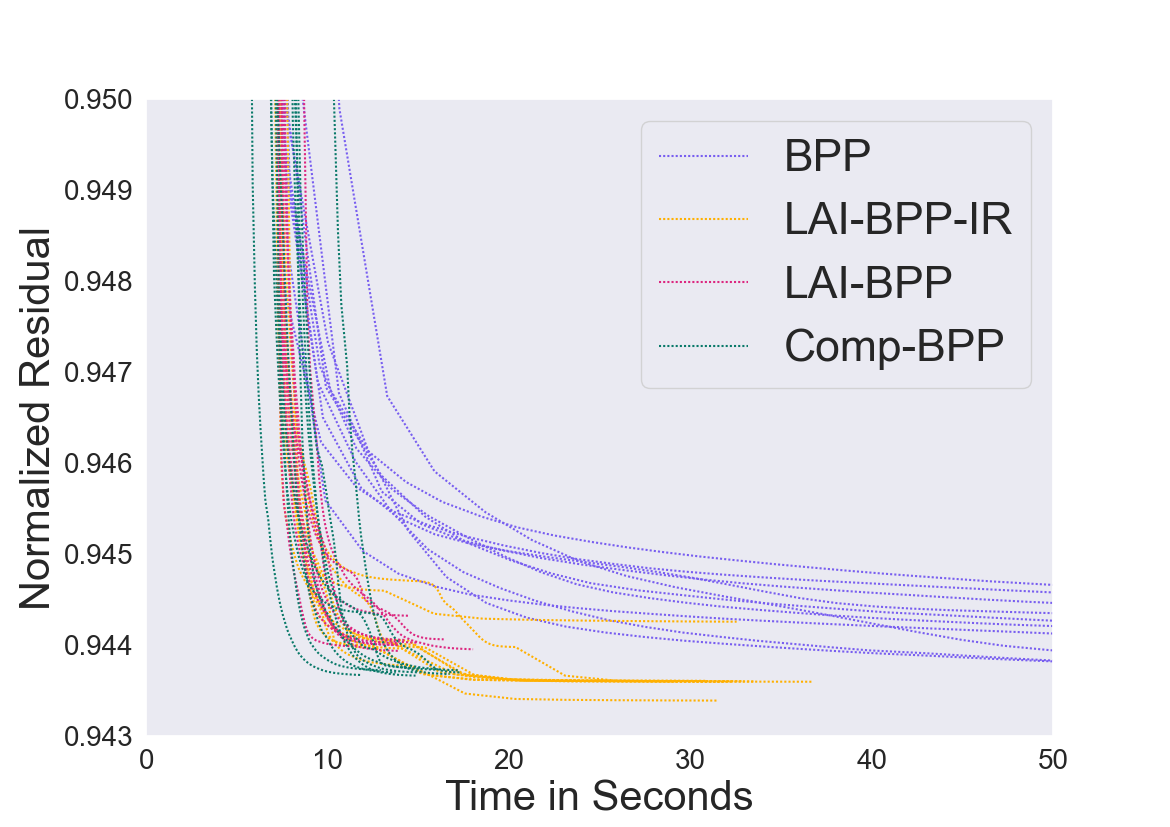}
         \caption{BPP}
         \label{fig:WoS25_residual_BPP}
     \end{subfigure}
     \begin{subfigure}[b]{0.32\textwidth}
         \centering
         \includegraphics[width=\textwidth]{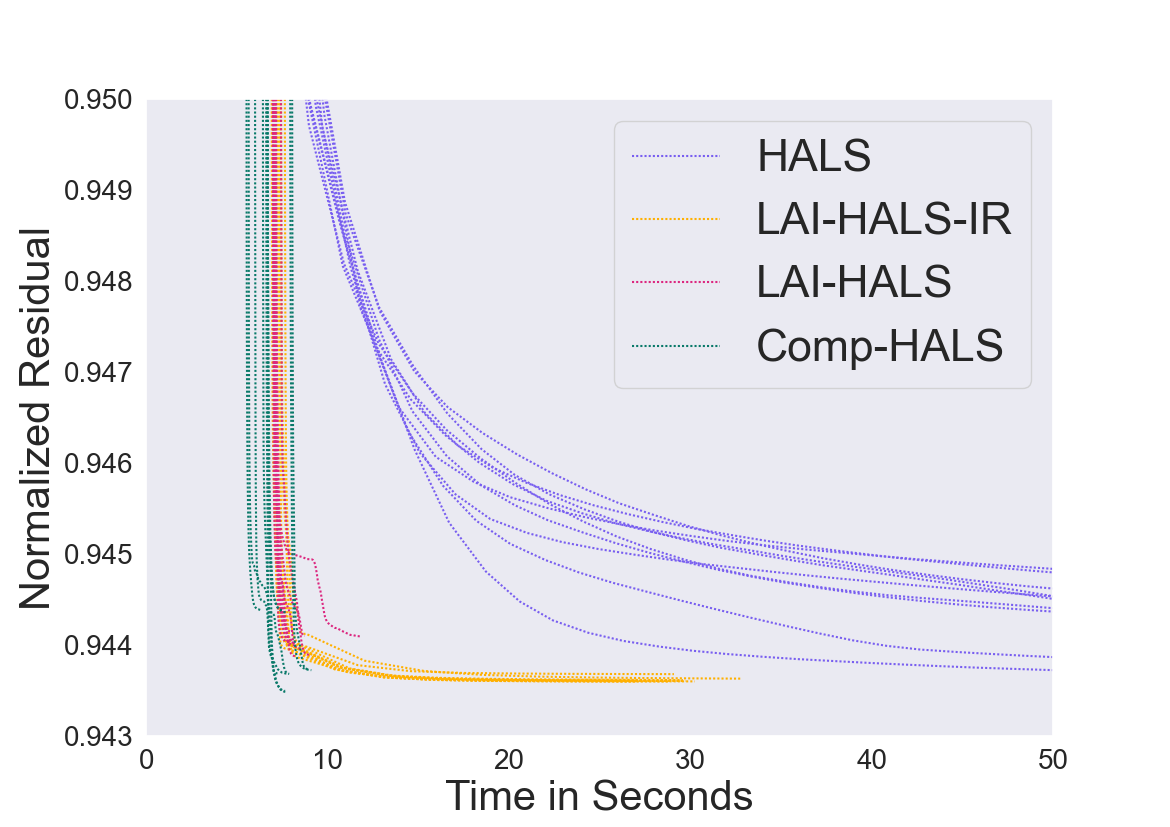}
         \caption{HALS}
         \label{fig:WoS25_residual_HALS}
     \end{subfigure}
     \begin{subfigure}[b]{0.32\textwidth}
         \centering
         \includegraphics[width=\textwidth]{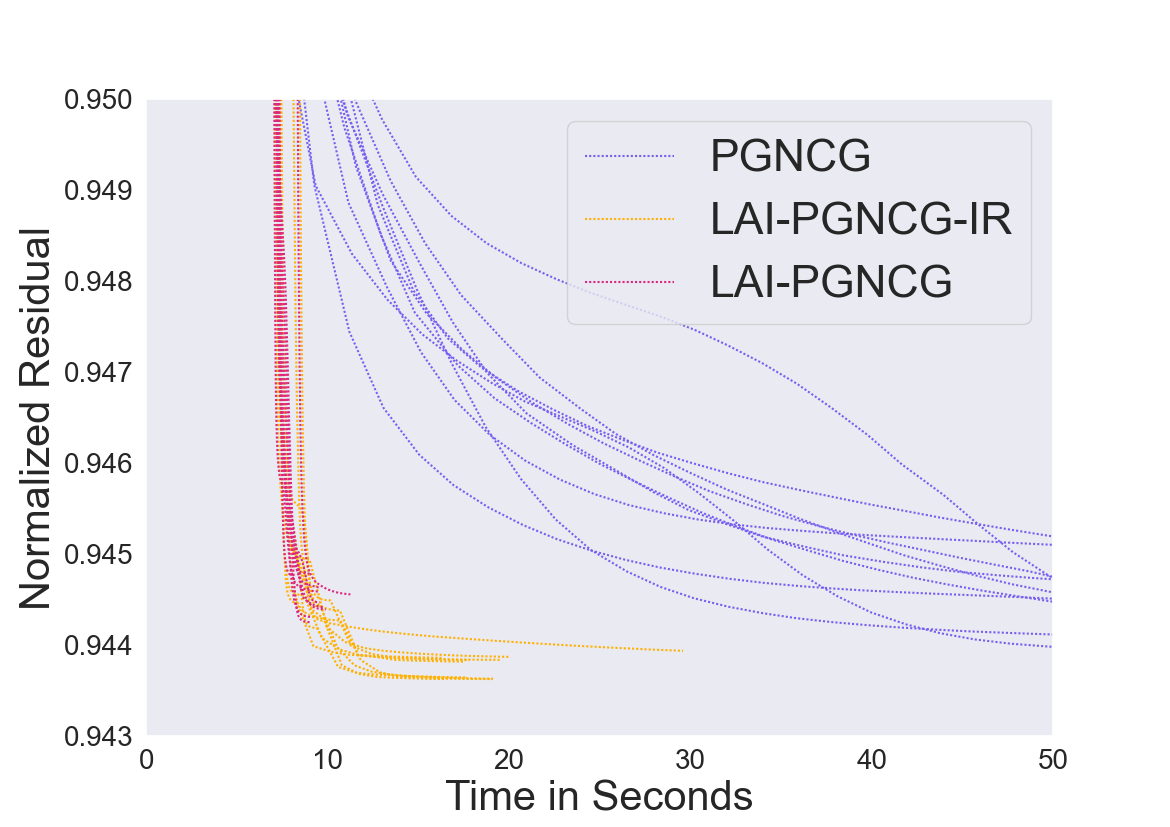}
         \caption{PGNCG}
         \label{fig:WoS25_residual_GNCG}
     \end{subfigure}  
\caption{Normalized Residual Norm Plots for the WoS data set. Three different update rules are shown: BPP, HALS, and PGNCG. These experiments use $q=2$ and do not make use of \cref{alg:adaRRF}.}
\end{figure}

\begin{table}
\resizebox{\columnwidth}{!}{
\begin{tabular}{l||c|c|c|c|c|}%
\centering
\bfseries Alg. & \bfseries Iters & \bfseries Time & \bfseries Avg. Min-Res & \bfseries Min-Res & \bfseries Mean-ARI 
\csvreader[head to column names]{CSV_Data/WoS_all_exp25_T.csv}{}
{\\\hline \Alg & \AvgIters & \Time & \AvgMinRes & \MinRes & \MeanARI}
\end{tabular}
}
\caption{Data Table for the WoS data set. Each algorithm was run 10 times. For LAI methods $q=2$, no \ref{alg:adaRRF} is used. The columns record average number of iterations until the stopping criteria is met, average time in seconds, average minimum residual, and minimum residual achieved over all runs. Contrasting this with \cref{fig:WoS_NR_plts} and \cref{tab:WoS_symnmf_exp} shows that using \ref{alg:adaRRF} is beneficial for the WoS data set.}
\label{tab:WoS_exp_additional_q2}
\end{table}

\paragraph{WoS Experiments System Details}
The WoS experiments were run on a MacBook Pro with MATLAB 2021a.
The MacBook Pro has a 2.3 GHz Quad-Core Intel Core i7 processor and 16 GBs of RAM.
MATLAB was given access to all 4 cores.

\newpage
\subsection{Microsoft Open Academic Graph Experiments}
This section contains additional data related to the Microsoft OAG experiments.
\cref{tab:tkw_all0} shows the top 10 words for each cluster found by the HALS algorithm and \cref{tab:tkw_all1} shows the top 10 words for each cluster found by the LVS-HALS algorithm.

\paragraph{System Details}
Experiments on the OAG dataset were run on the Hive Cluster at the Georgia Institute of Technology.
The runs were given access to eight Xeon 6226 CPU @ 2.70GHz on a single shared memory node.
Experiments were run in MATLAB version 2019a.

\label{sec:moag_appendix}
\begin{table}
\resizebox{\columnwidth}{!}{
\begin{tabular}{|l||c|c|c|c|c|c|c|c|c|c|}%
\centering
\bfseries Topic & \bfseries TW1 & \bfseries TW2 & \bfseries TW3 & \bfseries TW4 & \bfseries TW5 & \bfseries TW6 & \bfseries TW7 & \bfseries TW8 & \bfseries TW9 & \bfseries TW10 
\csvreader[head to column names]{CSV_Data/pd_hals_twords.csv}{}
{\\\hline \Label & \twordsA & \twordsB & \twordsC & \twordsD & \twordsE & \twordsF & \twordsG & \twordsH & \twordsI  & \twordsJ  }
\end{tabular}
}
\caption{Top key words for HALS output. Run on the Microsoft Open Academic Graph.}
\label{tab:tkw_all0}
\end{table}

\begin{table}
\resizebox{\columnwidth}{!}{
\begin{tabular}{|l||c|c|c|c|c|c|c|c|c|c|}%
\centering
\bfseries Topic & \bfseries TW1 & \bfseries TW2 & \bfseries TW3 & \bfseries TW4 & \bfseries TW5 & \bfseries TW6 & \bfseries TW7 & \bfseries TW8 & \bfseries TW9 & \bfseries TW10 
\csvreader[head to column names]{CSV_Data/pd_lvs_twords.csv}{}
{\\\hline \Label & \twordsA & \twordsB & \twordsC & \twordsD & \twordsE & \twordsF & \twordsG & \twordsH & \twordsI  & \twordsJ  }
\end{tabular}
}
\caption{Topwords for Leverage Score HALS output. Selected topics from the Microsoft Open Academic Graph run using HALS as the update rule with Leverage Score sampling ($\tau=1/s$). The 10 top words in terms of tf-idf association are shown in the table. We can see many of the top key words per topic seem to form a coherent subject matter.}
\label{tab:tkw_all1}
\end{table}

\subsubsection{Leverage Scores and Deterministic Sampling}
\label{sec:lvs_sampling_plts}
\Cref{fig:percent_lvs,fig:percent_det_samps} show two sets of statistics about the LvS-HALS method when applied to the OAG data set.
The two different bars distinguished by L and R denote the Left and Right factors of the SymNMF problem.
\Cref{fig:percent_det_samps} shows the percentage (y-axis) of the total fraction of samples that are being taken deterministically at each iteration $(\frac{s_D}{s_R + s_D})$.
Note that values are only plotted for every 5th iteration (x-axis).
This shows a clear trend towards taking fewer and fewer samples as the iterations progress.
\Cref{fig:percent_lvs} shows that fraction of leverage score mass or $\frac{\theta}{k}$ that is accounted for by the deterministic samples at each iteration.
The amount of leverage score mass being accounted for quickly approaches 1.
This means that a small number of deterministic samples are accounting for nearly all of the leverage score mass in the computed factor $\M{H}$.

\begin{figure}
     \centering
     \begin{subfigure}[b]{0.48\textwidth}
         \centering
         \includegraphics[width=\textwidth]{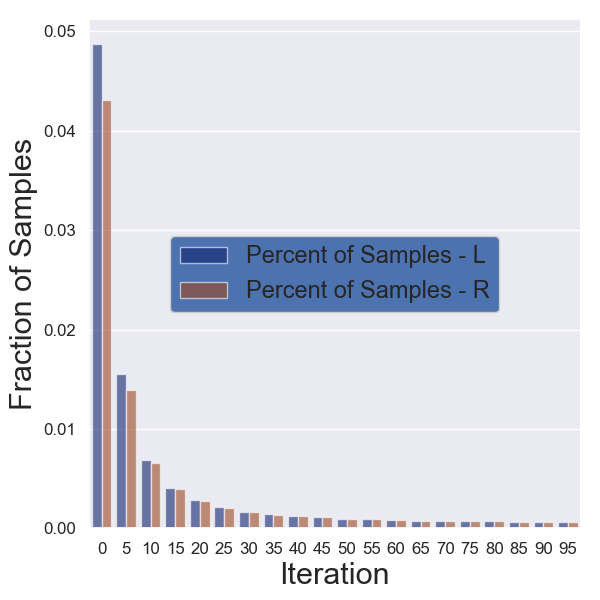}
         \caption{Fraction of samples that are taken deterministically.}
         \label{fig:percent_det_samps}
     \end{subfigure}
     \begin{subfigure}[b]{0.48\textwidth}
         \centering
         \includegraphics[width=\textwidth]{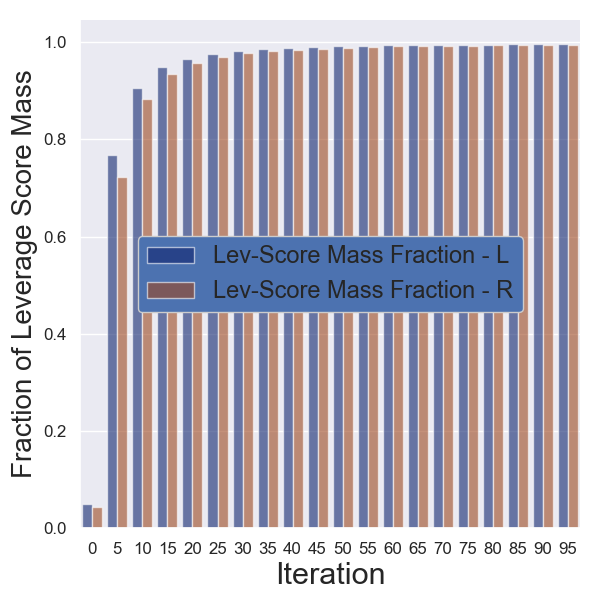}
         \caption{The normalized amount of leverage score `mass' taken deterministically ($\theta/k$) at each iteration.}
         \label{fig:percent_lvs}
     \end{subfigure}
\caption{Two statistics for the hybrid sampling approach in the LvS-HALS algorithm on the OAG data set.}
\end{figure}


\bibliographystyle{siamplain}
\bibliography{references}
\end{document}


\maketitle

\section{A detailed example}

Here we include some equations and theorem-like environments to show
how these are labeled in a supplement and can be referenced from the
main text.
Consider the following equation:
\begin{equation}
  \label{eq:suppa}
  a^2 + b^2 = c^2.
\end{equation}
You can also reference equations such as \cref{eq:matrices,eq:bb} 
from the main article in this supplement.

\lipsum[100-101]

\begin{theorem}
  An example theorem.
\end{theorem}

\lipsum[102]
 
\begin{lemma}
  An example lemma.
\end{lemma}

\lipsum[103-105]

Here is an example citation: \cite{KoMa14}.

\section[Proof of Thm]{Proof of \cref{thm:bigthm}}
\label{sec:proof}
\lipsum[106-112]

\section{Additional experimental results}
\Cref{tab:foo} shows additional
supporting evidence. 

\begin{table}[htbp]
{\footnotesize
  \caption{Example table}  \label{tab:foo}
\begin{center}
  \begin{tabular}{|c|c|c|} \hline
   Species & \bf Mean & \bf Std.~Dev. \\ \hline
    1 & 3.4 & 1.2 \\
    2 & 5.4 & 0.6 \\ \hline
  \end{tabular}
\end{center}
}
\end{table}

\bibliographystyle{siamplain}
\bibliography{references}